\documentclass{article}

    \PassOptionsToPackage{numbers, compress}{natbib}

 \usepackage[preprint]{neurips_2026}

\usepackage{float}
\usepackage{subcaption}
\usepackage{graphicx}
\usepackage{enumitem}
\usepackage[colorinlistoftodos]{todonotes}
\usepackage{algpseudocode}
\usepackage{xcolor}
\usepackage{amssymb}
\usepackage{bm}
\usepackage{bigints}
\usepackage{mathtools}
\usepackage{xhfill}

\usepackage{hyperref}

\usepackage{mathtools}

\usepackage{amsmath, amssymb, amsthm}

\newtheorem{proposition}{Proposition}

\newcommand{\R}{\mathbb{R}}

\newcommand{\NN}{\mathcal{N}}

\renewcommand{\P}{\operatorname{\mathbb{P}}}
\newcommand{\E}{\operatorname{\mathbb{E}}}
\newcommand{\var}{\operatorname{var}}

\newcommand{\Tr}{\text{Tr}}

\newcommand{\e}{e}

\newcommand{\vct}[1]{\boldsymbol{#1}}
\newcommand{\mtx}[1]{\boldsymbol{#1}}

\newcommand{\bmtx}[1]{\mathbf{#1}}

\DeclareMathOperator*{\argmin}{\text{arg~min}}
\DeclareMathOperator*{\argmax}{\text{arg~max}}

\usepackage{xhfill}

\newcommand{\bI}{\bmtx{I}}

\newcommand{\va}{\vct{a}}
\newcommand{\vb}{\vct{b}}

\newcommand{\vm}{\vct{m}}

\newcommand{\vp}{\vct{p}}
\newcommand{\vq}{\vct{q}}

\newcommand{\vu}{\vct{u}}
\newcommand{\vv}{\vct{v}}
\newcommand{\vw}{\vct{w}}
\newcommand{\vx}{\vct{x}}

\newcommand{\vz}{\vct{z}}

\newcommand{\vmu}{\vct{\mu}}

\newcommand{\vzero}{\vct{0}}

\newcommand{\mH}{\mtx{H}}

\newcommand{\mM}{\mtx{M}}

\newcommand{\mSigma}{\mtx{\Sigma}}

\newcommand{\mzero}{{\bf 0}}

\newcommand{\gradp}{\nabla_{\vp}}
\newcommand{\gradq}{\nabla_{\vq}}

\newcommand{\hessp}{\nabla^2_{\vp}}
\newcommand{\hessq}{\nabla^2_{\vq}}

\usepackage[capitalize,noabbrev]{cleveref}

\title{Active Query Synthesis for Preference Learning}

\author{
  Namrata Nadagouda \\
  Georgia Institute of Technology \\
  \And
  Nauman Ahad \\
  Gauss Labs \\
  \AND
  Maegan Tucker \\
  Georgia Institute of Technology \\
  \And
  Mark A. Davenport \\
  Georgia Institute of Technology 
}

\begin{document}
\maketitle

\begin{center}
    \today
\end{center}

\begin{abstract}
Efficient learning of user preferences is crucial for many modern decision making systems but typically requires costly labeled data. Active learning reduces this cost, yet standard methods are computationally expensive due to pool-based evaluation. Further, most methods assume all query feedback is equally reliable, ignoring that pairwise queries between nearly identical or entirely dissimilar items yield ambiguous, low-confidence responses. To address the issue of feedback reliability, we introduce a novel \textbf{confidence aware response model} that explicitly accounts for these ambiguous comparisons. To overcome the computational bottleneck of pool-based evaluation, we propose an active query synthesis framework, \textbf{Info-Synth} that generates optimal queries by maximizing a mutual information-based objective within a continuous space. Moreover, we propose two strategies, \textbf{Pair M-dist} and \textbf{Pair Opt-dist}, that extend \textbf{Info-Synth} to select effective queries even when restricted to finite query pools. We demonstrate our framework's versatility and performance across synthetic preference learning, constrained text summary datasets, and subjective, continuous-space controller gain tuning for a simulated mobile robot.
\end{abstract}

\section{Introduction} \label{sec:intro}
Training machine learning models often requires large labeled datasets that are expensive~\cite{tajbakhsh2021guest, kim2025n} or difficult to obtain~\cite{bellamy2022batched, bi2025comprehensive}. Active learning~\cite{settles2009active} helps mitigate this bottleneck by iteratively selecting the most informative~\cite{biswas2023active, warmuth2001active} samples to label. But the standard pool-based approaches remain computationally expensive because they require large sets of unlabeled candidates. Instead, directly synthesizing informative queries offers a significantly more efficient alternative~\cite{wang2015active}. This approach reduces computational costs and naturally supports optimization over continuous spaces~\cite{sadigh2017active}, potentially improving performance by generating data points that do not currently exist in the dataset.

We present this work through the lens of preference learning~\cite{chu2005preference} which we frame as localizing a user within an item embedding space, using responses to pairwise comparison queries (``Which among A and B do you prefer?”~\cite{david1963method}.) Our goal is to actively select queries to estimate the user's preference with minimal queries. To achieve this, practical querying must balance informativeness with response reliability. As illustrated in Fig.~\ref{fig:query_illustrations}, comparisons between points that are nearly identical (Fig.~\ref{fig:points_too_close}) or entirely dissimilar (Fig.~\ref{fig:points_too_far}) are inherently ambiguous and yield low-confidence responses, in contrast to ideal queries (Fig.~\ref{fig:ideal_query}).  To address this, we introduce a novel \textit{confidence aware response model} that explicitly accounts for this response reliability during query selection.

\begin{figure}[ht]
\vspace{-3mm}
    \hfill
    \begin{minipage}{0.23\linewidth}
        \centering
        \includegraphics[width=\textwidth]{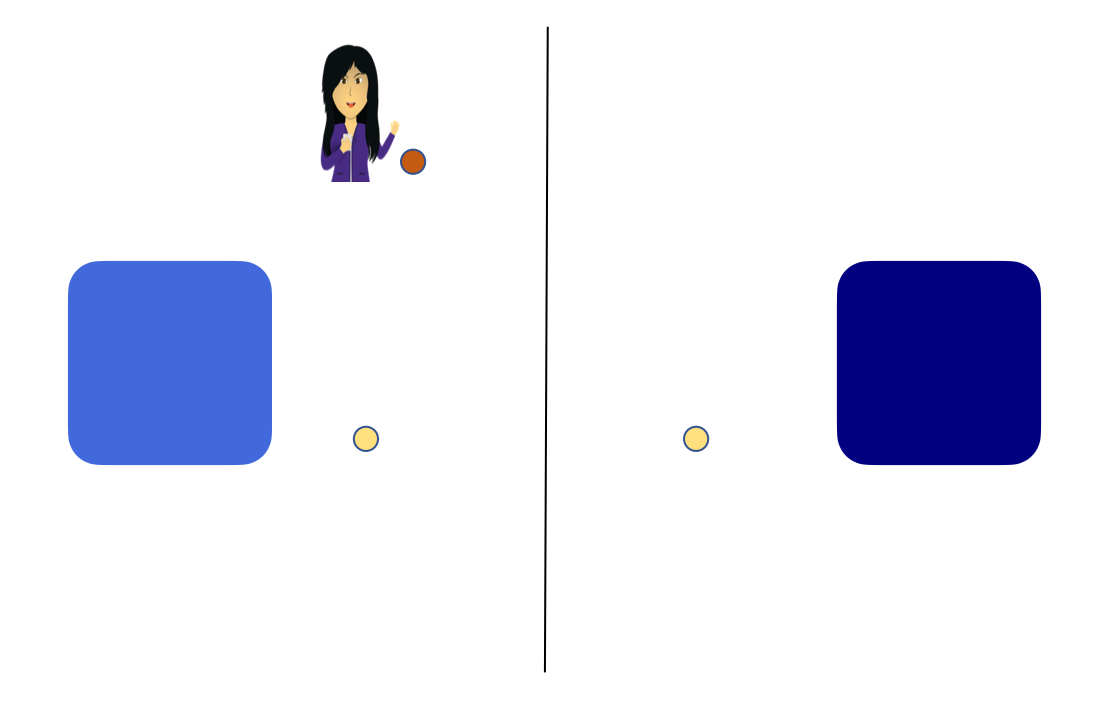}
        \subcaption{\footnotesize Ideal query}
        \label{fig:ideal_query}
    \end{minipage}
    \hfill
    \begin{minipage}{0.23\linewidth}
        \centering
        \includegraphics[width=0.8\textwidth]{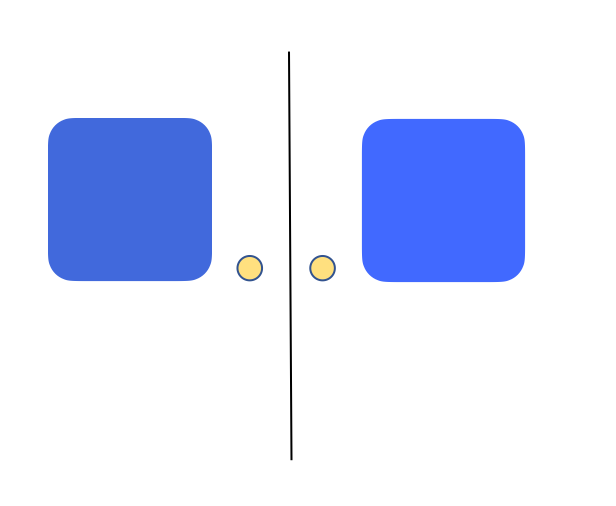}
        \subcaption{\footnotesize Points are too close}
        \label{fig:points_too_close}
    \end{minipage}
    \hfill
    \begin{minipage}{0.33\linewidth}
        \centering
        \includegraphics[width=\textwidth]{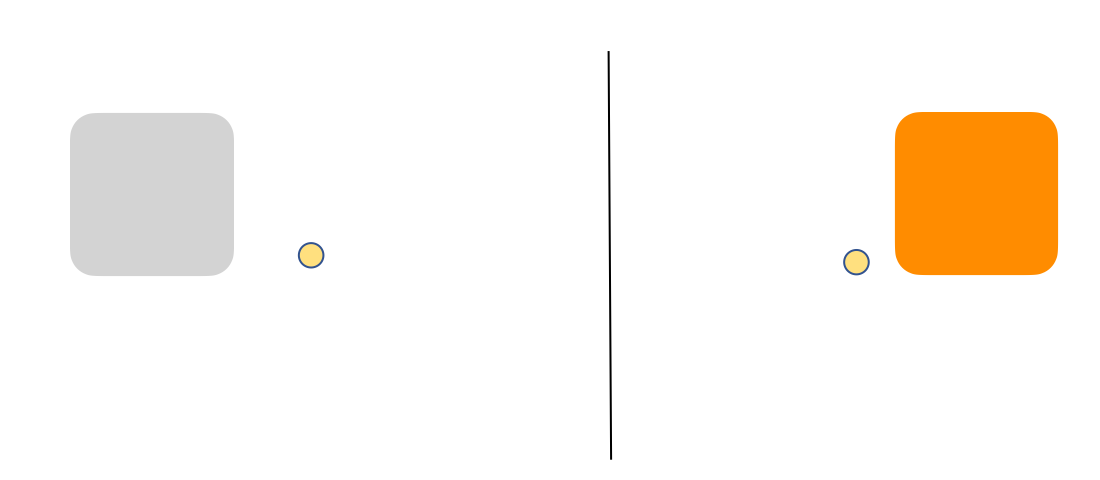}
        \subcaption{\footnotesize Points are too far away}
        \label{fig:points_too_far}
    \end{minipage}
    \hfill
    \vskip 0pt
\caption{\footnotesize Illustrations of pairwise comparison queries based on intra-query distances. (a) An ideal query balances similarity and distinctness, enabling reliable preference selection. Conversely, queries between items that are (b) nearly identical or (c) entirely dissimilar are inherently ambiguous and yield unreliable, low-confidence responses.} 
\label{fig:query_illustrations}
\vspace{-3mm}
\end{figure}

\begin{figure}[ht]
    \centering
    \includegraphics[width=\linewidth]{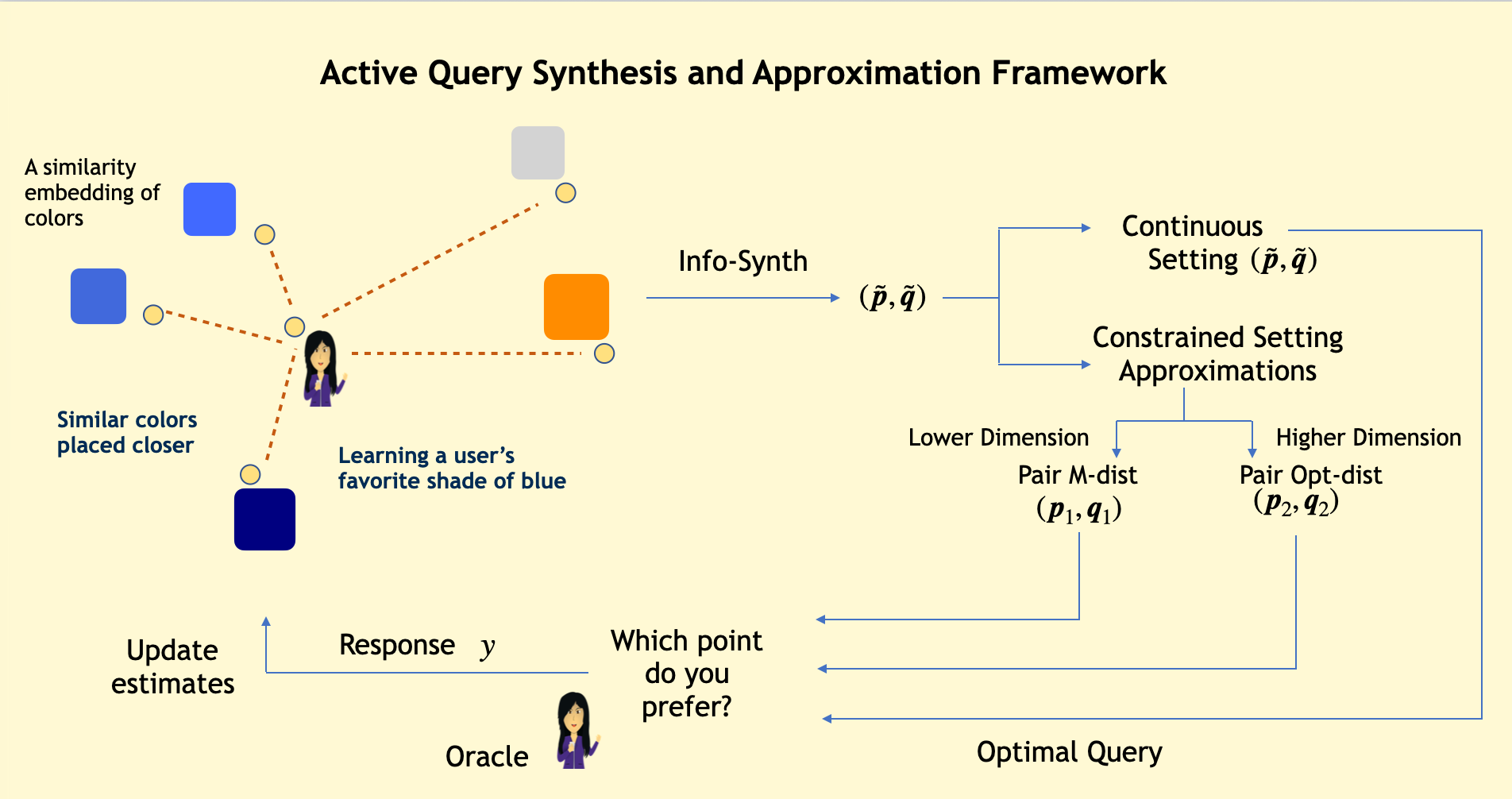}
    \caption{\footnotesize Visualization of the active query synthesis and approximation framework, using a color similarity embedding to estimate a user's preferred shade of blue. \textit{Info-Synth} first generates an optimal continuous query $(\tilde{\vp}, \tilde{\vq})$. In the continuous setting, this query is used directly. In the constrained setting, it is approximated for a fixed dataset using either \textit{Pair M-dist} $(\vp_1, \vq_1)$ or \textit{Pair Opt-dist} $(\vp_2, \vq_2)$, depending on dimensionality. The oracle's response ($y$) to the selected query is then used to update the user preference estimates.}
    \label{fig:framework_fig}
    \vspace{-1\baselineskip}
\end{figure}

We develop an active query synthesis method, \textit{Info-Synth} for the proposed response model. It generates informative queries within a continuous space by maximizing the mutual information between the underlying preference parameters and query responses. As illustrated in Fig.~\ref{fig:framework_fig}, when we can obtain a response directly to the synthesized query $(\tilde{\vp}, \tilde{\vq})$, we refer to it as the \textit{continuous setting}. However, when restricted to a fixed dataset (the \textit{constrained setting} branch), plainly mapping this continuous query to the nearest available points fails to account for anisotropic geometry and uncertainty. To address this, we propose two dataset-specific approximation strategies to find allowable pairs: \textit{Pair M-dist} $(\vp_1, \vq_1)$, which uses a \textit{Mahalanobis} distance metric for lower-dimensional datasets, and \textit{Pair Opt-dist} $(\vp_2, \vq_2)$, which uses an optimality criteria based heuristic for higher-dimensional ones. 

Beyond preference learning, parameter tuning such as adjusting controller gains for a robot, is a distinct and equally critical challenge. In cyber-physical systems, \emph{optimal} performance lacks a rigid objective function and relies heavily on an engineer's subjective intuition to resolve qualitative trade-offs such as deciding whether a trajectory is ‘too jittery’ or
if a robot ‘cuts corners’ too aggressively. Additionally, evaluation of candidates via simulations can also be expensive~\cite{bergstra2012random}. To address these issues, active preference learning has emerged as a potential tool for systematic gain tuning using human feedback~\cite{csomay2022learning}, translating human expertise into a structured parameter search without requiring a predefined reward function. Furthermore, gain tuning is also an ideal application for our continuous query space. Since a robot's behavioral sensitivity to parameter changes is highly nonlinear, sampling directly from a continuous space allows our method to dynamically identify the most informative and perceptible comparisons.

To empirically validate our framework, we evaluate our methods across three distinct settings: (i) preference learning on synthetic data in both continuous and constrained settings (ii) constrained preference learning on a text summary dataset restricted to allowable pairs, and (iii) continuous space controller gain tuning for a simulated mobile robot. To summarize, below are the contributions of this work: 
\begin{enumerate}
    \item A novel \textit{confidence aware response model} for pairwise comparisons that encourages the selection of reliable, high-confidence queries (Sec.~\ref{sec:modified_response_model})
    \item \textit{Info-Synth}, an active query synthesis framework designed for this response model to generate optimal continuous queries (Sec.~\ref{sec:query_synthesis})
    \item \textit{Pair M-dist}, a Mahalanobis distance-based method to approximate synthesized queries within lower-dimensional datasets (Sec.~\ref{sec:query_approximation})
    \item \textit{Pair Opt-dist}, an optimality-driven heuristic for query approximation in higher-dimensional datasets (Sec.~\ref{sec:query_approximation})
    \item Empirical demonstrations of our framework's performance on synthetic data, real-world preference learning, and practical robotic gain tuning tasks (Sec. ~\ref{sec:experiments})
\end{enumerate}

\section{Background} \label{sec:background}
We wish to estimate the preferences of a user $\vw \in \R^d$ over items that may be drawn from a continuous subspace of $\mathbb{R}^d$ or restricted to a fixed dataset $D \subset \mathbb{R}^d$. We use the \textit{ideal point model} \cite{coombs1950psychological} to model the user's preferences and wish to estimate a point $\hat{\vw}$ in $\R^d$ that best explains the preferences. We seek to find a solution by actively asking pairwise comparison queries and updating our estimate based on the responses obtained. We use a Bayesian framework to estimate the user (details in Sec.~\ref{sec:supp-background}). 

Suppose that the response to a query, $Y$ is a binary random variable where $Y = 1$ denotes item $\vp$ is preferred to $\vq$. A popular model based on the \textit{Bradley-Terry} model \cite{bradley1952rank}, used to account for the noisy responses is
\begin{equation*}
    P(Y = 1 | W = \vw) = \Phi \left(k( \|\vw - \vq\|_2^2 - \|\vw - \vp\|_2^2) \right)
\end{equation*}
which can also be written as
\begin{equation*}
    P(Y | \vw) = \Phi \left( k( \va^\top \vw - \tau) \right) = \Phi \left( k \|\va\|( \hat{\va}^\top \vw - \hat{\tau}) \right) \\
\end{equation*}
where \\
$\va = 2(\vp - \vq)$ and $\tau = \|\vp\|^2 - \|\vq\|^2$ represent the normal vector and threshold of a hyperplane $h$ bisecting items $\vp$ and $\vq$ with $h = (\va, \tau): \va^\top \vm - \tau = 0$ where $\vm \in \R^d$,  \\
$\|\va\| = 2\|\vp - \vq\|$, $\hat{\va} = \va / \|\va\|$, $\hat{\tau} = \tau / \|\va\|$, \\
$k$ represents the noise parameter, and \\
$\Phi(x) = 1/(1 + \e^{-x})$ is the logistic function.

A common setting for the noise parameter is using a constant value with $k = k_0$ which results in the model favoring queries that are far from each other. Other settings include a normalized noise with $k = k_0 \|\va\|^{-1}$ where the response probabilities depend only on the hyperplane and not the query distances, a decaying noise with $k = k_0 \exp (-\|\va\|)$ \cite{canal2019active}, and a normalized noise with probit response model \cite{chumbalov2020scalable}, highlighting different assumptions about how user uncertainty depends on query geometry. In practice, as illustrated in Fig.~\ref{fig:query_illustrations} humans typically are most confident in answering pairwise comparisons when the points are located at an ideal distance from each other. When the points are located too close to each other, it might be hard for the user to select one of them and the response can be ambiguous. When the points are too far from each other, they might lose a common ground of comparison and again result in an unreliable response.

\subsection{Other related work}
Related efforts have also explored adding additional feedback modalities beyond binary comparisons, such as coactive feedback, which allows users to iteratively improve proposed solutions \cite{tucker2020preference} and ordinal labels where users assign items to ordered preference categories (e.g., poor to excellent), yielding richer information about relative quality than binary comparisons alone \cite{li2021roial}. These additional feedback modalities have been shown to accelerate preference learning under structured noise assumptions but impose a higher cognitive load for the human providing the feedback. 
The method proposed in \cite{biyik2019asking} employs a specific information gain criterion that balances the uncertainty of the query and the human ability to provide a response to encourage selection of high confidence queries. In this work, we instead propose a modified response probability model that not only doesn't increase the cognitive load for responding but is also amenable to any active learning algorithm.

The application of preference based adaptive learning for tuning parameters is commonly used when it is difficult to obtain a quantitative objective function value for the desired performance. Many works have demonstrated that Bayesian optimization can efficiently learn system parameters using only human preference feedback over presented alternatives~\cite{brochu2010bayesian, gonzalez2017preferential}. In robotics, similar ideas have been applied to learn reward functions~\cite{sadigh2017active, biyik2018batch} based on the responses obtained to pairwise comparisons of trajectories using an active query synthesis approach. Bayesian Optimization has also been applied for controller tuning by adapting the framework to handle human preference feedback~\cite{coutinho2024human, csomay2022learning}, enforce safety constraints~\cite{berkenkamp2016safe}, or optimize efficiency through multi-fidelity data sources~\cite{marco2017virtual}.

\section{Modified Response Model} \label{sec:modified_response_model}
We propose a novel \textit{confidence aware response model} to encourage selection of queries with high confidence responses.  To achieve this, we model the noise variance as a function of the query distances which is consistent with frameworks explored in heteroskedastic regression \cite{chaudhuri2017active, das2023near}. Specifically, we assume:
\begin{equation*}
    d_1 = \|\vw - \vp\|_2^2 + e_1 \quad \text{and} \quad d_2 = \|\vw - \vq\|_2^2 + e_2
\end{equation*}
where $e_1$ and $e_2$ are noises with standard deviations $\sigma_0 \|\vw - \vp\|_2^2$ and $\sigma_0 \|\vw - \vq\|_2^2$  respectively.  \\
\\
The resulting probability model is as below:
\begin{equation*} \label{eq:conf_aware_resp_model}
    P(Y | \vw) = \Phi \left( \frac{\quad \|\vw - \vq\|_2^2 - \|\vw - \vp\|_2^2}{ \sigma_0 \sqrt{ \|\vw - \vq\|_2^4 +  \|\vw - \vp\|_2^4}} \right) = \Phi \left( f(\vw) \right)
\end{equation*}
\begin{equation} \label{eq:func_f_def}
    \text{with} \quad f(\vw) = \frac{\quad \|\vw - \vq\|_2^2 - \|\vw - \vp\|_2^2}{ \sigma_0\sqrt{ \|\vw - \vq\|_2^4 +  \|\vw - \vp\|_2^4}}.
\end{equation}
It can also be written as
\begin{equation}\label{eq:prob_model}
    P(Y| \vw) = \Phi \left( \frac{\quad \va^\top (\vw - \vb)}{ \sigma_0 \sqrt{2 \left( \|\vw - \vb\|_2^2 + \frac{\|\va\|_2^2}{16} \right)^2 + \frac{1}{2} \left( \va^\top (\vw - \vb) \right)^2}} \right)
\end{equation}
$\va$ represents the normal vector of the separating hyperplane as before, \\
$\vb = \frac{\vp + \vq}{2}$ represents the midpoint of $\vp$ and $\vq$, 
and \\
$\Phi (x)$ is the CDF of a symmetric noise distribution (i.e., $\Phi(-x) = 1 - \Phi(x)$ and $\Phi'(-x) = \Phi'(x)$). 

This modified response model, as explained in Sec.~\ref{sec:query_synthesis}, enables us to select queries with higher confidence responses which are typically pairs of points that are at an ideal distance from each other, which is neither too close nor too far.

\section{Query Synthesis} \label{sec:query_synthesis}
We wish to select queries that result in maximal improvement of our estimate of the user point $W$. The information gain of a query is evaluated by computing the Mutual Information (MI) between the user point and the response to the query. This can be written as:
\begin{equation*}
    I(W ; Y | (\vp,\vq)) = H(W) - H(W | Y, (\vp, \vq)).
\end{equation*}
Using the symmetry of mutual information, we can write
\begin{equation}\label{eq:infogain_pair}
    I(W ; Y | (\vp,\vq)) = H(Y | (\vp,\vq)) - H(Y | W, (\vp,\vq)). 
\end{equation}
Since evaluating all candidate queries is computationally expensive, we instead directly optimize for the most informative query $(\tilde{\vp}, \tilde{\vq})$. Further, since every pairwise comparison query can also be characterized by its corresponding parameters as derived in \eqref{eq:prob_model}, we compute the optimal parameters $(\tilde{\va}, \tilde{\vb})$ that maximize the information gain. 

\subsection*{Reparametrizing the MI expression}
Re-writing \eqref{eq:infogain_pair} by conditioning on the hyperplane normal vector and query midpoint instead, we get
\begin{equation}\label{eq:infogain_hyperplane}
    I(W ; Y | (\va, \vb)) = H(Y | (\va, \vb)) - H(Y | W, (\va, \vb)).
\end{equation}

We now optimize the information gain objective in \eqref{eq:infogain_hyperplane} by separately analyzing the two terms. The first term $H(Y | (\va, \vb))$ is the marginal entropy of the binary response $Y$. Since binary entropy is maximized when the two outcomes are equiprobable, this term is maximized when the outcomes are equiprobable. Under a Gaussian posterior over W, this condition induces a symmetry constraint that places the query midpoint $\vb$ at the posterior mean, $\vmu$. The second term $\underset{W}{\E} [ H(Y | W, (\va, \vb)) ]$ measures the expected conditional entropy of $Y$ given $W$. It is minimized when the conditional probability $P(Y|\vw)] = \Phi(f(\vw))$ is as certain as possible. This leads to an optimization over the hyperplane normal vector $\va$, which results in $\va$ aligning with the principal eigenvector of the posterior covariance $\mSigma$. Below, we provide a detailed derivation of these results.

\subsection{Optimizing the first term}

The first term in \eqref{eq:infogain_hyperplane} can be written as
\begin{equation*}
    H(Y | (\va, \vb)) = H \left( \underset{W}{\E} \left[ P(Y = 1 | (\va, \vb), W) \right] \right) = H (\E[\Phi(f(W))]).
\end{equation*}
We know that entropy for a binary random variable is maximum when both the outcomes are equi-probable, i.e. when 
\begin{equation*}
    \E [P(Y = 1)] = \E[\Phi(f(W))] = 0.5.
\end{equation*}
Assuming $W \sim \NN(\vmu, \mSigma)$, the equi-probability criterion results (more details in Sec.~\ref{sec:supp-query_details}) in 
\begin{equation}\label{eqn:term_I_cond}
    \vmu = \frac{\tilde{\vp} + \tilde{\vq}}{2} = \tilde{\vb}.
\end{equation}
Under this condition, the expected probability is exactly $0.5$.
\begin{enumerate}
    \item Antisymmetry

    Let $W = \vmu + \Delta$ where $\Delta \sim \NN(\vzero, \mSigma)$. Substituting this into $f(W)$ yields 
    \[f(\vmu + \Delta) = \frac{\va^\top \Delta}{\sigma_0 \sqrt{2 \left( \|\Delta\|_2^2 + \frac{\|\va\|_2^2}{16} \right)^2 + \frac{1}{2} \left( \va^\top \Delta \right)^2}}.\]
    Since the numerator is linear (odd) and the denominator is quadratic (even) in $\Delta$, $f(W)$ is odd about $\vmu$, i.e.
    \begin{equation*}
        f(\vmu - \Delta) = -f(\vmu + \Delta).
    \end{equation*}
    
    \item Distributional Symmetry

    Since the PDF $p(\Delta)$ is symmetric and $f$ is odd, the random variable $Y = f(W)$ is symmetrically distributed around zero. This implies $p(x) = p(-x)$ and $\E[f(W)] = 0$.

    \item Expected Probability

    Any link function  satisfying $\Phi(-x) = 1 - \Phi(x)$ results in
    \begin{equation*}
        \E[\Phi(f)] = \int_{0}^{\infty} [\Phi(f) + \Phi(-f)] p(f) \, df = \int_{0}^{\infty} (1) p(f) \, df = 0.5.
    \end{equation*}
     
\end{enumerate}

\subsection{Optimizing the second term}

The optimal value of $\va$, $\tilde{\va}$ can be computed as the minimizer of the second term in equation \eqref{eq:infogain_hyperplane} illustrated as below:
\begin{equation*}
    \tilde{\va} = \underset{\va}{\argmin} \quad H(Y | W, (\va, \vb)) = \underset{\va}{\argmin} \quad \underset{W}{\E} [ H(Y | W, (\va, \vb)) ] = \underset{\va}{\argmin} \quad \underset{W}{\E} [H(\Phi (f(W)))].
\end{equation*}
The above entropy is minimized when $\Phi (f(W))$ is close to either $0$ or $1$. This corresponds to maximizing $|f(W)|$ and since $W \sim \NN(\vmu, \mSigma)$, we maximize $\E[|f(W)|]$. We have
\[\tilde{\va} = \underset{\va}{\argmax~} \E [|f(W)|] = \underset{\va}{\argmax~} \E \left[ \frac{|\va^\top (W - \vb)|}{\sigma_0 \sqrt{2 \left( \|W - \vb\|_2^2 + \frac{\|\va\|_2^2}{16} \right)^2 + \frac{1}{2} \left( \va^\top (W - \vb) \right)^2}} \right].
\]
From \eqref{eqn:term_I_cond}, we have $\tilde{\vb} = \vmu$
which results in 
\begin{equation*}
    \tilde{\va} = \underset{\va}{\argmax~} \E_{W \sim \mathcal{N}(\vmu,\mSigma)} \left[ \frac{|\va^\top (W - \vmu)|}{\sigma_0 \sqrt{2 \left( \|W - \vmu\|_2^2 + \frac{\|\va\|_2^2}{16} \right)^2 + \frac{1}{2} \left( \va^\top (W - \vmu) \right)^2}} \right].
\end{equation*}
Let $\vp = \vmu + \tilde{\vx}$ and $\tilde{\vq} = \vmu - \tilde{\vx}$. Further, let $S = W - \vmu \sim \NN(0, \mSigma)$. We now have
\begin{equation*}
    \tilde{\vx} = \underset{\vx}{\argmax~} \E_{S \sim \mathcal{N}(\vzero,\mSigma)} \left[ \frac{4|\vx^\top S|}{\sigma_0 \sqrt{2 \left( \|S\|_2^2 + \|\vx\|_2^2 \right)^2 + 2 \left( 2\vx^\top S \right)^2}} \right] = \underset{\vx}{\argmax~} F(\vx).
\end{equation*}
Setting $\vx = r \vu$, we obtain
\begin{equation}\label{eqn:f_obj}
    \tilde{r}, \tilde{\vu} = \underset{r, \vv}{\argmax~} \E \left[ \frac{4 r|\vu^\top S|}{\sigma_0 \sqrt{2 \left( \|S\|_2^2 + r^2 \right)^2 + 8 r^2 \left( \vu^\top S \right)^2}} \right].
\end{equation}
The ratio is maximized for $\vu = \vv_1$ where $\vv_1$ is the principal eigenvector of $\mSigma$. Optimizing \eqref{eqn:f_obj} for $\tilde{r}$ results in it being independent of $\sigma_0$. However, in practice we find that the optimal magnitude does depend on $\sigma_0$. Hence, the optimal magnitude is obtained by one-dimensional optimization as below: 
\begin{equation*}
    \tilde{r} = \underset{r}{\argmin} ~\underset{S}{\E} \left[ H\left( \Phi \left( \frac{4 r ~\vv_1^\top S}
    {\sigma_0 \sqrt{2\left(\|S\|_2^2 + r^2\right)^2 + 8r^2 \left(\vv_1^\top S\right)^2}} \right) \right) \right]
\end{equation*}
with scope for varying with $\sigma_0$, since $\sigma_0$ enters nonlinearly inside the link function - entropy composition. The resulting normal vector of the optimal hyperplane is $\tilde{\va} = 4 \tilde{r} \vv_1$. Full derivation for this method, referred to as \textit{Info-Synth}, is in Sec.~\ref{sec:opt_second_term}. The cost of this synthesis is $O(S^*d^2)$ since it involves the estimation of the covariance matrix, $\mSigma$ using $S$ Monte Carlo (MC) samples.\\ 
\\
Intuitively, in \eqref{eqn:f_obj} choosing a larger value of $r$ can make both the numerator and denominator larger. Thus, we want to select a moderate value of $r$. This moderate value of the magnitude is what encourages the selection of pairs of points that are at an ideal distance, neither too close nor too far from each other, thus resulting in selection of higher response-confidence queries.

\section{Query Approximation} \label{sec:query_approximation}
If we have the flexibility to query any pair of points, we can query the synthesized pair of points directly. Else, we need to approximate them with the best points in the dataset. Most works \cite{chumbalov2020scalable, wang2015active} on synthesizing active queries select the nearest neighbors in the dataset as the closest approximations. This works well for a single point but for a pair of points, the resulting geometry could bear no association with the synthesized optimal pair. We propose two new methods for selecting the best approximate points that can be applied to datasets depending on their dimensionality. Both employ a filter and refine strategy, where the queries are first filtered for informativeness based on approximate criteria and then refined using MI evaluations.

\subsection{Pair M-dist}

The first method is a Mahalanobis distance based framework. Euclidean distance based methods rely on distances in the data space and do not explicitly account for the local geometry of the mutual information objective. Consequently, points that are close in Euclidean distance may incur very different losses in informativeness. To address this, we introduce a Mahalanobis distance based selection that uses a Taylor series approximation of the mutual information. Here, the negative Hessian induces a geometry-aware metric for selecting candidate points with minimal loss relative to the optimal query. \\
\\
Let $I(\vz) := I(\vp,\vq) := I(Y; W | \vp,\vq)$ with $\vz^*$ being the maximizer. \\
\\
The Taylor series expansion of $I(\vz)$ around $\vz^*$ is:
\begin{equation*}
    I(\vz) \approx I(\vz^*) + \nabla I(\vz^*)^\top (\vz - \vz^*) + \frac{1}{2} (\vz - \vz^*)^\top \nabla_{\vz}^2 I(\vz^*) (\vz - \vz^*).
\end{equation*}
Since $\vz^*$ is the maximizer, the gradient $\nabla I(\vz^*) = 0$. Thus, the approximation simplifies to:
\begin{equation} \label{eq:taylor_exp}
    I(\vz) \approx I(\vz^*) + \frac{1}{2} (\vz - \vz^*)^\top \mH(\vz^*) (\vz - \vz^*)
\end{equation}
where $\mH(\vz^*) = \nabla_{\vz}^2 I(\vz^*)$ is the Hessian matrix. \\
\\
Consider the difference:
\begin{equation*}
    I(\vz^*) - I(\vz) = \frac{1}{2} (\vz^* - \vz)^\top (-\mH(\vz^*)) (\vz^* - \vz)
\end{equation*}
which can also be written as
\begin{equation*}
    \Delta I(\vz) = \frac{1}{2} (\Delta \vz)^\top (-\mH(\vz^*)) (\Delta \vz) = \frac{1}{2} \|\Delta \vz\|_{\mM}^2.
\end{equation*}
where $\Delta I(\vz) = I(\vz^*) - I(\vz)$, $\Delta \vz = \vz^* - \vz$ and $\mM = -\mH(\vz^*)$. \\
\\
The detailed derivations for the Hessian can be found in the appendix (Sec.~\ref{sec:supp-mahalanobis_distance}).\\
\\
First, we find the nearest $\alpha$ fraction of pairs to the optimal pair of points by computing the Mahalanobis distances and then find the most informative pair among them via MI evaluations for these pairs.  The computational cost of this method is $O((S + P)^*d^2 + \alpha^*P^*S^*d)$ where $P$ is the total number of pairs and $d$ is the dimension of the dataset. $O(S^*d^2)$ represent the cost of optimal query synthesis and computing the Hessian using Monte Carlo estimation. $O(P^*d^2)$ represents the cost of computing the Mahalanobis distances while $O(\alpha^*P^*S^*d)$ is the cost of MI evaluation for $\alpha^*P$ pairs at $O(S^*d)$ per pair.
 
\subsection{Pair Opt-dist}

Since computing the hessian is very expensive in higher dimensions, we use a more computationally friendly heuristic for the filtering in higher-dimensional datasets. This heuristic (more details in Sec.~\ref{sec:supp-pair_opt_dist_heuristic}) which measures deviation from the optimality criteria of $\tilde{\vb} = \vmu$ and $\tilde{\va} = 4\tilde{r} \vv_1$, is given by
\begin{equation}
    \eta(\vp,\vq,\lambda) = \sum_{i=1}^{d} \frac{(\vb_i - \vmu_i)^2}{\mSigma_{ii}} + \lambda \|(\vp - \vq) - 2 \tilde{r} \vv_1 \|_2^2.
\end{equation}
We refer to this method as \textit{Pair Opt-Dist}, where the queries are filtered by selecting pairs that minimize $\eta(\vp,\vq,\lambda)$ and then refined via MI evaluation on a $\gamma$ fraction of the queries. The total cost of this method is $O(S^*d^2 + P^*d + \gamma^*P^*S^*d)$.

\section{Experiments} \label{sec:experiments}

\subsection{Preference Learning on Synthetic Datasets} \label{sec:synth_data_exp}
We run experiments on synthetic datasets (details in Sec.~\ref{sec:supp-synth_data_exp}) for both continuous and constrained settings. We compare the performance of our methods to multiple baselines. \textit{Gauss Search} - \textit{Synthesis} and \textit{Discrete} - refer to the active query synthesis method in \cite{chumbalov2020scalable} and its discrete approximation respectively. \textit{Active Discrete} refers to the pool based method of evaluating candidate queries for optimality based on mutual information. \textit{Random} - \textit{Synthesis} and \textit{Discrete} - refer to the methods where a pair of points are sampled at random over the continuous space and selected at random from the candidate pairs respectively. (Method names inspired by \cite{sadigh2017active}.) The nearest point approximation method is referred to as \textit{NN Approx} and the $k$-NN based method as \textit{k-NN Approx} (details in Sec.~\ref{sec:supp-synth_data_exp})).

\textit{Active Discrete} is the most computationally expensive method with a cost of $O(P^*S^*d)$ for $P$ queries while \textit{k-NN Approx} has a cost of $O(N^*d + \beta^*P^*S^*d)$ with cost $O(N^*d)$ to compute the nearest neighbors and $O(\beta^*P^*S^*d)$ for MI evaluation of $\beta$ fraction of the total pairs.

\begin{figure*}[t]
    \hfill
    \begin{minipage}{0.24\linewidth}
        \centering
        \includegraphics[width=\textwidth]{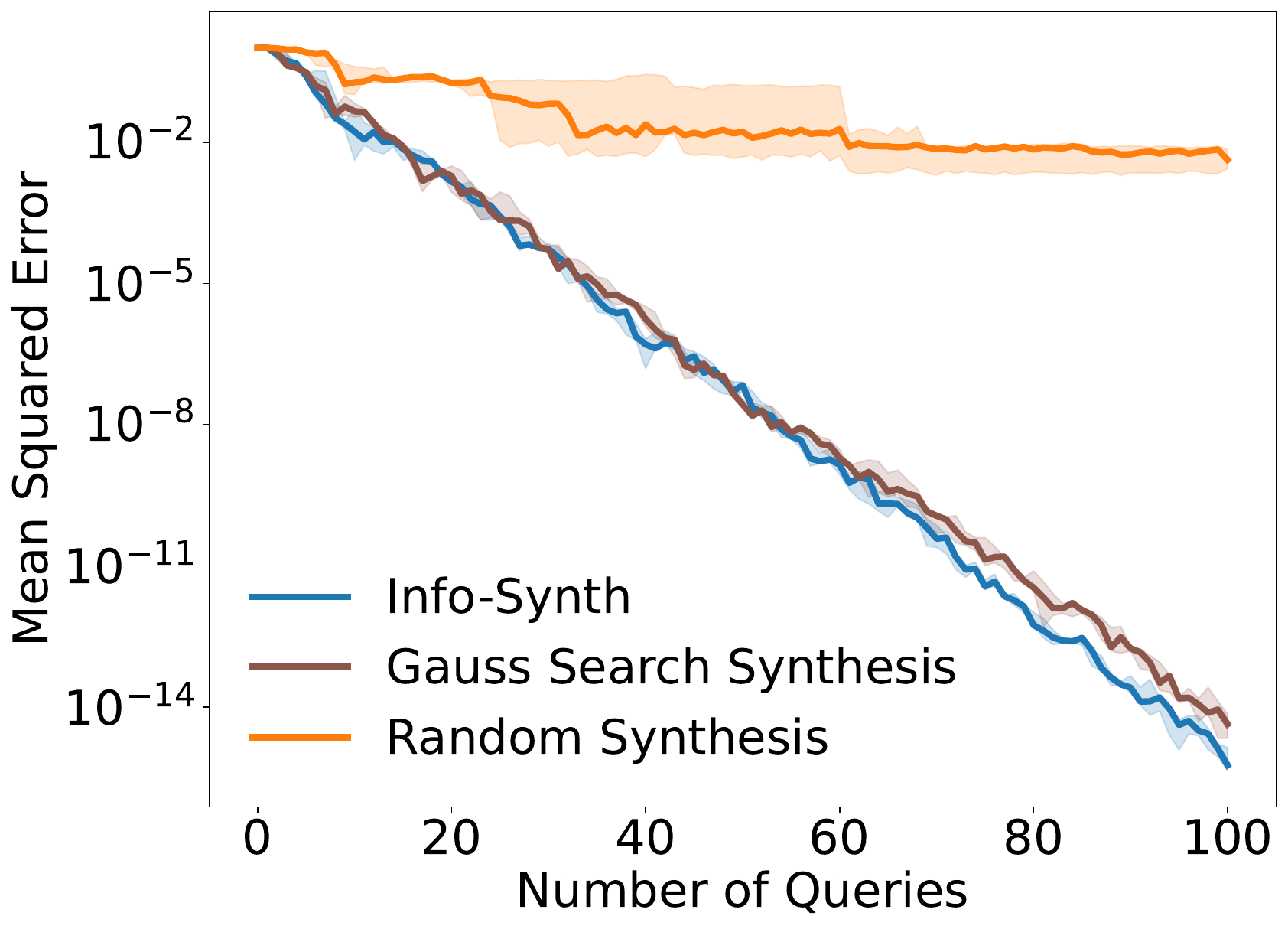}
        \subcaption{\footnotesize $\sigma_0 = 0.001$}
    \end{minipage}
    \hfill
    \begin{minipage}{0.24\linewidth}
        \centering
        \includegraphics[width=\textwidth]{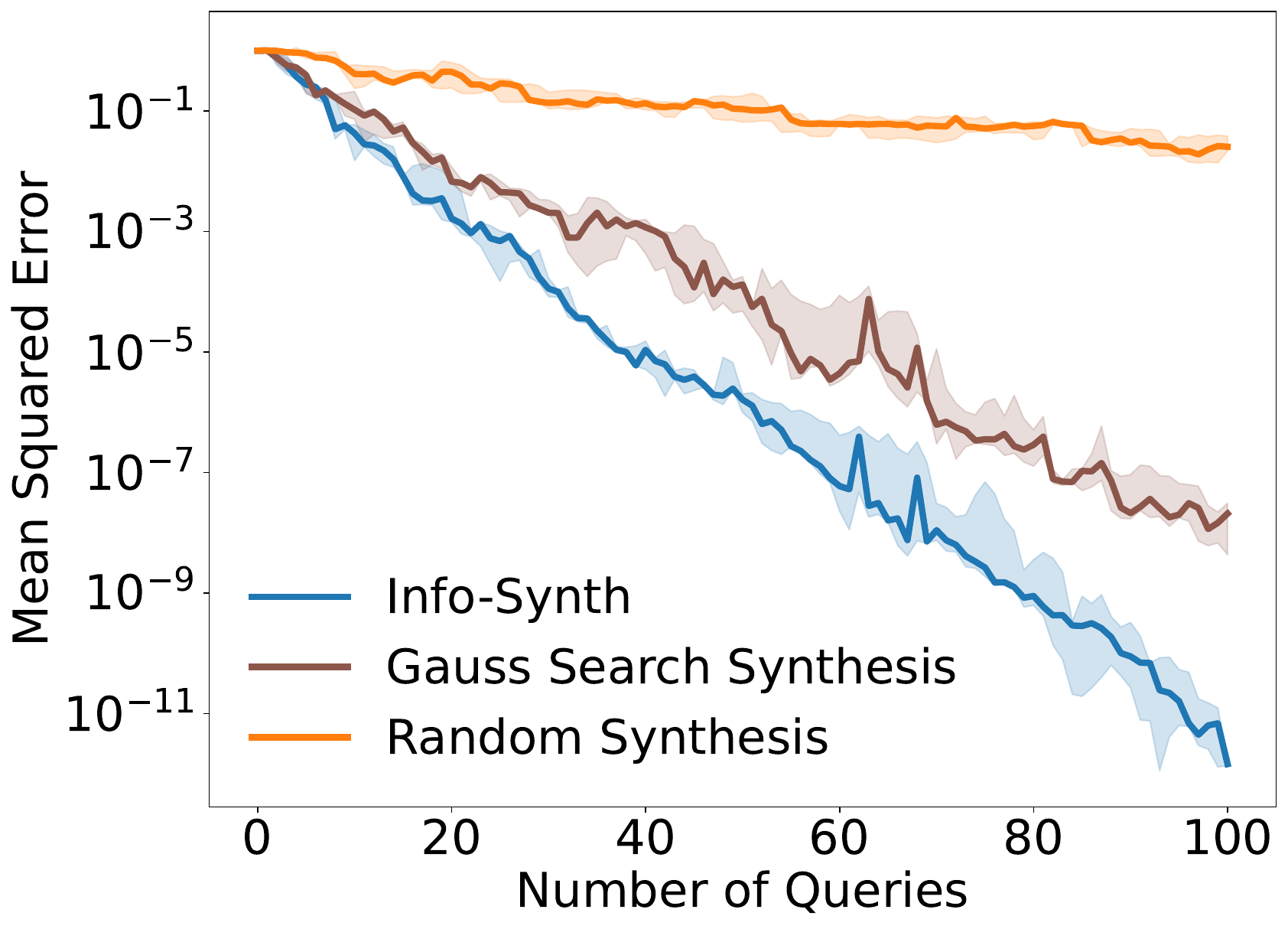}
        \subcaption{\footnotesize $\sigma_0 = 0.1$}
    \end{minipage}
    \hfill
    \begin{minipage}{0.24\linewidth}
        \centering
        \includegraphics[width=\textwidth]{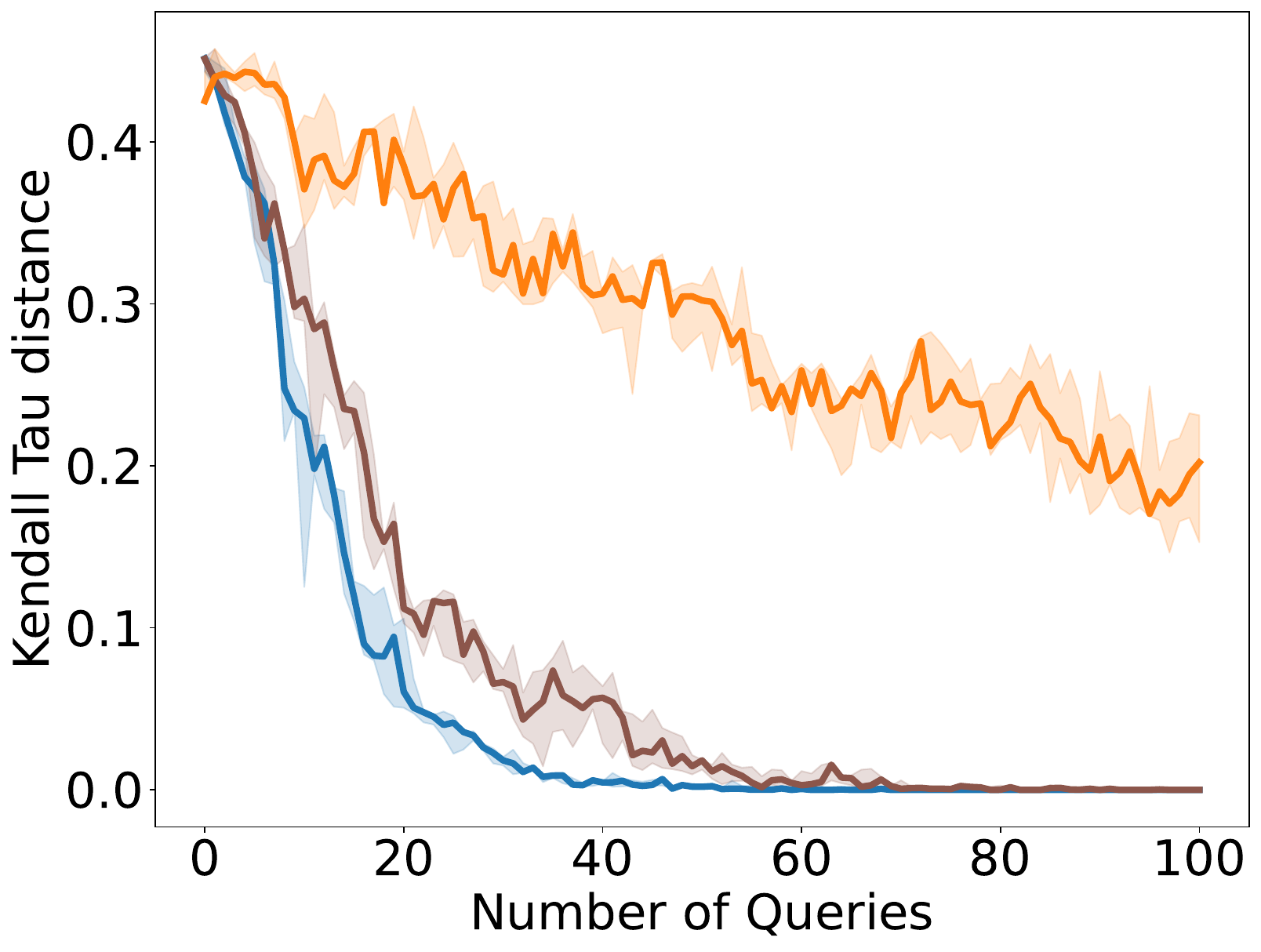}
        \subcaption{\footnotesize $\sigma_0 = 0.1$}
    \end{minipage}
    \hfill
    \begin{minipage}{0.24\linewidth}
        \centering
        \includegraphics[width=\textwidth]{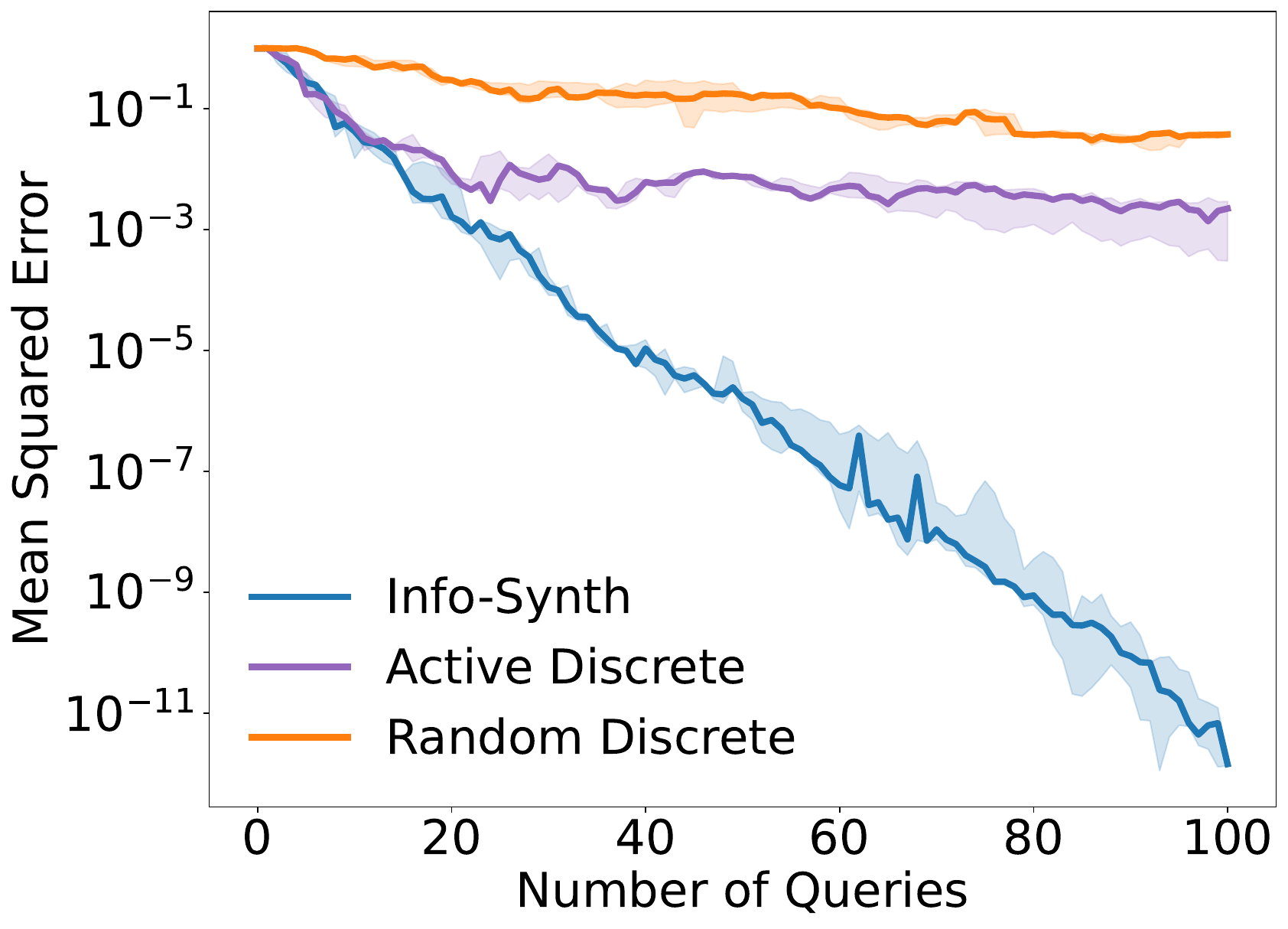}
        \subcaption{\footnotesize $\sigma_0 = 0.1$}
    \end{minipage}
    \hfill
\caption{\footnotesize Query synthesis performance comparison between different AL methods and Random on synthetic datasets. The plots correspond to datasets in $4$D comparing different synthesis methods in ((a), (b), (c)) and with $500$ points comparing synthesis with discrete methods in (d). In the MSE plots, the $y$-axis corresponds to the MSE between the true point and the estimated point. In the Kendall Tau distance plots, the $y$-axis corresponds to the KT distance computed between the estimated \textit{user-item} distances against the ground truth \textit{user-item} distances. For both metrics, lower values indicate a better performance.}
\label{fig:syn_data_expt_1}
\vspace{-1\baselineskip}
\end{figure*}

\begin{figure*}[t]
    \hfill
    \begin{minipage}{0.31\linewidth}
        \centering
        \includegraphics[width=0.8\textwidth]{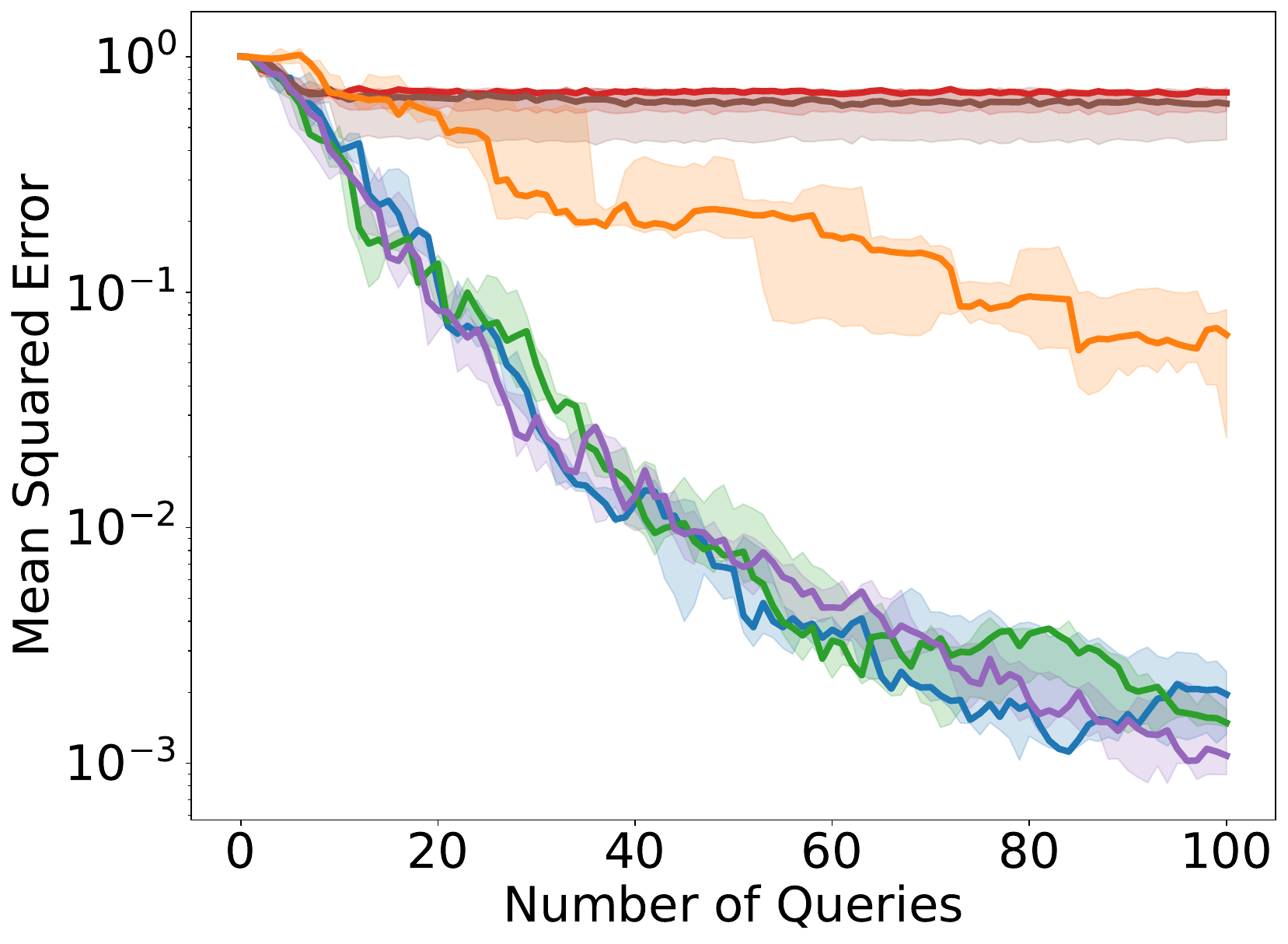}
    \end{minipage}
    \hfill
    \begin{minipage}{0.31\linewidth}
        \centering
        \includegraphics[width=1.1\textwidth]{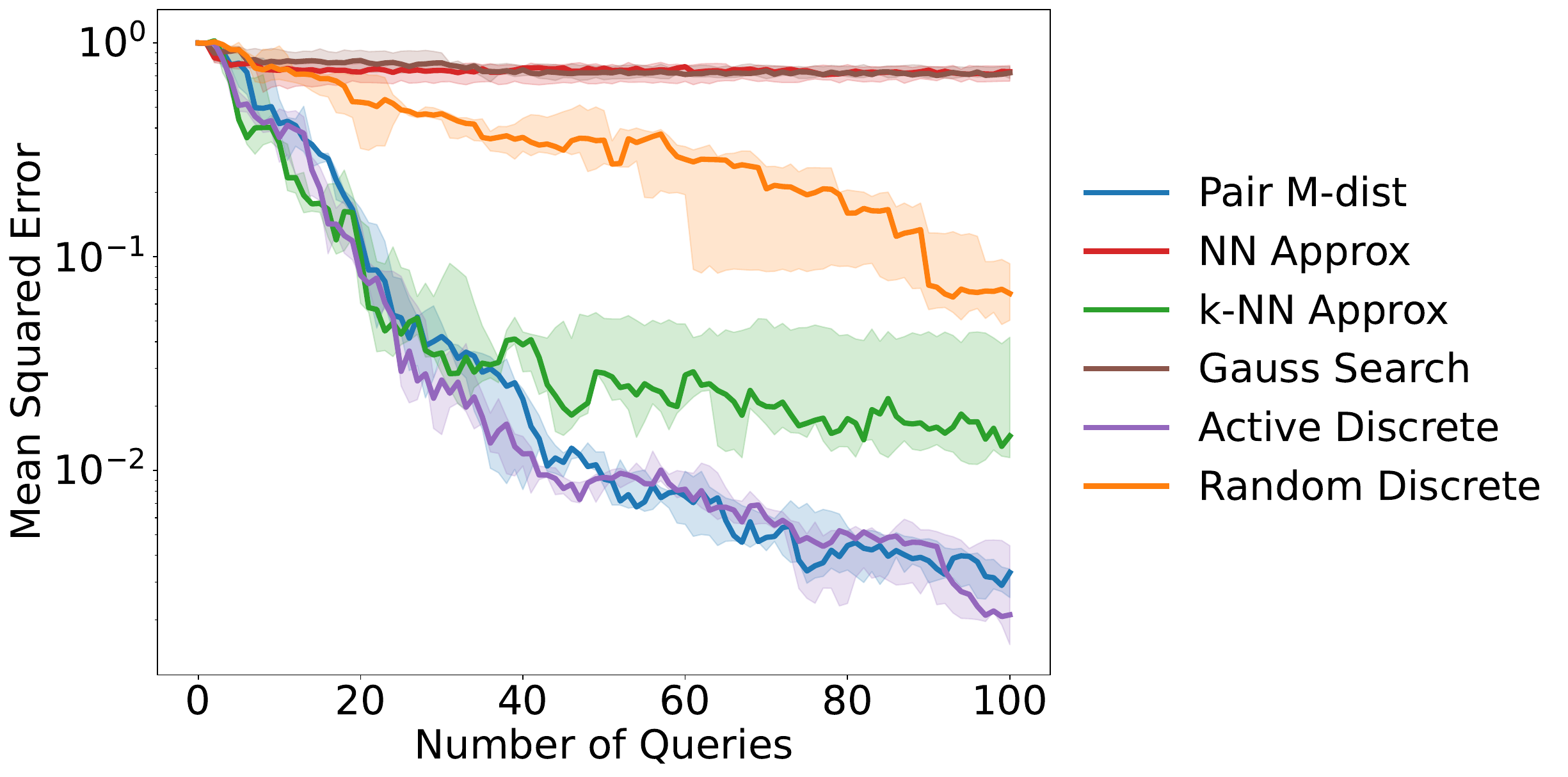}
    \end{minipage}
    \hfill
    \begin{minipage}{0.31\linewidth}
        \centering
        \includegraphics[width=0.8\textwidth]{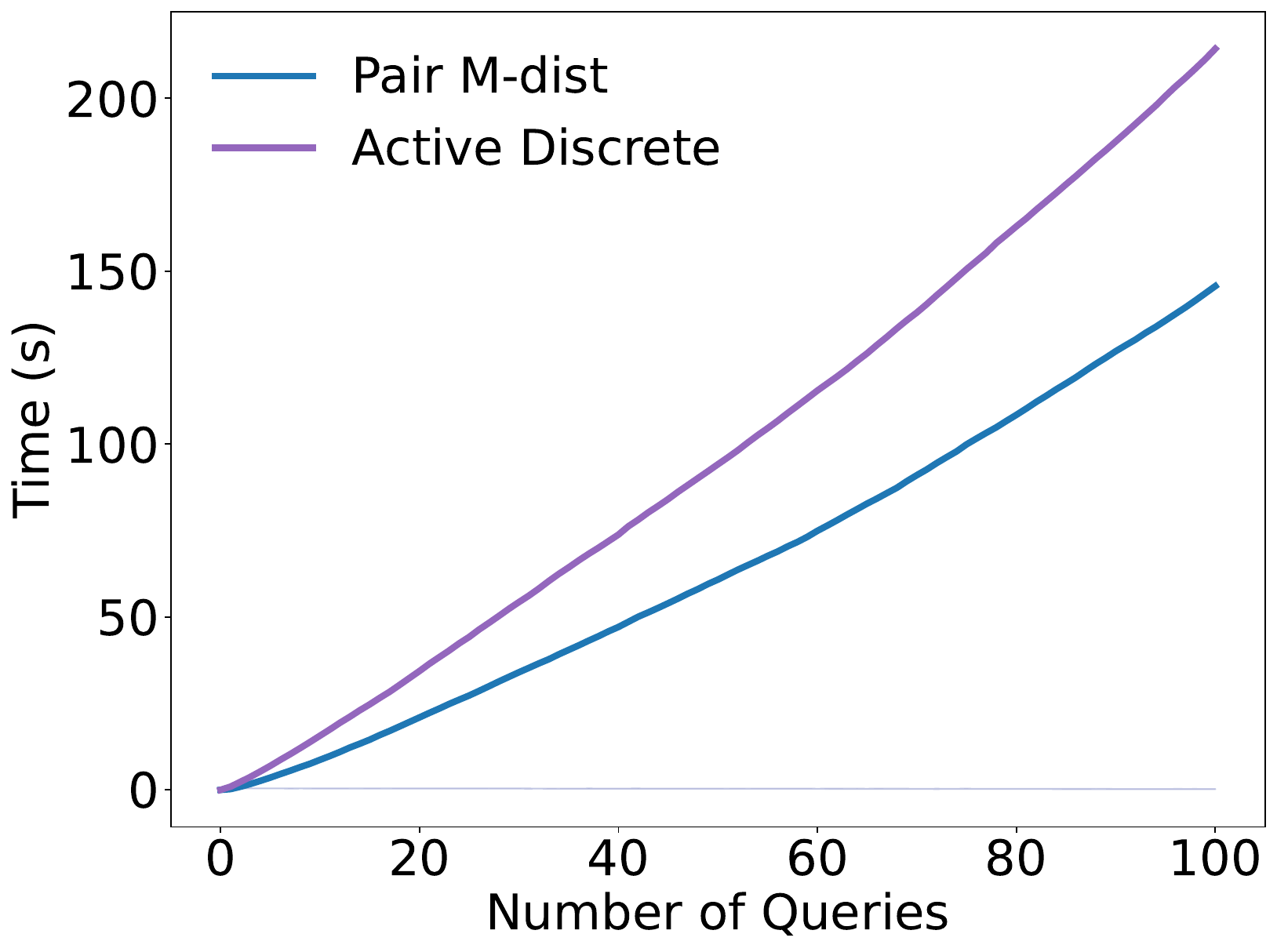}
    \end{minipage}
    \hfill
\caption{\footnotesize Comparison of different methods of discrete approximation of the optimal synthetic query and other baseline methods on a synthetic dataset in $10$ D with $500$ points (left), $100$ points (center) and time taken for query selection (right) for $100$ points. }
\label{fig:syn_data_expt_2}
\vspace{-1\baselineskip}
\end{figure*}

As illustrated in Fig.~\ref{fig:syn_data_expt_1}, \textit{Info-Synth} outperforms the baseline \textit{Gauss Search Synthesis} and \textit{Random Synthesis}. While Gauss Search Synthesis has a very similar performance at a lower noise level of $\sigma_0 = 0.001$, we can clearly see the difference when $\sigma_0 = 0.1$. Although \textit{Gauss Search Synthesis} is not a perfect baseline since it is not designed for our specific probability model, we use it as a baseline due to lack of relevant baselines. Further, comparison of \textit{Info-Synth} with the discrete methods \textit{Active Discrete} and \textit{Random Discrete} shows the benefits of optimizing in a continuous space. 

For the discrete approximations, we observe in Fig.~\ref{fig:syn_data_expt_2} that even though \textit{k-NN Approx} performs as well as \textit{Pair M-dist} and \textit{Active Discrete} when there are $500$ points, it saturates beyond a certain point for $100$ points. Our method achieves the same performance as \textit{Active Discrete}, while being faster as illustrated in the plot on the right. The fractions $\alpha$ and $\beta$ are selected such that the computational costs of \textit{Pair M-dist} and \textit{k-NN Approx} are approximately equal.

\subsection{Preference Learning on the Reddit Summary Dataset} \label{sec:reddit_summary_exp}
\begin{figure}[t]
    \hfill
    \begin{minipage}{0.24\linewidth}
        \centering
        \includegraphics[width=\textwidth]{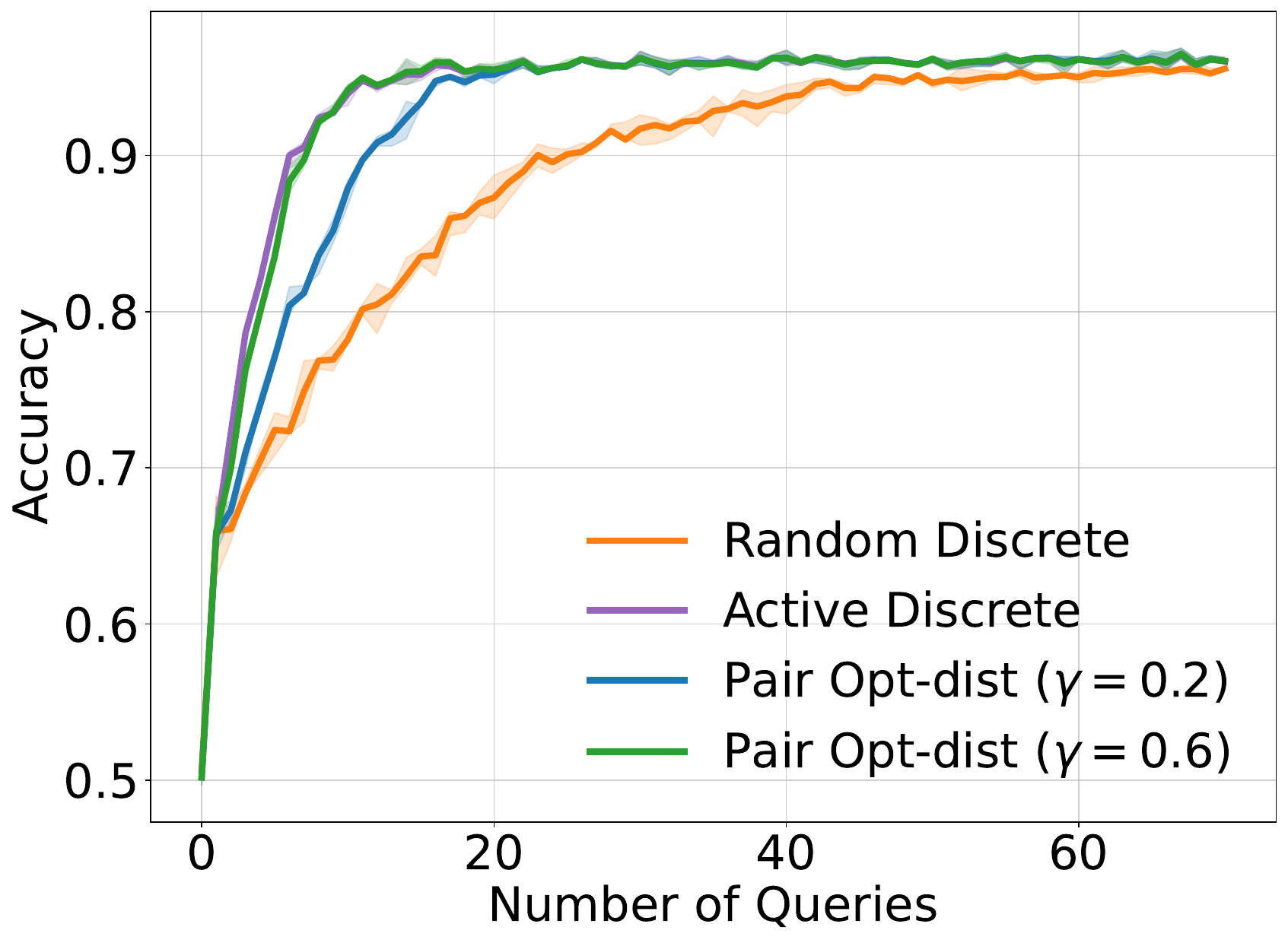}
          \subcaption{\footnotesize$\sigma = 0.1$}
          \label{fig:reddit-performance_accuracy_comparison_sigma0p1}
    \end{minipage}
    \hfill
    \begin{minipage}{0.24\linewidth}
        \centering
        \includegraphics[width=\textwidth]{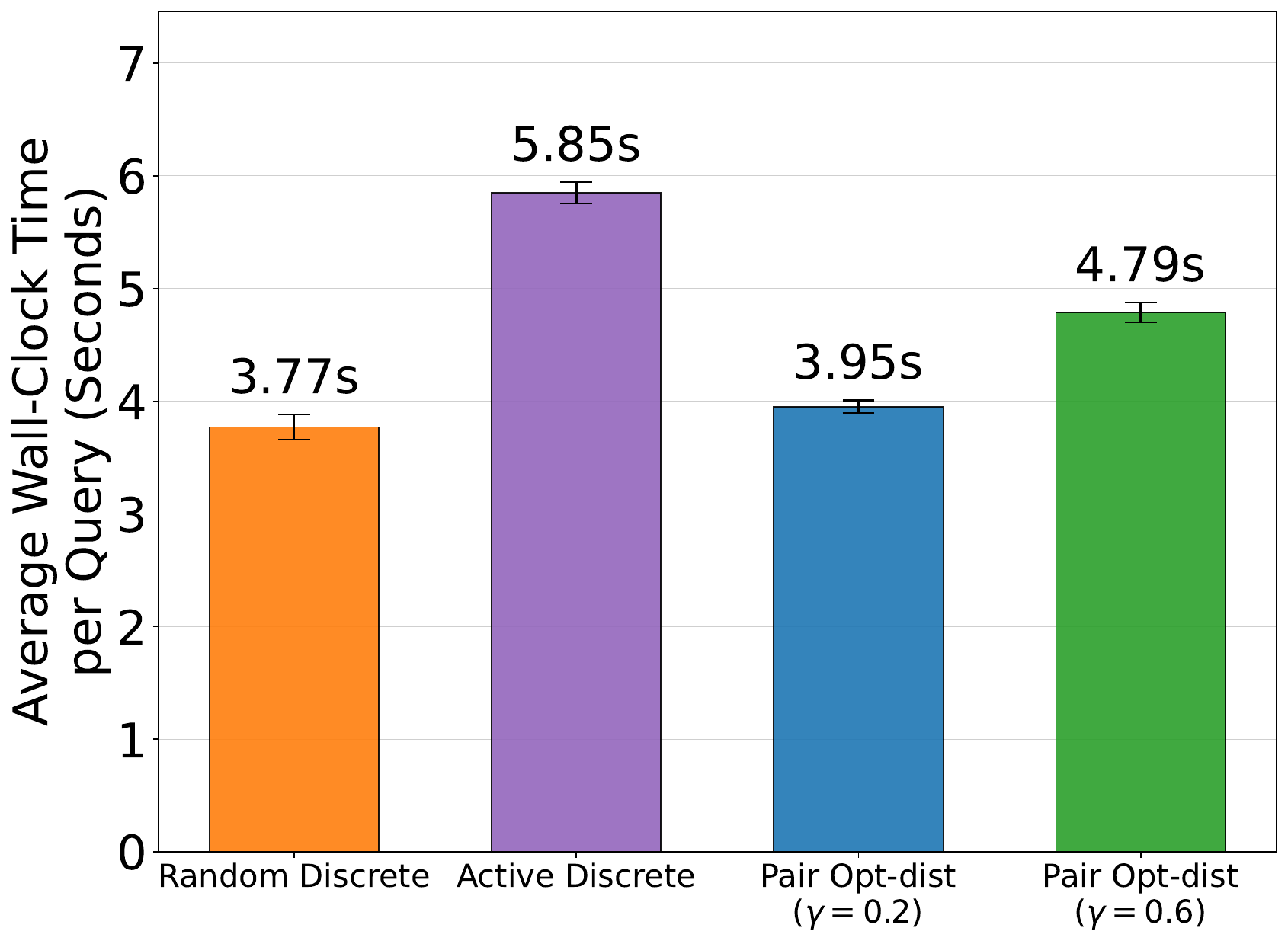}
      \subcaption{\footnotesize $\sigma = 0.1$}
      \label{fig:reddit-performance_time_comparison_sigma0p1}
    \end{minipage}
    \begin{minipage}{0.24\linewidth}
        \centering
        \includegraphics[width=\textwidth]{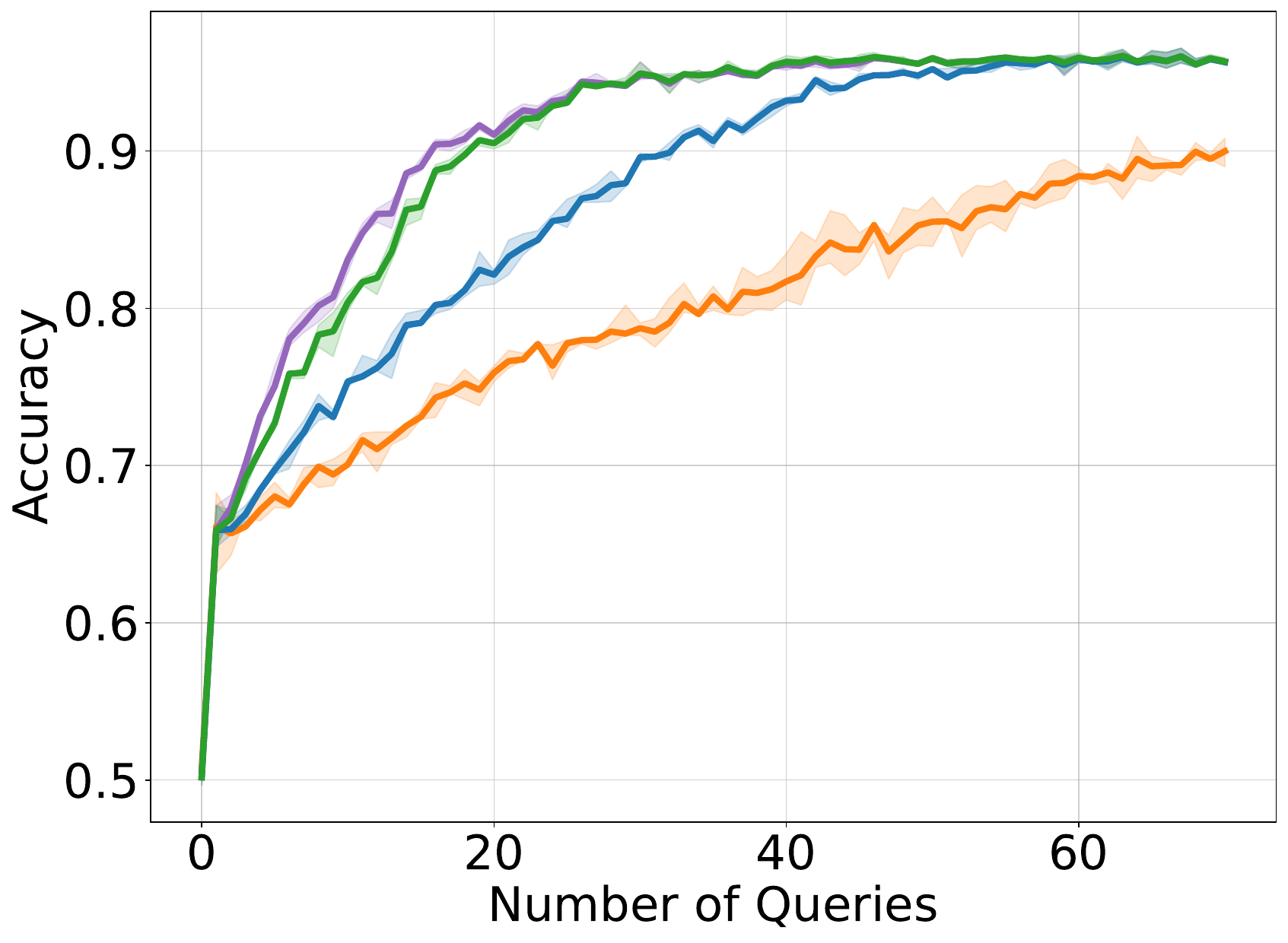}
      \subcaption{\footnotesize $\sigma = 0.3$}
      \label{fig:reddit-performance_accuracy_comparison_sigma0p3}
    \end{minipage}
    \begin{minipage}{0.24\linewidth}
        \centering
        \includegraphics[width=\textwidth]{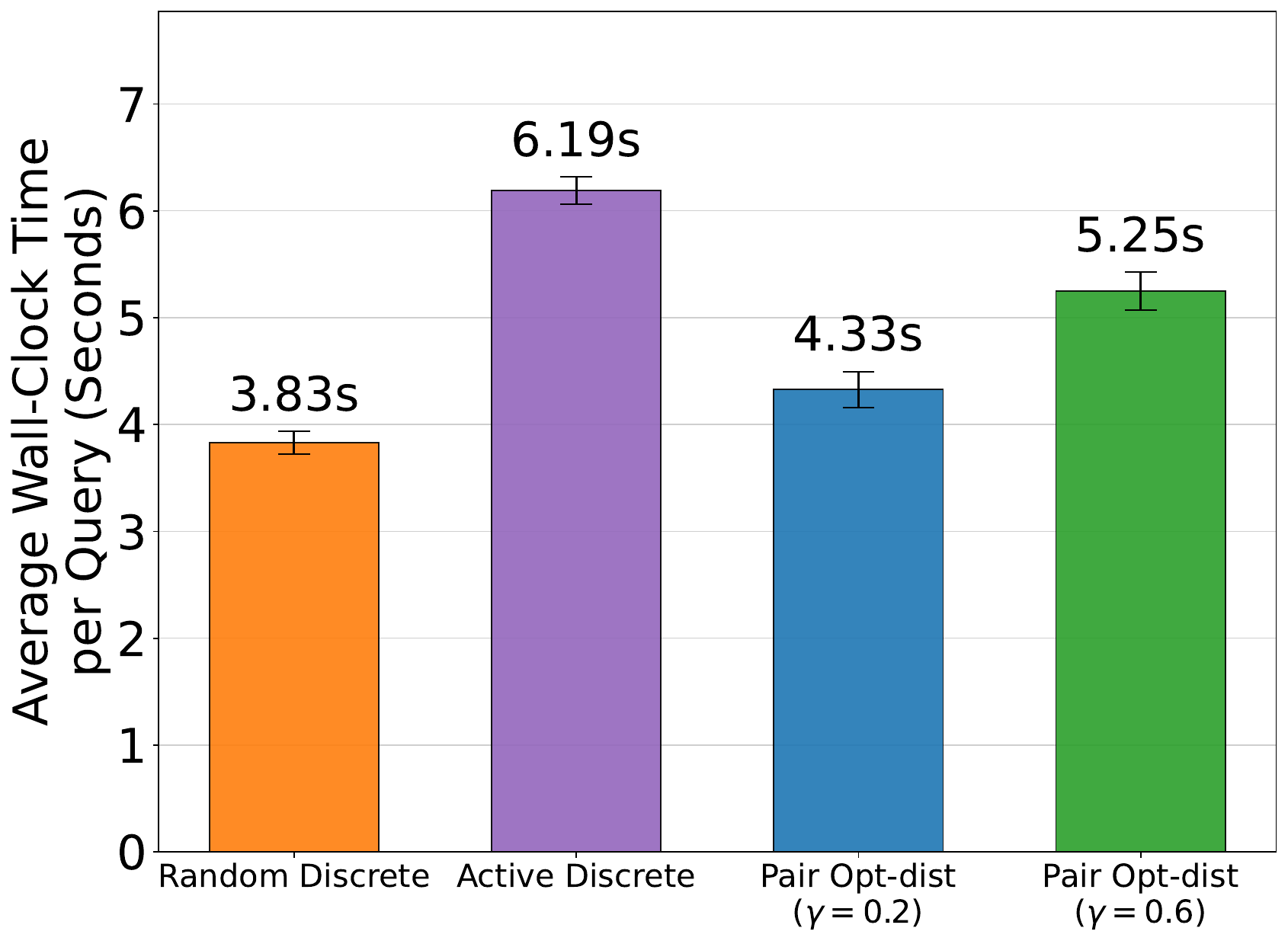}
      \subcaption{\footnotesize $\sigma = 0.3$}
      \label{fig:reddit-performance_time_comparison_sigma0p3}
    \end{minipage}
    \hfill
    \vspace{-2mm}
      \caption{\footnotesize  Performance analysis on the Reddit Summary TL;DR dataset for different AL methods. Our proposed approximation method, \textit{Pair Opt-dist} is shown for two different filtering levels of $\gamma=0.6$ (green)  and $\gamma=0.2$ (blue). Here $\gamma$ represents total fraction of queries used for selection. (a) and (b) show the preference prediction accuracy  and average query selection time for $\sigma=0.1$ while (c) and (d) show these results for $\sigma=0.3$.}
      \label{fig:reddit_summary_expt}
      \vspace{-1\baselineskip}
\end{figure}

We also conducted experiments on the Reddit Summary TL;DR dataset where we learn user preferences for text summaries  through active querying  \cite{stiennon2020learning,li2024personalized}. Users are provided queries consisting of a paragraph and its two candidate summaries for which they provide a preference label. To obtain summary embeddings, we adopt the same methodology as \cite{chen2025pal} where the original text and its summary are concatenated and then passed to the pretrained OPT-350M transformer model that is appended with a pretrained frozen embedding space projection head MLP \cite{zhang2022opt}. The output embedding of this MLP head, corresponding to the last token of this concatenated sentence, serves as the summary representation embedding. This process is repeated for the second summary candidate, resulting in two distinct embeddings corresponding to the same paragraph. Further, these embeddings are projected to a space of 128 dimensions using PCA.

For the experiments, we actively query responses to the summary pairs and estimate the user as a point in the same embedding space. We selected a user at random from this dataset and compared the performance of different AL methods for query selection. We use the \textit{Pair Opt-dist} approximation methodology for query selection. Figure \ref{fig:reddit_summary_expt} shows how our proposed approximation scheme can achieve a performance comparable to \textit{Active Discrete} with a much shorter query selection time. By adjusting $\gamma$, our framework offers a flexible trade-off: reducing $\gamma$ significantly accelerates the average query selection time, providing a valuable alternative for scenarios where computational speed is prioritized over strict sample efficiency.

\subsection{Gain Tuning} \label{sec:gain_tuning_exp}
\begin{figure}[t]
    \hfill
    \begin{minipage}{0.24\linewidth}
        \centering
        \includegraphics[width=\textwidth]{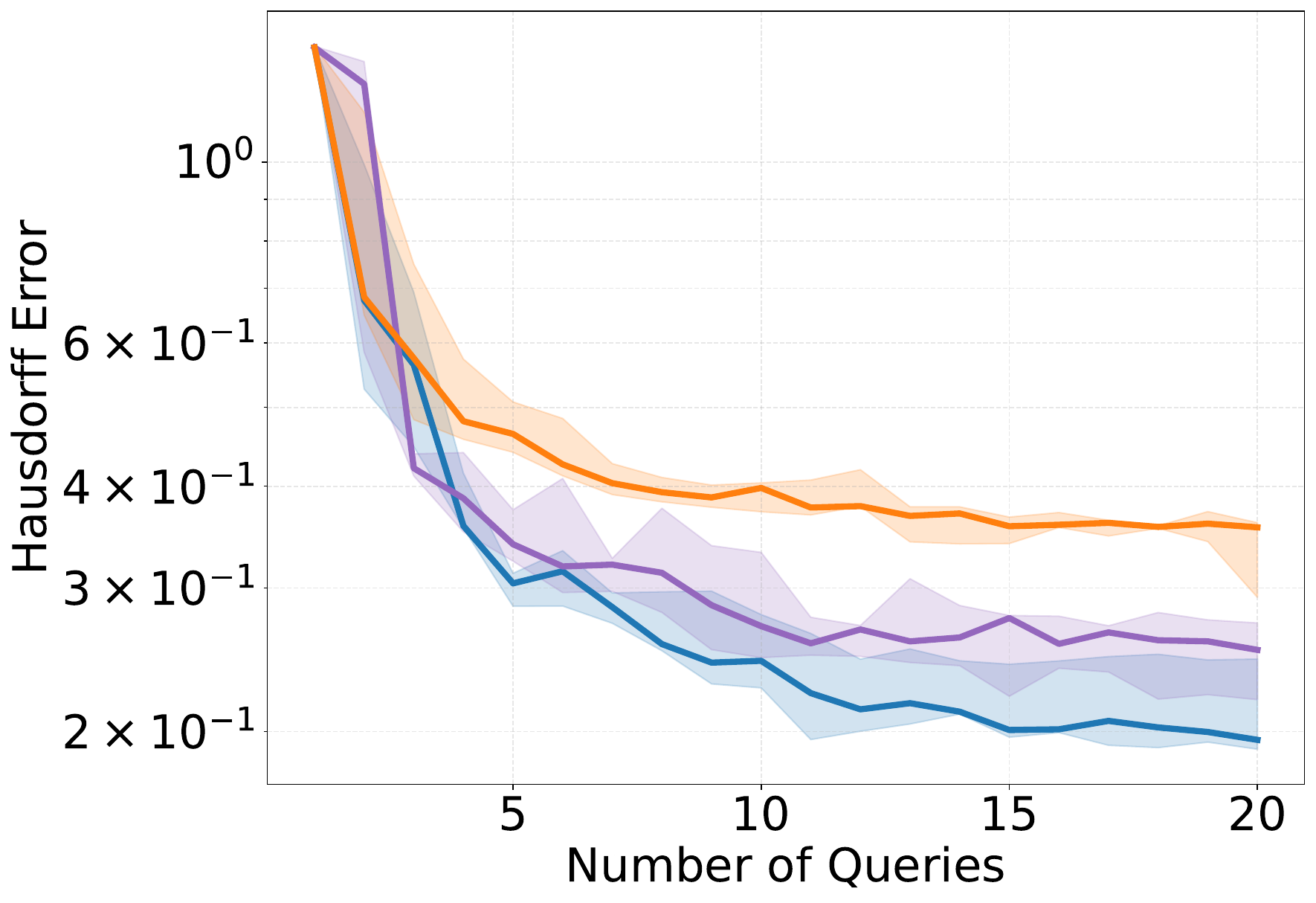}
      \subcaption{}
    \end{minipage}
    \hfill
    \begin{minipage}{0.24\linewidth}
        \centering
        \includegraphics[width=\textwidth]{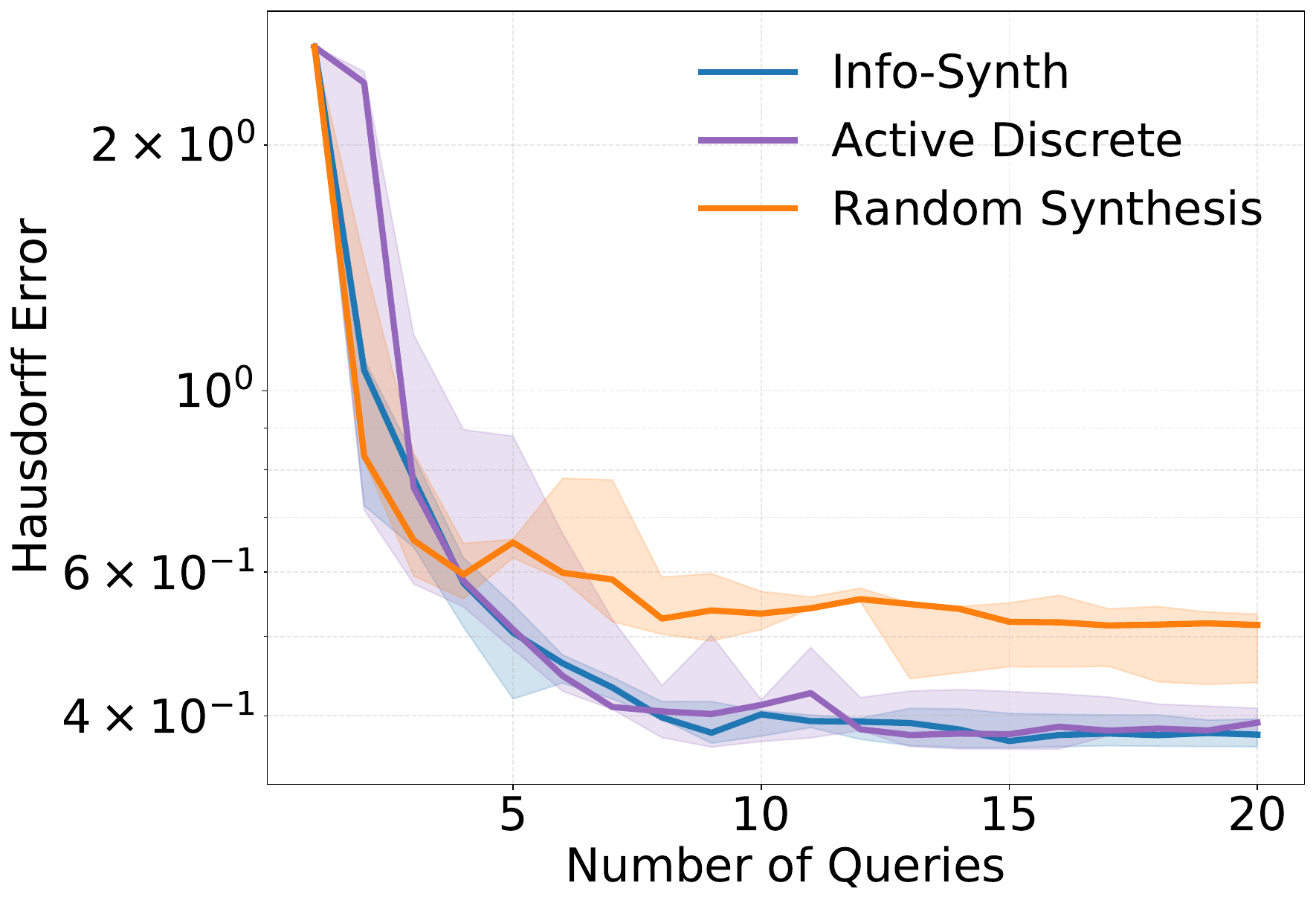}
      \subcaption{}
    \end{minipage}
    \hfill
    \begin{minipage}{0.24\linewidth}
        \centering
        \includegraphics[width=\textwidth]{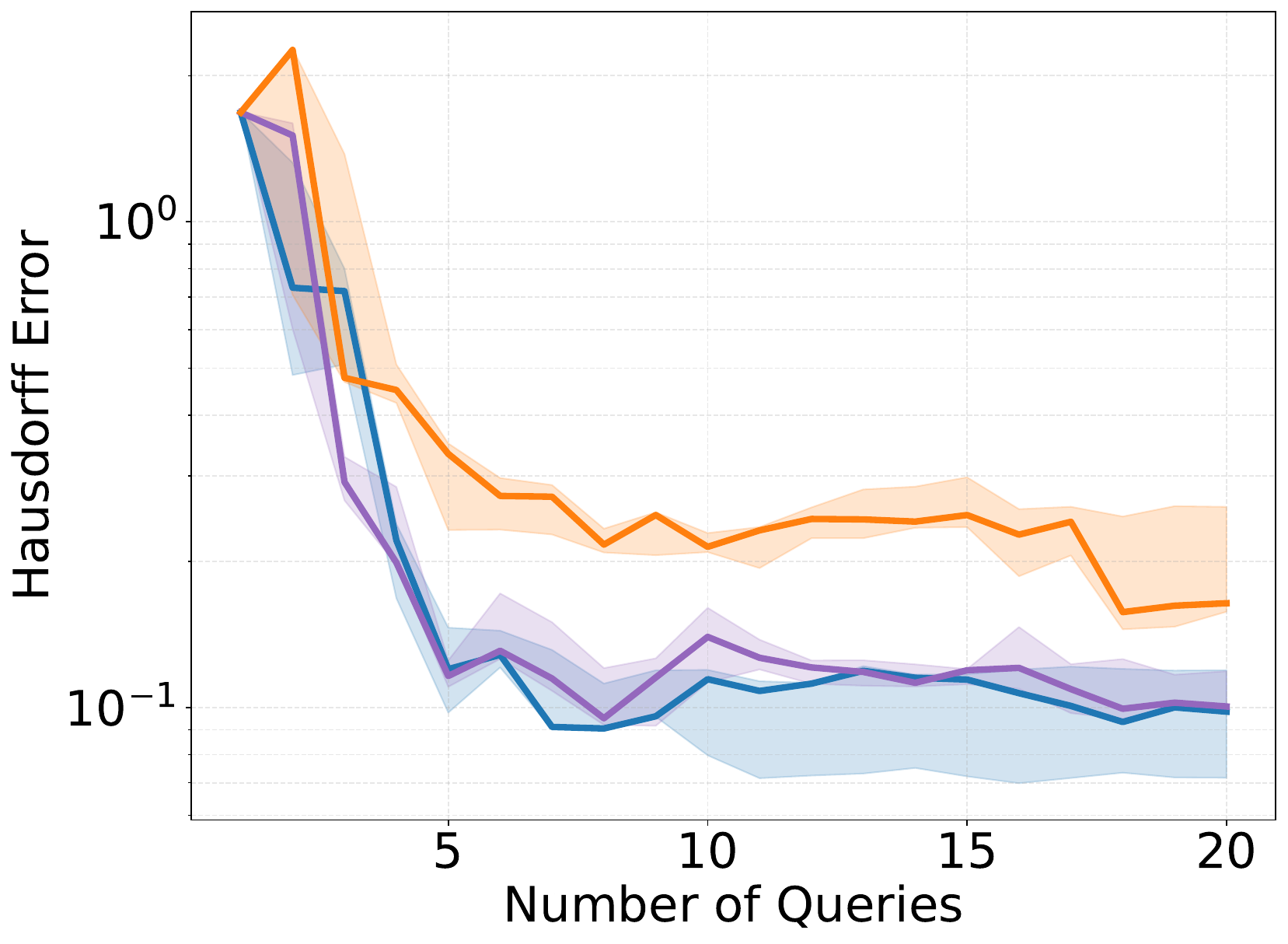}
          \subcaption{}
    \end{minipage}
    \hfill
    \begin{minipage}{0.24\linewidth}
        \centering
        \includegraphics[width=\textwidth]{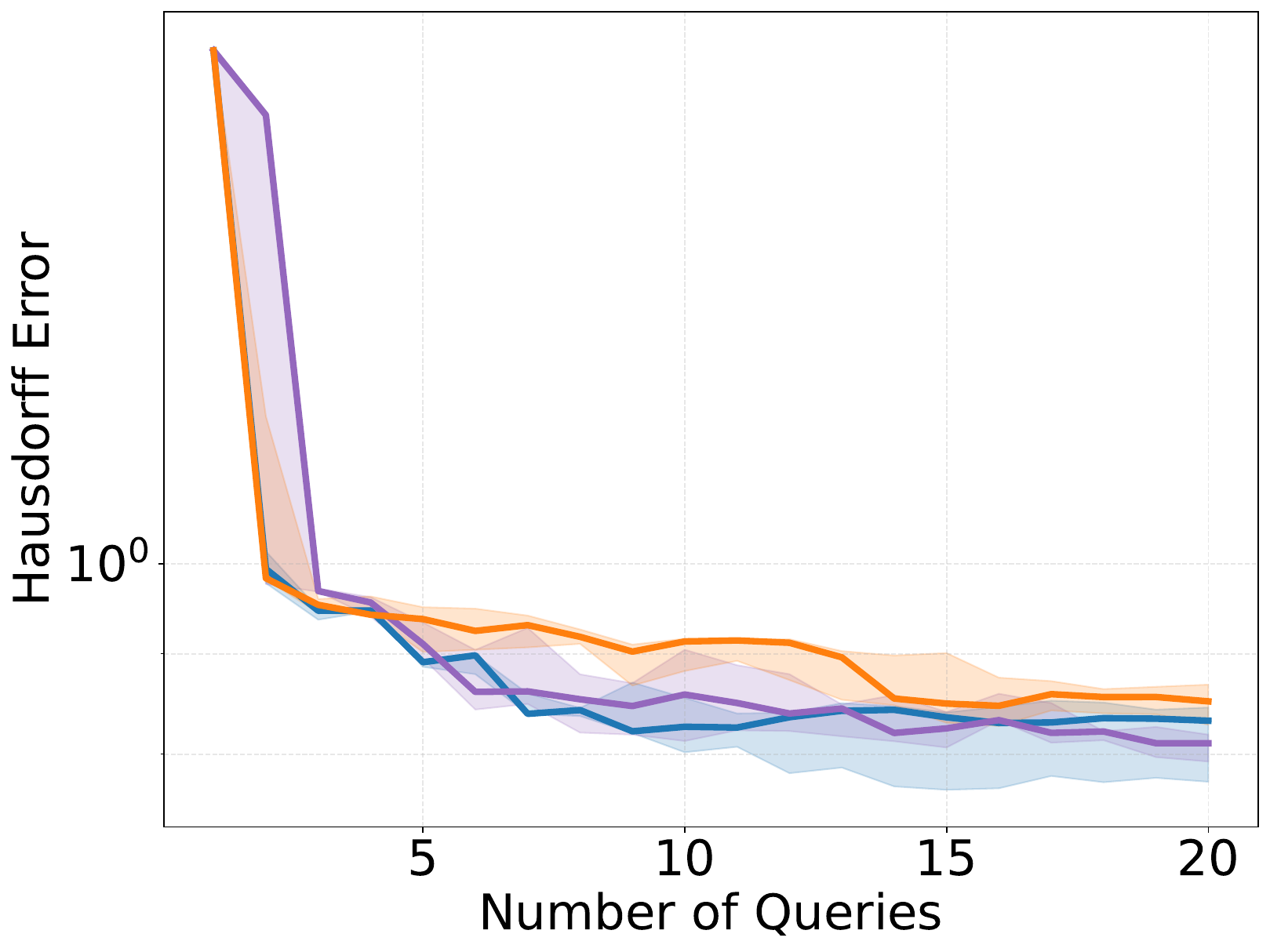}
      \subcaption{}
    \end{minipage}
    \hfill
    \vspace{-2mm}
      \caption{\footnotesize Trajectory tracking error comparison for different experiments. The plots represent the performance with error aggregation over different initial states for high curvature (a) and standard sinusoidal (b) trajectories, and error aggregation over different trajectories with an initial heading error (c) and lateral error (d).}
      \label{fig:gain_tuning_expt}
      \vspace{-1\baselineskip}
\end{figure}

In this section, we apply our active query methodology to learn the parameters of a trajectory-tracking controller for a unicyle robot. The detailed experimental setup is discussed in Sec.~\ref{sec:supp-gain_tuning_exp} and here, we discuss the results. We evaluate the performance and sample-efficiency of our proposed method by plotting the tracking error convergence over time (number of queries). We compare our method \textit{Info-Synth} to a pool based active method, \textit{Active Discrete} that samples points in the parameter space and evaluates different candidates for optimality to choose the best one. The baseline \textit{Random Synthesis} queries a randomly sampled gain in the continuous parameter space. We consider performance over different trajectories and initial states. 

As illustrated in Fig.~\ref{fig:gain_tuning_expt}, we observe that our method achieves the best performance in all the experiments, thus showcasing the power of both active learning and query synthesis in a continuous space. \textit{Active Discrete} performs consistently better than \textit{Random Synthesis}, even achieving a similar performance as our method in some cases. \textit{Info-Synth} wins even when its performance is similar to \textit{Active Discrete} as the latter involves MI evaluations of multiple candidate queries.

\section{Conclusion} \label{sec:conclusion}
In this work, we addressed the computational inefficiencies of standard pool-based active learning and the inherent unreliability of ambiguous pairwise comparisons. We introduced a novel \textit{confidence aware response model} alongside \textit{Info-Synth}, an active framework for synthesizing optimal queries in continuous spaces. For constrained datasets, we propose two approximation strategies, \textit{Pair M-dist} and \textit{Pair Opt-dist}. Evaluated across synthetic preference learning, text summaries, and robotic gain tuning, our framework dynamically identifies informative, high-confidence queries, thereby offering an efficient and structured solution for complex preference learning and subjective parameter tuning.

\bibliography{references}

@article{coombs1950psychological,
  title={Psychological scaling without a unit of measurement.},
  author={Coombs, Clyde H},
  journal={Psychological review},
  volume={57},
  number={3},
  pages={145},
  year={1950},
  publisher={American Psychological Association}
}

@article{settles2009active,
  title={Active learning literature survey},
  author={Settles, Burr},
  year={2009},
  publisher={University of Wisconsin-Madison Department of Computer Sciences}
}

@article{tajbakhsh2021guest,
  title={Guest editorial annotation-efficient deep learning: the holy grail of medical imaging},
  author={Tajbakhsh, Nima and Roth, Holger and Terzopoulos, Demetri and Liang, Jianming},
  journal={IEEE transactions on medical imaging},
  volume={40},
  number={10},
  pages={2526--2533},
  year={2021},
  publisher={IEEE}
}

@article{kim2025n,
  title={N Segment: Label-specific Deformations for Remote Sensing Image Segmentation},
  author={Kim, Yechan and Yoon, DongHo and Kim, SooYeon and Jeon, Moongu},
  journal={IEEE Geoscience and Remote Sensing Letters},
  year={2025},
  publisher={IEEE}
}

@article{bellamy2022batched,
  title={Batched Bayesian optimization for drug design in noisy environments},
  author={Bellamy, Hugo and Rehim, Abbi Abdel and Orhobor, Oghenejokpeme I and King, Ross},
  journal={Journal of Chemical Information and Modeling},
  volume={62},
  number={17},
  pages={3970--3981},
  year={2022},
  publisher={ACS Publications}
}

@article{bi2025comprehensive,
  title={A comprehensive benchmark of active learning strategies with AutoML for small-sample regression in materials science},
  author={Bi, Jinghou and Xu, Yuanhao and Conrad, Felix and Wiemer, Hajo and Ihlenfeldt, Steffen},
  journal={Scientific Reports},
  volume={15},
  number={1},
  pages={37167},
  year={2025},
  publisher={Nature Publishing Group UK London}
}

@article{warmuth2001active,
  title={Active learning in the drug discovery process},
  author={Warmuth, Manfred KK and R{\"a}tsch, Gunnar and Mathieson, Michael and Liao, Jun and Lemmen, Christian},
  journal={Advances in Neural information processing systems},
  volume={14},
  year={2001}
}

@incollection{biswas2023active,
  title={Active learning on medical image},
  author={Biswas, Angona and Abdullah Al, Nasim Md and Ali, Md Shahin and Hossain, Ismail and Ullah, Md Azim and Talukder, Sajedul},
  booktitle={Data Driven Approaches on Medical Imaging},
  pages={51--67},
  year={2023},
  publisher={Springer}
}

@article{bradley1952rank,
  title={Rank analysis of incomplete block designs: I. The method of paired comparisons},
  author={Bradley, Ralph Allan and Terry, Milton E},
  journal={Biometrika},
  volume={39},
  number={3/4},
  pages={324--345},
  year={1952},
  publisher={JSTOR}
}

@book{david1963method,
  title={The method of paired comparisons},
  author={David, Herbert Aron},
  volume={12},
  year={1963},
  publisher={London}
}

@inproceedings{chu2005preference,
  title={Preference learning with Gaussian processes},
  author={Chu, Wei and Ghahramani, Zoubin},
  booktitle={Proceedings of the 22nd international conference on Machine learning},
  pages={137--144},
  year={2005}
}

@inproceedings{canal2019active,
  title={Active embedding search via noisy paired comparisons},
  author={Canal, Gregory and Massimino, Andy and Davenport, Mark and Rozell, Christopher},
  booktitle={International Conference on Machine Learning},
  pages={902--911},
  year={2019},
  organization={PMLR}
}

@inproceedings{chumbalov2020scalable,
  title={Scalable and Efficient Comparison-based Search without Features},
  author={Chumbalov, Daniyar and Maystre, Lucas and Grossglauser, Matthias},
  booktitle={International Conference on Machine Learning},
  pages={1995--2005},
  year={2020},
  organization={PMLR}
}

@article{wang2015active,
  title={Active learning via query synthesis and nearest neighbour search},
  author={Wang, Liantao and Hu, Xuelei and Yuan, Bo and Lu, Jianfeng},
  journal={Neurocomputing},
  volume={147},
  pages={426--434},
  year={2015},
  publisher={Elsevier}
}

@inproceedings{chen2025pal,
  title={Pal: Sample-efficient personalized reward modeling for pluralistic alignment},
  author={Chen, Daiwei and Chen, Yi and Rege, Aniket and Wang, Zhi and Vinayak, Ramya Korlakai},
  booktitle={The Thirteenth International Conference on Learning Representations},
  year={2025}
}

@article{stiennon2020learning,
  title={Learning to summarize with human feedback},
  author={Stiennon, Nisan and Ouyang, Long and Wu, Jeffrey and Ziegler, Daniel and Lowe, Ryan and Voss, Chelsea and Radford, Alec and Amodei, Dario and Christiano, Paul F},
  journal={Advances in neural information processing systems},
  volume={33},
  pages={3008--3021},
  year={2020}
}

@article{zhang2022opt,
  title={Opt: Open pre-trained transformer language models},
  author={Zhang, Susan and Roller, Stephen and Goyal, Naman and Artetxe, Mikel and Chen, Moya and Chen, Shuohui and Dewan, Christopher and Diab, Mona and Li, Xian and Lin, Xi Victoria and others},
  journal={arXiv preprint arXiv:2205.01068},
  year={2022}
}

@article{li2024personalized,
  title={Personalized language modeling from personalized human feedback},
  author={Li, Xinyu and Zhou, Ruiyang and Lipton, Zachary C and Leqi, Liu},
  journal={arXiv preprint arXiv:2402.05133},
  year={2024}
}

@inproceedings{das2023near,
  title={Near optimal heteroscedastic regression with symbiotic learning},
  author={Das, Aniket and Nagaraj, Dheeraj M and Netrapalli, Praneeth and Baby, Dheeraj},
  booktitle={The Thirty Sixth Annual Conference on Learning Theory},
  pages={3696--3757},
  year={2023},
  organization={PMLR}
}

@inproceedings{chaudhuri2017active,
  title={Active heteroscedastic regression},
  author={Chaudhuri, Kamalika and Jain, Prateek and Natarajan, Nagarajan},
  booktitle={International Conference on Machine Learning},
  pages={694--702},
  year={2017},
  organization={PMLR}
}

@article{bergstra2012random,
  title={Random search for hyper-parameter optimization.},
  author={Bergstra, James and Bengio, Yoshua},
  journal={Journal of machine learning research},
  volume={13},
  number={2},
  year={2012}
}

@inproceedings{brochu2010bayesian,
  title={A Bayesian interactive optimization approach to procedural animation design},
  author={Brochu, Eric and Brochu, Tyson and De Freitas, Nando},
  booktitle={Proceedings of the 2010 ACM SIGGRAPH/Eurographics Symposium on Computer Animation},
  pages={103--112},
  year={2010}
}

@inproceedings{gonzalez2017preferential,
  title={Preferential bayesian optimization},
  author={Gonz{\'a}lez, Javier and Dai, Zhenwen and Damianou, Andreas and Lawrence, Neil D},
  booktitle={International Conference on Machine Learning},
  pages={1282--1291},
  year={2017},
  organization={PMLR}
}

@inproceedings{sadigh2017active,
title={Active preference-based learning of reward functions},
author={Sadigh, Dorsa and Dragan, Anca D and Sastry, Shankar and Seshia, Sanjit A},
booktitle={Proceedings of Robotics: Science and Systems (RSS)},
year={2017}
}

@inproceedings{biyik2018batch,
  title={Batch active preference-based learning of reward functions},
  author={Biyik, Erdem and Sadigh, Dorsa},
  booktitle={Conference on robot learning},
  pages={519--528},
  year={2018},
  organization={PMLR}
}

@article{biyik2019asking,
  title={Asking easy questions: A user-friendly approach to active reward learning},
  author={B{\i}y{\i}k, Erdem and Palan, Malayandi and Landolfi, Nicholas C and Losey, Dylan P and Sadigh, Dorsa},
  journal={arXiv preprint arXiv:1910.04365},
  year={2019}
}

@inproceedings{tucker2020preference,
  title={Preference-based learning for exoskeleton gait optimization},
  author={Tucker, Maegan and Novoseller, Ellen and Kann, Claudia and Sui, Yanan and Yue, Yisong and Burdick, Joel W and Ames, Aaron D},
  booktitle={2020 IEEE international conference on robotics and automation (ICRA)},
  pages={2351--2357},
  year={2020},
  organization={IEEE}
}

@inproceedings{li2021roial,
  title={Roial: Region of interest active learning for characterizing exoskeleton gait preference landscapes},
  author={Li, Kejun and Tucker, Maegan and B{\i}y{\i}k, Erdem and Novoseller, Ellen and Burdick, Joel W and Sui, Yanan and Sadigh, Dorsa and Yue, Yisong and Ames, Aaron D},
  booktitle={2021 IEEE International Conference on Robotics and Automation (ICRA)},
  pages={3212--3218},
  year={2021},
  organization={IEEE}
}

@inproceedings{berkenkamp2016safe,
title={Safe controller optimization for quadrotors with gaussian processes},
author={Berkenkamp, Felix and Schoellig, Angela P and Krause, Andreas},
booktitle={2016 IEEE International Conference on Robotics and Automation (ICRA)},
pages={491--496},
year={2016},
organization={IEEE}
}

@inproceedings{marco2017virtual,
  title={Virtual vs. real: Trading off simulations and physical experiments in reinforcement learning with Bayesian optimization},
  author={Marco, Alonso and Berkenkamp, Felix and Hennig, Philipp and Schoellig, Angela P and Krause, Andreas and Schaal, Stefan and Trimpe, Sebastian},
  booktitle={2017 IEEE International Conference on Robotics and Automation (ICRA)},
  pages={1557--1563},
  year={2017},
  organization={IEEE}
}

@inproceedings{csomay2022learning,
  title={Learning controller gains on bipedal walking robots via user preferences},
  author={Csomay-Shanklin, Noel and Tucker, Maegan and Dai, Min and Reher, Jenna and Ames, Aaron D},
  booktitle={2022 International Conference on Robotics and Automation (ICRA)},
  pages={10405--10411},
  year={2022},
  organization={IEEE}
}

@article{coutinho2024human,
  title={Human-in-the-loop controller tuning using Preferential Bayesian Optimization},
  author={Coutinho, Joao PL and Castillo, Ivan and Reis, Marco S},
  journal={IFAC-PapersOnLine},
  volume={58},
  number={14},
  pages={13--18},
  year={2024},
  publisher={Elsevier}
}

@inproceedings{kanayama1990stable,
  title={A stable tracking control method for an autonomous mobile robot},
  author={Kanayama, Yutaka and Kimura, Yoshihiko and Miyazaki, Fumio and Noguchi, Tetsuo},
  booktitle={Proceedings., IEEE International Conference on Robotics and Automation},
  pages={384--389},
  year={1990},
  organization={IEEE}
}

@article{farouki2012bernstein,
  title={The Bernstein polynomial basis: A centennial retrospective},
  author={Farouki, Rida T},
  journal={Computer Aided Geometric Design},
  volume={29},
  number={6},
  pages={379--419},
  year={2012},
  publisher={Elsevier}
}

@article{carpenter2017stan,
  title={Stan: A probabilistic programming language},
  author={Carpenter, Bob and Gelman, Andrew and Hoffman, Matthew D and Lee, Daniel and Goodrich, Ben and Betancourt, Michael and Brubaker, Marcus and Guo, Jiqiang and Li, Peter and Riddell, Allen},
  journal={Journal of statistical software},
  volume={76},
  pages={1--32},
  year={2017}
}
\bibliographystyle{unsrtnat}


\appendix

\section{Problem setup} \label{sec:supp-background}
To estimate the preferences of a user $w \in \mathbb{R}^d$, we assume all query items are embedded in the same $d$-dimensional space. These items may be drawn from a continuous subspace of $\mathbb{R}^d$ or restricted to a fixed dataset $D \subset \mathbb{R}^d$. Furthermore, we model preference based on distance, assuming the user prefers items that are closer to them. A query consists of presenting a pair of items $(\vp,\vq)$ to the user $\vw$ and asking which among the two they prefer. We alternate between obtaining a response $Y \in \{0,1\}$ to each query and updating our estimate of the user point. The goal is to choose these queries adaptively.

Suppose that the user is denoted by the random variable $W \in \R^d$ with a prior density of $p_0(W)$. The posterior distribution after obtaining responses to $n$ queries is given by
\begin{equation*}\label{eq:posterior} 
    p(W) \propto p_0(W) \prod_{j=1}^{n}p(Y_j|W) 
\end{equation*}
where $\{ Y_j\}_{j=1}^{n}$ denote the set of all responses obtained for $n$ queries involving pairs of items - $\{(\vp_1, \vq_1), (\vp_2, \vq_2), \dots, (\vp_n, \vq_n)\}$ respectively. The user point is estimated as the mean of the posterior distribution
\[ \hat{\vw} = \E [W | Y^n]. \]
In the experiments, we use MCMC sampling to generate samples from the posterior distribution.

\section{Probability Model and Query Synthesis} \label{sec:supp-query_details}
\subsection{Derivation of the Anisotropic Denominator}

We define the points $\vp$ and $\vq$ in terms of their midpoint $\vb$ and the hyperplane normal vector $\va$. Following the definitions in Sec.~\ref{sec:modified_response_model}, we have $\va = 2(\vp - \vq)$ and $\vb = \frac{\vp + \vq}{2}$. This implies
\begin{equation*}
    \vp = \vb + \frac{\va}{4}, \quad \vq = \vb - \frac{\va}{4}
\end{equation*}
Let the squared distances from the user point $\vw$ to items $\vp$ and $\vq$ be expanded around the midpoint $\vb$
\begin{align*}
    \|\vw - \vp\|_2^2 &= \left\|(\vw - \vb) - \frac{\va}{4}\right\|_2^2 = \|\vw - \vb\|_2^2 + \frac{\|\va\|_2^2}{16} - \frac{\va^\top (\vw - \vb)}{2} \\
    \|\vw - \vq\|_2^2 &= \left\|(\vw - \vb) + \frac{\va}{4}\right\|_2^2 = \|\vw - \vb\|_2^2 + \frac{\|\va\|_2^2}{16} + \frac{\va^\top (\vw - \vb)}{2}
\end{align*}
To simplify the notation, let
\begin{equation*}
    U = \|\vw - \vb\|_2^2 + \frac{\|\va\|_2^2}{16}, \quad V = \frac{\va^\top (\vw - \vb)}{2}
\end{equation*}
Substituting these into the denominator expression $D(\vw) = \sigma_0 \sqrt{\|\vw - \vp\|_2^4 + \|\vw - \vq\|_2^4}$
\begin{align*}
    D(\vw) &= \sigma_0 \sqrt{(U - V)^2 + (U + V)^2} \\
    D(\vw) &= \sigma_0 \sqrt{(U^2 + V^2 - 2UV) + (U^2 + V^2 + 2UV)} \\
    D(\vw) &= \sigma_0 \sqrt{2U^2 + 2V^2}
\end{align*}
Substituting the full expressions for $U$ and $V$ back into the result yields the final non-isotropic form
\begin{equation*}
    D(\vw) = \sigma_0 \sqrt{2 \left( \|\vw - \vb\|_2^2 + \frac{\|\va\|_2^2}{16} \right)^2 + \frac{1}{2} \left( \va^\top (\vw - \vb) \right)^2}
\end{equation*}

\subsection{Approximating distribution of $f(W)$ as a Gaussian}

Let $W \sim \NN(\vmu, \mSigma)$ and $f(W) = \frac{N(W)}{D(W)}$ where $N(W) = \va^\top W - \tau$ and \\
$D(W) =  \sigma_0 \sqrt{2 \left( \|W - \vb\|_2^2 + \frac{\|\va\|_2^2}{16} \right)^2 + \frac{1}{2} \left( \va^\top (\vw - \vb) \right)^2}$. \\
\\
We use moment matching for the approximation: 
\begin{equation*}
    f(W) \sim \NN(\mu_f, \sigma_f^2) \text{ where } \mu_f = \E[f(W)] \text{ and } \sigma_f^2 = \var(f(W)).
\end{equation*}
The expected probability can be expressed as 
\[ \E[\Phi(f(W))] \approx \Phi \left( \frac{\mu_f}{g(\sigma_f)} \right) \]
where $\Phi(x)$ is the noise CDF with the symmetry $\Phi(x) + \Phi(-x) = 1$ and $g(\cdot)$ is an appropriate function of $\sigma_f$ depending on the specific $\Phi$. \\
\\
From the equiprobable outcome condition, we have $\E[\Phi(f(W))] = 0.5$.
A key characteristic of symmetric link functions such as $\Phi$ is that
\begin{equation*}
    \Phi(0) = 0.5 \quad \text{and} \quad \Phi(x) + \Phi(-x) = 1.
\end{equation*}
From this, it follows that for any symmetric link function
\begin{equation*}
    \E[\Phi(f(W))] = 0.5 \iff \mu_f = 0.
\end{equation*}
For $\E[f(W)]$ to be $0$, the numerator $N(W)$ needs to be an odd function about $\vmu$ and the denominator $D(W)$ an even function about $\vmu$. We examine both the terms by substituting $W = \vmu + \Delta$. \\
\\
We have, 
\[ N(\vmu + \Delta) = \va^\top (\vmu + \Delta) - \tau = (\va^\top \vmu - \tau) + \va^\top \Delta. \]
For $N(W)$ to be odd about $\vmu$, the constant term should be $0$. We have
\begin{align*}
    \va^\top \vmu &- \tau = 0 \\
    \text{i.e.} ~2(\vp - \vq)^\top \vmu &- \left( \|\vp\|^2 - \|\vq\|^2 \right) = 0 \\
    \implies \vmu &= \frac{\vp + \vq}{2} = \vb.
\end{align*}
When this holds, $N(\vmu + \Delta) = \va^\top \Delta$ which satisfies
\begin{equation*}
    N(\vmu - \Delta) = -N(\vmu + \Delta).
\end{equation*}
Further, we have
\begin{equation*}
    D(W) = \sigma_0 \sqrt{2 \left( \|W - \vmu\|_2^2 + \frac{\|\va\|_2^2}{16} \right)^2 + \frac{1}{2} \left( \va^\top (W - \vmu) \right)^2}.
\end{equation*}
Under the condition, we substitute $W = \vmu + \Delta$ to obtain
\begin{equation*}
    D(\vmu + \Delta) = \sigma_0 \sqrt{2 \left( \|\Delta\|_2^2 + \frac{\|\va\|_2^2}{16} \right)^2 + \frac{1}{2} \left( \va^\top \Delta \right)^2},
\end{equation*}
thus making the denominator an even function of $\Delta$ where
\begin{equation*}
    D(\vmu - \Delta) = D(\vmu + \Delta).
\end{equation*}
For $\vp = \vmu + \vx$ and $\vq = \vmu - \vx$, the approximation of the expectation becomes exact due to symmetry as derived in the main paper.

\subsection{Optimizing expected entropy} \label{sec:opt_second_term}

\begin{proposition}
Let $S \sim \NN(\vzero, \mSigma)$ with $\mSigma \in \mathbb{R}^{d \times d}$ positive definite,
and let $\vv_1$ denote the principal eigenvector of $\mSigma$ with eigenvalue $\lambda_1$.
The maximizer of
\begin{equation}
    F(\vx) = \E_{S \sim \NN(\vzero,\mSigma)} \left[ \frac{4|\vx^\top S|}{\sigma_0\sqrt{2\left(\|S\|_2^2+\|\vx\|_2^2\right)^2 + 8\left(\vx^\top S\right)^2}} \right]
\end{equation}
has the form $\tilde{\vx} = \tilde{r} \vv_1$, where
\begin{equation*}
    \tilde{r} = \underset{r}{\arg \max} ~\underset{S}{\E} \left[ H\left( \Phi \left( \frac{4 r ~\vv_1^\top S}
    {\sigma_0 \sqrt{2\left(\|S\|_2^2 + r^2\right)^2 + 8r^2 \left(\vv_1^\top S\right)^2}} \right) \right) \right].
\end{equation*}
\end{proposition}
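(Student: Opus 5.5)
The plan is to split $\vx = r\vu$ with $r = \|\vx\|_2 \ge 0$ and $\|\vu\|_2 = 1$. I first fix $r$ and optimize the direction $\vu$, then treat the magnitude as a one-dimensional problem. Since $\mSigma$ is positive definite, I rotate coordinates so that $\mSigma = \mathrm{diag}(\lambda_1,\dots,\lambda_d)$ with $\lambda_1 \ge \dots \ge \lambda_d$. The integrand of $F$ depends on $S$ only through $\|S\|_2^2$ and $\vx^\top S$, both of which are rotation invariant, so the rotation costs nothing. With $t = \vu^\top S$ and $T = \|S\|_2^2$, the integrand is $4r\,\phi_T(|t|)/\sigma_0$, where
\begin{equation*}
    \phi_T(|t|) = \frac{|t|}{\sqrt{2(T+r^2)^2 + 8r^2 t^2}} = \frac{1}{\sqrt{2(T+r^2)^2/t^2 + 8r^2}} .
\end{equation*}
For each fixed $T$, $\phi_T$ is strictly increasing in $|t|$. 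Moreover, $T$ has the same law whatever $\vu$ is. The direction problem therefore becomes: choose $\vu$ so that $|\vu^\top S|$ is ``as large as possible'' jointly with the fixed quantity $\|S\|_2^2$.

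Next I show that $\vu = \vv_1$ is optimal. First I would establish stationarity at the eigenvectors. The reflections $S_i \mapsto -S_i$ preserve both the law of $S$ and $\|S\|_2^2$, so $F(r\vu)$ is even in each coordinate of $\vu$. Consequently every eigenvector is a critical point of $F(r\,\cdot)$ on the unit sphere. To single out $\vv_1$, I would write $\vu^\top S = \sum_i u_i \sqrt{\lambda_i} G_i$ with $G \sim \NN(\vzero,\bI)$ and $T = \sum_i \lambda_i G_i^2$. I would then compare $\vu$ with $\vv_1$ through the weights $u_i^2$. The aim is to show that $\E[\phi_T(|\vu^\top S|)]$ is a Schur-type monotone function of $(u_1^2,\dots,u_d^2)$ that increases as mass moves toward the coordinate with the largest $\lambda_i$. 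One concrete route is a two-coordinate rotation $(u_i,u_j) \mapsto (\cos\theta, \sin\theta)$ with $\lambda_i > \lambda_j$, differentiating in $\theta$ and using Gaussian integration by parts.

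This comparison is the step I expect to be the main obstacle. The difficulty is that $\vu^\top S$ and $T$ are not independent, and their dependence changes with $\vu$. A pure variance argument, noting that $\var(\vu^\top S) = \vu^\top\mSigma\vu$ is maximized at $\vv_1$, does not immediately suffice because $\phi_T$ is not linear. If the exact rotation derivative cannot be signed, my fallback is twofold. First, I would prove the claim in the regime where $r^2$ dominates $T$ and in the regime where $T$ dominates $r^2$; in both regimes the integrand is asymptotically a monotone function of $(\vu^\top S)^2$ alone, and $\vv_1$ wins by stochastic dominance of $|\vu^\top S|$. Second, I would check the second-order condition at $\vv_1$ to confirm that it is a local maximum rather than merely a critical point.

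Finally, with $\vu = \vv_1$ fixed, the remaining objective is a function of the scalar $r$ only. As remarked in the main text, the $|f|$ surrogate does not capture the dependence on $\sigma_0$. I would therefore define $\tilde r$ by returning to the original criterion, namely the expected conditional entropy $\E_S[H(\Phi(\cdot))]$ evaluated along $\vx = r\vv_1$. Since the true objective is to minimize this entropy (equivalently, to maximize the information gain in \eqref{eq:infogain_hyperplane}), the optimizer should be read as the extremizer of this one-dimensional function. I would note that the sign convention should match the $\argmin$ used in Sec.~\ref{sec:query_synthesis}.

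Existence of a finite positive optimizer follows from the limiting behavior of the integrand. As $r \to 0$ the integrand tends to $0$. As $r \to \infty$ the integrand behaves like $4r|t|/(\sigma_0\sqrt{2}\,r^2) \to 0$. The function is continuous and positive in between, so its extremum is attained at a finite positive $\tilde r$. This gives $\tilde{\vx} = \tilde r\,\vv_1$.
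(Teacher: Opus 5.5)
\textbf{Verdict: the proposal has a genuine gap in the direction step, and the magnitude part is a redefinition rather than a proof.}

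\textbf{Direction step.} Your architecture matches the paper's. You write $\vx = r\vu$, use that the integrand is increasing in $|\vu^\top S|$ for fixed $\|S\|_2^2$ to settle the direction, and then reduce the magnitude to a one-dimensional search. The paper closes the direction step by asserting that the integrand is $\varphi(|\vu^\top S|)$ for a single increasing $\varphi$, and then invoking $|\vu^\top S| \stackrel{d}{=} \sqrt{\vu^\top \mSigma \vu}\,|C|$. You rightly reject this: $\varphi$ must depend on $\|S\|_2^2$, and the joint law of $(\vu^\top S, \|S\|_2^2)$ changes with $\vu$.

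But you do not replace that argument. The Schur-type monotonicity in $(u_1^2,\dots,u_d^2)$ is only stated as an aim, and the rotation derivative is never signed. The fallback does not prove the claim either:
\begin{itemize}
\item Evenness makes every eigenvector a critical point, and a second-order check certifies $\vv_1$ only as a \emph{local} maximizer.
\item Limiting regimes say nothing about intermediate $r$, which is the case you need.
\item Your small-$r$ regime is misdescribed. As $r \to 0$, $F(r\vu)/r \to \frac{4}{\sqrt{2}\,\sigma_0}\E\bigl[|\vu^\top S|/\|S\|_2^2\bigr]$, which is still coupled to $\|S\|_2^2$. So stochastic dominance of $|\vu^\top S|$ does not apply there. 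Only the $r \to \infty$ limit reduces, at leading order, to a function of $|\vu^\top S|$ alone.
\end{itemize}
So the key step, that $\vv_1$ maximizes $\vu \mapsto F(r\vu)$ for every fixed $r$, remains unproved.

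One workable reduction is to condition on $\|S\|_2 = \rho$. The integrand is then increasing in $|\vu^\top \Theta|$ with $\Theta = S/\|S\|_2$. The conditional density of $\Theta$ on the unit sphere is proportional to $\exp(-\tfrac{\rho^2}{2}\theta^\top \mSigma^{-1}\theta)$. It therefore suffices to show that $|\vv_1^\top\Theta|$ stochastically dominates $|\vu^\top\Theta|$ under this law.
\begin{itemize}
\item For $d=2$ this follows by rearrangement on the circle. After identifying antipodes, $\{|\vu^\top\Theta|>c\}$ is an arc of fixed length, and the angular density decreases away from $\pm\vv_1$.
\item For $d\ge 3$ and $\vu$ not an eigenvector, it is a lemma you would still have to prove.
\end{itemize}

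\textbf{Magnitude.} Neither you nor the paper derives the magnitude part. Along $\vv_1$, the maximizer of $F$ has magnitude $\argmax_r \phi(r)$. This cannot depend on $\sigma_0$, which is an overall factor. The entropy-based $\tilde r$ generally does depend on $\sigma_0$, as the paper itself observes. The paper simply substitutes the entropy criterion. Your ``define $\tilde r$ by returning to the original criterion'' is the same substitution, not a proof about the maximizer of $F$.

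You are right that the statement's $\argmax$ should be an $\argmin$. For $r \in (0,\infty)$ the argument of $\Phi$ is nonzero almost surely, so the expected entropy is strictly below $\log 2$. It tends to $\log 2$ as $r \to 0$ or $r \to \infty$, so it has no finite maximizer. The same limits, together with continuity, give a finite positive minimizer. Your existence argument as written instead concerns the surrogate $\phi$ (integrand $\to 0$ at both ends), not the entropy criterion you adopted.
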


\begin{proof}

\textbf{Step 1: Optimal direction.}
We write $\vx = r \vu$ with $r > 0$ and $\|\vu\|_2 = 1$.
For a fixed realization of $S$, let $a := \|S\|_2^2$ and $t := (\vu^\top S)^2$.
Let the integrand be
\begin{equation*}
    f(\vu, S) = \frac{4 r\sqrt{t}} {\sigma_0 \sqrt{2(a + r^2)^2 + 8r^2 t}}.
\end{equation*}
Treating $a$ and $r$ as constants, differentiate $f^2$ with respect to $t$:
\begin{equation*}
    \frac{d}{dt}\left[\frac{16 t}{2(a+r^2)^2 + 8r^2 t}\right] = \frac{32(a+r^2)^2}{\left[2(a+r^2)^2 + 8r^2 t\right]^2} > 0.
\end{equation*}
Hence $f(\vu, S)$ is strictly increasing in $|\vu^\top S|$ for every fixed $S$ and $r > 0$.
Since expectation preserves pointwise monotone ordering,
\begin{equation*}
    \underset{\|\vu\|_2=1}{\arg\max} \E[f(\vu,S)] = \underset{\|\vu\|_2=1}{\arg\max} \E \left[\varphi \left(|\vu^\top S|\right)\right]
\end{equation*}
for some increasing function $\varphi$. For $S \sim \NN(\vzero,\mSigma)$,
\begin{equation*}
    \vu^\top S \sim \NN(\vzero,\, \vu^\top \mSigma \vu), \quad\text{so}\quad |\vu^\top S| \stackrel{d}{=} \sqrt{\vu^\top \mSigma \vu} |C|, \quad C \sim \NN(0,1).
\end{equation*}
Therefore,
\begin{equation*}
    \E \left[\varphi \left(|\vu^\top S|\right)\right] = \E \left[\varphi \left(\sqrt{\vu^\top \mSigma \vu}\,|C|\right)\right],
\end{equation*}
which is increasing in $\vu^\top \mSigma \vu$. The term $\vu^\top \mSigma \vu$ is maximized over the unit sphere at $\vu = \vv_1$, so
\begin{equation*}
    \vu^\star = \vv_1.
\end{equation*}

\textbf{Step 2: Optimal magnitude via 1D optimization.}
With $\vu = \vv_1$ fixed, define $Z \sim \NN(0, \bI_d)$ via the spectral decomposition $S = U\Lambda^{1/2}Z$, so that
\begin{equation*}
    \vv_1^\top S = \sqrt{\lambda_1}\,Z_1, \qquad \|S\|_2^2 =: Q = \sum_{i=1}^d \lambda_i Z_i^2.
\end{equation*}
The reduced 1D objective is
\begin{equation*}
    \phi(r) = \frac{4r\sqrt{\lambda_1}}{\sigma_0} \mathbb{E} \left[ \frac{|Z_1|}{\sqrt{2(Q + r^2)^2 + 8r^2\lambda_1 Z_1^2}} \right].
\end{equation*}
Computing the optimal magnitude via 1D numerical optimization of $\arg \max ~\phi(r)$ results in $\tilde{r}$ being independent of $\sigma_0$. In practice, we observe that the optimal magnitude varies with $\sigma_0$ and hence we compute it as
\begin{equation}
    \tilde{r} = \underset{r}{\argmax} ~\underset{S}{\E} \left[  H\left( \Phi (f(r \vv_1, S)) \right) \right]
\end{equation}
initialized at $r_0 = \sqrt{\operatorname{tr}(\Sigma)}$. \\
\\
\textbf{Conclusion.}
Combining both steps, the maximizer of $F(\vx)$ is
\begin{equation*}
    \boxed{\tilde{\vx} = \tilde{r} \vv_1,}
\end{equation*}
where $\tilde{r}$ is computed via 1D optimization.

\end{proof}

In order to get an intuition for the optimal magnitude, let's try to compute its approximate value. We have $\E[|\vx^\top S|] = \sqrt{\frac{2 \lambda_1}{\pi}} r $, and $\E[\|S\|^2 + \|\vx\|^2] = \text{Tr}(\Sigma) + r^2 $, and thus
\begin{equation*}
    \tilde{|\vx|} \approx \underset{|\vx|}{\argmax~} \frac{\E [4|\vx^\top S|]}{\E \left[\sigma_0 \sqrt{2 \left( \|S\|^2 + r^2 \right)^2 + 8 \left( \vx^\top S \right)^2}\right]} \approx \underset{r}{\argmax~} \frac{4 \sqrt{\frac{2 \lambda_1}{\pi}} r}{\sqrt{2 \left( \text{Tr}(\Sigma) + r^2 \right)^2 + \frac{16 \lambda_1}{\pi}r^2 }} = m(r)
\end{equation*}
where $\lambda_1$ is the principal eigenvalue of $\mSigma$. \\
\\
To find the optimal value of $r$, we differentiate $m^2$ with respect to $r$ and equate it to 0 to obtain 
\begin{equation*}
    r = \|\tilde{\vx}\| = \sqrt{\text{Tr}(\Sigma)}.
\end{equation*}
The magnitude $\|\vx\|$ now depends on the Tr$(\Sigma)$ which represents the total variance across all dimensions of the random vector $W$. As the posterior shrinks, so does $\|\vx\|$ and hence, the selected points move closer to $\vmu$. 
In practice, we observe that the optimal magnitude varies with $\sigma_0$ and thus, use 1D optimization to compute its value. The principal eigenvector $\vv_1$ is still the optimal direction. 

\section{Calculating Mahalanobis distances} \label{sec:supp-mahalanobis_distance}
Consider 
\[ I(W; Y | (\vp,\vq)) = H(Y | (\vp,\vq)) - H(Y | W, (\vp,\vq)). \]
\\
For $\pi = \E_W [\P(Y=1|W, (\vp,\vq))]$, we have
\begin{align*}
    H(Y | (\vp,\vq)) &= H \left(\E_W [Y|W, (\vp,\vq)] \right) \\
    &= H(\pi) \\
    &= -\pi \log \pi - (1-\pi) \log(1-\pi).
\end{align*}
Further, for $p = \P(Y=1|W, (\vp,\vq))$, we have
\begin{align*}
    -H(Y|W, (\vp,\vq)) &= -\E_W [H(Y|W, (\vp,\vq))] \\
    &= -\E_W [-p \log p - (1-p) \log(1-p)] \\
    &= \E_W [p \log p + (1-p) \log(1-p)].
\end{align*}
The objective function can now be written as
\begin{equation}
    I(\vp,\vq) = H(\pi) + \E_W \left[ g(f(\vw)) \right]
\end{equation}
where
\begin{enumerate}
    \item 
    \begin{equation*}
        f(\vw) = \frac{\quad \|\vw - \vq\|_2^2 - \|\vw - \vp\|_2^2}{ \sigma_0\sqrt{ \|\vw - \vq\|_2^4 +  \|\vw - \vp\|_2^4}}
    \end{equation*}
    \item The link function and entropy-related term $g$, where $\Phi(x)$ is noise distribution CDF
    \begin{equation*}
        g(f) = \Phi(f) \log(\Phi(f)) + \Phi(-f) \log(\Phi(-f))
    \end{equation*}
\end{enumerate}

\subsection{Gradient Derivation}
The gradient with respect to $\vp$ is obtained via the chain rule
\begin{align*}
    \gradp I(\vp,\vq) &= \frac{dH}{d\pi} \gradp \pi + \gradp \left(\E_W \left[g(f(\vw)) \right] \right) \\
    &= \frac{dH}{d\pi} \gradp \pi + \E_{W} \left[ \frac{dg}{df} \gradp f(\vw) \right]
\end{align*}

\subsubsection*{Derivation of $\frac{dH}{d\pi}$}
\begin{equation*}
    \frac{dH}{d\pi} = \log \left( \frac{1 - \pi}{\pi} \right)
\end{equation*}

\subsubsection*{Derivation of $\gradp \pi$}
\begin{equation*}
    \gradp \pi = \E_{W} \left[ \Phi'(f) \gradp f(\vw) \right]
\end{equation*}

\subsubsection*{Derivation of $\frac{dg}{df}$}
We use the property $\Phi'(-f) = \Phi'(f)$.
\begin{align*}
    \frac{dg}{df} &= \left[ \Phi'(f)\log(\Phi(f)) + \Phi(f)\frac{\Phi'(f)}{\Phi(f)} \right] - \left[ \Phi'(-f)\log(\Phi(-f)) + \Phi(-f)\frac{\Phi'(-f)}{\Phi(-f)} \right] \\
    &= \Phi'(f)\log(\Phi(f)) + \Phi'(f) - \Phi'(-f)\log(\Phi(-f)) - \Phi'(-f) \\
    &= \Phi'(f) \left[ \log(\Phi(f)) - \log(\Phi(-f)) \right] \\
    &= \Phi'(f) \log\left(\frac{\Phi(f)}{\Phi(-f)}\right).
\end{align*}
Thus, we have
\begin{equation}
    \frac{dg}{df} = \Phi'(f) \log\left(\frac{\Phi(f)}{\Phi(-f)}\right).
\end{equation}
\subsubsection*{Derivation of $\gradp f(\vw)$}
Let $A = \|\vw - \vq\|^2$, $B = \|\vw - \vp\|^2$, $S = \sqrt{A^2 + B^2}$, 
$N = A - B$ and $D = \sigma_0 S$. \\
\\
We have
\begin{equation*}
    \gradp N = 2(\vw-\vp)
\end{equation*}
\begin{equation}\label{eq:gradp_s}
    \gradp S = \frac{1}{2S} \gradp (A^2 + B^2) = \frac{B}{S}\left(-2(\vw-\vp)\right)
\end{equation}
Using the quotient rule $\gradp(N/D) = ((\gradp N) D - N (\gradp D))/D^2$, we get
\begin{align*}
    \gradp f &= \frac{2(\vw-\vp) D - N (B/S)(-2\sigma_0(\vw-\vp))}{D^2} \\
    &= \frac{2(\vw-\vp)}{S D^2} \left[ D S + \sigma_0 N B \right] \\
    &= \frac{2(\vw-\vp)}{S D^2} \left[ \sigma_0S^2 + \sigma_0 NB \right] \\
    &= \frac{2(\vw-\vp)}{\sigma_0 S^3} \left[ A^2 + B^2 + (A - B)B \right] \\
    &= \frac{2(\vw-\vp)}{\sigma_0 S^3} \left[ A(A + B) \right].
\end{align*}
Thus, we have
\begin{equation}
    \gradp f = \frac{2(\vw-\vp)}{\sigma_0 S^3} \left[ A(A + B) \right].
\end{equation}
By symmetry, the gradient with respect to $\vq$ is
\begin{equation}
    \gradq f = \frac{-2(\vw-\vq)}{\sigma_0 S^3} \left[ B(A + B) \right].
\end{equation}

\subsubsection*{Final Expression}
Combining the results, we obtain
\begin{equation*}
    \gradp I(\vp,\vq) = \log \left( \frac{1 - \pi}{\pi} \right) \E_{W} \left[ \Phi'(f) \gradp f(\vw) \right] + \E_{W} \left[ \Phi'(f) \log\left(\frac{\Phi(f)}{\Phi(-f)}\right)\gradp f(\vw) \right]
\end{equation*}
\begin{equation}
    \gradp I(\vp,\vq) = \E_{W} \left[ \left( \log \left( \frac{1 - \pi}{\pi} \right) + \log\left(\frac{\Phi(f)}{\Phi(-f)}\right) \right) \Phi'(f) \gradp f(\vw) \right]
\end{equation}
\begin{equation}
    \gradq I(\vp,\vq) = \E_{W} \left[ \left( \log \left( \frac{1 - \pi}{\pi} \right) + \log\left(\frac{\Phi(f)}{\Phi(-f)}\right) \right) \Phi'(f) \gradq f(\vw) \right].
\end{equation}

\subsection{Hessian Derivation}

The full Hessian of the mutual information $I(\vz)$ with respect to the parameter vector $\vz = (\vp, \vq)$ is a $2D \times 2D$ block matrix
\[
\mH(\vz) = \nabla_{\vz}^2 I = 
\begin{pmatrix} 
    \hessp I & \gradq (\gradp I) \\
    \gradp (\gradq I) & \hessq I 
\end{pmatrix}.
\]

\subsection*{Diagonal Blocks}

We first derive the diagonal blocks - $\hessp I$ and $\hessq I$. \\
\\
We have
\begin{align*}
    \gradp I(\vp,\vq) &= \log \left( \frac{1 - \pi}{\pi} \right) \E_{W} \left[ \Phi'(f) \gradp f(\vw) \right] + \E_{W} \left[ \Phi'(f) \log\left(\frac{\Phi(f)}{\Phi(-f)}\right)\gradp f(\vw) \right] \\
    &= \log \left( \frac{1 - \pi}{\pi} \right) \gradp \pi + \gradp \E_W \left[ g(f(\vw)) \right] \\
    &= G_{\vp1} + G_{\vp2}.
\end{align*}
The Hessian with respect to $\vp$ is found by differentiating the gradient. \\
\\
Using the product rule for vector calculus, we obtain
\begin{equation*}
    \gradp G_{\vp1} = (\gradp \pi) \left( \gradp \log \left( \frac{1 - \pi}{\pi} \right) \right)^\top + \log \left( \frac{1 - \pi}{\pi} \right) \hessp \pi
\end{equation*}
where
\begin{equation*}
    \gradp \log \left( \frac{1 - \pi}{\pi} \right) = - \frac{1}{\pi (1 - \pi)} \gradp \pi
\end{equation*}
and
\begin{equation*}
    \hessp \pi = \E_{W} \left[\Phi''(f) \gradp f \left( \gradp f \right)^\top + \Phi'(f) \hessp f \right].
\end{equation*}
Thus, we have
\begin{equation}
    \gradp G_{\vp1} = - \frac{1}{\pi (1 - \pi)} (\gradp \pi) (\gradp \pi)^\top + \log \left( \frac{1 - \pi}{\pi} \right) \E_{W} \left[\Phi''(f) \gradp f \left( \gradp f \right)^\top + \Phi'(f) \hessp f \right]
\end{equation}
and
\begin{align*}
    \gradp G_{\vp2} &= \hessp \E_W \left[ g(f(\vw)) \right] \\
    \\
    &= \gradp \E_{W} \left[ \frac{dg}{df} \gradp f(\vw) \right]\\
    \\
    &= \E_{W} \left[\gradp \left( \frac{dg}{df} \right) (\gradp f)^T + \frac{dg}{df} \hessp f \right]\\
    \\
    &= \E_{W} \left[ \frac{d^2g}{df^2} (\gradp f)(\gradp f)^T + \frac{dg}{df} \hessp f \right]
\end{align*}

\subsubsection*{Derivation of $\frac{d^2g}{df^2}$}

We differentiate $\frac{dg}{df} = \log\left(\frac{\Phi(f)}{\Phi(-f)}\right) \cdot \Phi'(f)$ using the product rule as below.
\begin{align*}
    \frac{d^2g}{df^2} &= \frac{d}{df}\left(\log\left(\frac{\Phi(f)}{\Phi(-f)}\right) \cdot \Phi'(f) \right) \\
    &= \frac{d}{df}\left(\Phi'(f) \cdot [\log (\Phi(f)) - \log({\Phi(-f)})] \right) \\
    &= \Phi'(f) \left[ \frac{\Phi'(f)}{\Phi(f)} - \frac{-\Phi'(f)}{\Phi(-f)} \right] + \log\left(\frac{\Phi(f)}{\Phi(-f)}\right) \cdot \Phi''(f) \\
    &= \left(\Phi'(f)\right)^2 \left[ \frac{\Phi(-f) + \Phi(f)}{\Phi(f) \Phi(-f)} \right] + \log\left(\frac{\Phi(f)}{\Phi(-f)}\right) \cdot \Phi''(f).
\end{align*}
Thus, we have
\begin{equation}
    \frac{d^2g}{df^2} = \frac{\left(\Phi'(f)\right)^2} {\Phi(f) \Phi(-f)} + \log\left(\frac{\Phi(f)}{\Phi(-f)}\right) \cdot \Phi''(f).
\end{equation}

\subsubsection*{Derivation of $\hessp f(\vw)$}
Let $C_{\vp} = \frac{2A(A+B)}{\sigma_0 S^3}$, so that $\gradp f = C_{\vp}(\vw-\vp)$. \\
\\
We use the product rule $\hessp f = (\vw-\vp)(\gradp C_{\vp})^T + C_{\vp} \gradp(\vw-\vp)$. \\
\\
First, $\gradp(\vw-\vp) = -\bI$, where $\bI$ is the identity matrix. \\
\\ 
Next, we find the gradient of the scalar $C_{\vp}$ with respect to $\vp$. We have
\begin{align*}
    \gradp C_{\vp} = \gradp \left( \frac{2A(A+B)}{\sigma_0 S^3} \right) = \gradp \left( \frac{U}{V} \right).
\end{align*}
Further
\begin{equation*}
    \gradp U = 2A \gradp B = -4 A (\vw - \vp)
\end{equation*}
and
\begin{equation*}
    \gradp V = 3 \sigma_0 S^2 \gradp S.
\end{equation*}
Substituting $\gradp S$ from \eqref{eq:gradp_s}, we get
\begin{equation*}
    \gradp V = 3 \sigma_0 S^2 \frac{B}{S}(-2(\vw - \vp)) = -6 \sigma_0 SB (\vw - \vp).
\end{equation*}
Now, we have
\begin{align*}
    \gradp C_{\vp} &= \frac{\sigma_0 S^3 (-4 A (\vw - \vp)) - 2A(A+B) (-6 \sigma_0 SB (\vw - \vp))}{\left( \sigma_0 S^3 \right)^2} \\
    &= \frac{ 4 \sigma_0 SA (\vw - \vp) \left[ -S^2 + 3B(A+B)\right]}{\sigma_0^2 S^6 } \\
    &= \frac{ 4 A(\vw - \vp) \left[ 3B(A+B) - S^2 \right]}{\sigma_0 S^5 } 
\end{align*}
Combining the terms gives the Hessian of $f$
\begin{align*}
    \hessp f &= (\vw-\vp) \left(\frac{ 4 A(\vw - \vp) \left[ 3B(A+B) - S^2 \right]}{\sigma_0 S^5} \right)^\top - \frac{2A(A+B)}{\sigma_0 S^3} \bI \\
    \\
    &= \frac{2A}{\sigma_0 S^3} \left[ \frac{2 (3B(A+B) - S^2)}{S^2} (\vw-\vp) (\vw-\vp)^\top - (A+B)\bI \right] \\
    \\
    &= \frac{2A}{\sigma_0 S^3} \left[ \frac{2 T_{\vp}}{S^2} (\vw-\vp) (\vw-\vp)^\top - (A+B)\bI \right]
\end{align*}
where $T_{\vp} = (3B(A+B) - S^2)$. \\
\\
Finally, we get
\begin{align*}
    \gradp G_{\vp2} = \E_{W} \left[ \left( \frac{\left(\Phi'(f)\right)^2} {\Phi(f) \Phi(-f)} + \log\left(\frac{\Phi(f)}{\Phi(-f)}\right) \cdot \Phi''(f) \right) (\gradp f)(\gradp f)^T + \left( \Phi'(f) \log\left(\frac{\Phi(f)}{\Phi(-f)}\right) \right) \hessp f \right].
\end{align*}

\subsubsection*{Derivation of $ \gradq G_{\vq2}$}
Similarly, we have
\begin{equation}
    \gradq G_{\vq2} = \E_{W} \left[ \frac{d^2g}{df^2} (\gradq f)(\gradq f)^T + \frac{dg}{df} \hessq f \right]
\end{equation}
with
\begin{align*}
    \hessq f &= \left(-(\vw - \vq)\right)(\gradq C_{\vq})^T + C_{\vq} \gradq\left(-(\vw - \vq)\right) \\
    &=\left(-(\vw - \vq)\right)(\gradq C_{\vq})^T + C_{\vq} \bI
\end{align*}
where
\begin{equation*}
    C_{\vq} = \frac{2B(A+B)}{\sigma_0 S^3}
\end{equation*}
and
\begin{equation*}
    \gradq C_{\vq} = \frac{ 4 B(\vw - \vq) \left[ 3A(A+B) - S^2 \right]}{\sigma_0 S^5 }. 
\end{equation*}
Thus, we have
\begin{align*}
    \hessq f &= \left(-(\vw - \vq)\right)) \left( \frac{ 4 B(\vw - \vq) \left[ 3A(A+B) - S^2 \right]}{\sigma_0 S^5 } \right)^\top + \frac{2B(A+B)}{\sigma_0 S^3} \bI \\
    \\
    &= \frac{2B}{\sigma_0 S^3} \left[(A+B)\bI - \frac{2 (3A(A+B) - S^2)}{S^2} (\vw - \vq) (\vw - \vq)^\top \right] \\
    \\
    &= \frac{2B}{\sigma_0 S^3} \left[(A+B)\bI - \frac{2 T_{\vq}}{S^2} (\vw - \vq) (\vw - \vq)^\top \right]
\end{align*}
where $T_{\vq} = (3A(A+B) - S^2)$.

\subsubsection*{Final Expression}
Assembling all components gives the final expression for the Hessian
\begin{align}
    \hessp I &=  \gradp G_{\vp1} + \gradp G_{\vp2} \\
    &= - \frac{1}{\pi (1 - \pi)} \left( \gradp \pi \right) \left( \gradp \pi \right)^\top + \E_{W} \left[\Psi(\vw) \left( \gradp f \right) \left( \gradp f \right)^\top + \Omega(\vw) \hessp f \right]
\end{align}
and
\begin{equation}
    \hessq I = - \frac{1}{\pi (1 - \pi)} \left( \gradq \pi \right) \left( \gradq \pi \right)^\top + \E_{W} \left[\Psi(\vw) \left( \gradq f \right) \left( \gradq f \right)^\top + \Omega(\vw) \hessq f \right]
\end{equation}
where 
\begin{equation*}
    \Psi(\vw) = \frac{\left(\Phi'(f)\right)^2} {\Phi(f) \Phi(-f)} + \log\left(\frac{ (1 - \pi) \Phi(f)}{\pi \Phi(-f)}\right) \cdot \Phi''(f),
\end{equation*}
\\
\begin{equation*}
    \Omega(\vw) = \log\left(\frac{ (1 - \pi) \Phi(f)}{\pi \Phi(-f)}\right) \Phi'(f)
\end{equation*}
\\
and $\gradp \pi$, $\gradq \pi$, $\gradp f$, $\gradq f$, $\hessp f$ and $\hessq f$ are the expressions derived above.

\subsection*{Off-Diagonal Hessian Blocks}

We will now derive the off-diagonal block $\gradq (\gradp I)$. We have
\begin{equation*}
\gradq \left( \gradp I \right) = \gradq \left( G_{\vp1} + G_{\vp2} \right).
\end{equation*}
Further
\begin{equation*}
    \gradq G_{\vp1} = (\gradp \pi) \left( \gradq \log \left( \frac{1 - \pi}{\pi} \right) \right) ^\top + \log \left( \frac{1 - \pi}{\pi} \right) \gradq (\gradp \pi).
\end{equation*}
where
\begin{equation*}
    \gradq \log \left( \frac{1 - \pi}{\pi} \right) = - \frac{1}{\pi (1 - \pi)} \gradq \pi
\end{equation*}
and
\begin{equation*}
    \gradq (\gradp \pi) = \E_{W} \left[\Phi''(f) \gradp f \left( \gradq f \right)^\top + \Phi'(f) \gradq (\gradp f) \right]
\end{equation*}
Next,
\begin{align*}
\gradq G_{\vp2} &= \gradq \left( \E_{W}\left[\frac{dg}{df}\gradp f\right] \right) \\
&= \E_{W}\left[ \gradq \left( \frac{dg}{df} \cdot \gradp f \right) \right] \\
&= \E_{W}\left[(\gradp f) \left(\gradq \frac{dg}{df}\right) ^T + \left(\frac{dg}{df}\right) \left(\gradq (\gradp f)\right) \right]
\end{align*}

\subsubsection*{Derivation of $\gradq \left(\frac{dg}{df}\right)$}
We use the chain rule. $\frac{dg}{df}$ is a scalar function of $f$, which is a function of $\vq$.
\begin{equation}
    \gradq \left(\frac{dg}{df}\right) = \frac{d}{df}\left(\frac{dg}{df}\right) \cdot \gradq f = \left(\frac{d^2 g}{df^2}\right) \gradq f
\end{equation}
We have already derived $\frac{d^2 g}{df^2}$ and $\gradq f$.

\subsubsection*{Derivation of $\gradq (\gradp f)$}
This is the most complex term. We start with the expression for $\gradp f$:
\begin{equation*}
    \gradp f = C_{\vp}(\vw-\vp), \quad \text{where} \quad C_{\vp}=\frac{2A(A+B)}{\sigma_0 S^3}
\end{equation*}
We differentiate $\gradp f$ with respect to $\vq$ using the product rule:
\begin{align*}
\gradq (\gradp f) &= \gradq \left( C_{\vp} (\vw-\vp) \right) \\
&= (\vw-\vp)(\gradq C_{\vp})^T + C_{\vp} \underbrace{\left( \gradq (\vw-\vp) \right)}_{\text{= } \mzero} \\
&= (\vw-\vp)(\gradq C_{\vp})^T
\end{align*}
The second term is the zero matrix because $(\vw-\vp)$ is not a function of $\vq$. Next, we find the gradient of the scalar $C_{\vp}$ with respect to $\vq$. We have
\begin{align*}
    \gradq C_{\vp} = \gradq \left( \frac{2A(A+B)}{\sigma_0 S^3} \right) = \gradq \left( \frac{U}{V} \right).
\end{align*}
Further
\begin{align*}
    \gradq U &= 2 \gradq (A^2 + AB) \\
    &= 2 (2A + B) \gradq A = -4 (2A + B) (\vw - \vq)
\end{align*}
and
\begin{equation*}
    \gradq V = 3 \sigma_0 S^2 \gradq S.
\end{equation*}
Substituting $\gradq S$, we get
\begin{equation*}
    \gradq V = 3 \sigma_0 S^2 \frac{A}{S}(-2(\vw - \vq)) = -6 \sigma_0 SA (\vw - \vq).
\end{equation*}
Now, we have
\begin{align*}
    \gradq C_{\vp} &= \frac{\sigma_0 S^3 (-4 (2A + B) (\vw - \vq)) - 2A(A+B) (-6 \sigma_0 SA (\vw - \vq))}{\left( \sigma_0 S^3 \right)^2} \\
    &= \frac{ 4 \sigma_0 S (\vw - \vq) \left[ -S^2 (2A + B) + 3A^2(A+B)\right]}{\sigma_0^2 S^6 } \\
    &= \frac{ 4 (\vw - \vq) \left[ 3A^2(A+B) - S^2 (2A + B) \right]}{\sigma_0 S^5 } 
\end{align*}
Substituting this back, the mixed-partial Hessian of $f$ is
\begin{align*}
    \gradq (\gradp f) &= \left(\frac{ 4 \left[ 3A^2(A+B) - S^2 (2A + B) \right]}{\sigma_0 S^5 }\right) (\vw-\vp)(\vw - \vq)^\top \\
    \\
    &= \frac{ 4 T_{\vp \vq}}{\sigma_0 S^5 } (\vw-\vp)(\vw - \vq)^\top
\end{align*}
where $T_{\vp \vq} = 3A^2(A+B) - S^2 (2A + B)$.
This is a scalar multiplied by a $D \times D$ outer product matrix.\\
\\
We plug the expressions back into the main equation for $\gradq (\gradp I)$
\begin{equation}
    \gradq G_{\vp2} = \E_{W}\left[ \left(\frac{d^2 g}{df^2}\right) (\gradp f) (\gradq f)^T + \left(\frac{dg}{df}\right) (\gradq (\gradp f)) \right]
\end{equation}

\subsubsection*{Final Expression}
We plug the expressions back into the main equation for $\gradq (\gradp I)$
\begin{equation}
    \gradq (\gradp I) = - \frac{1}{\pi (1 - \pi)} \left( \gradp \pi \right) \left( \gradq \pi \right)^\top + \E_{W} \left[\Psi(\vw) \left( \gradp f \right) \left( \gradq f \right)^\top + \Omega(\vw) \gradq (\gradp f) \right]
\end{equation}
\\
The other off-diagonal block is its transpose: $\gradp (\gradq I) = (\gradq (\gradp I))^T$, i.e.
\begin{equation}
    \gradp (\gradq I) = \nabla = - \frac{1}{\pi (1 - \pi)} \left( \gradq \pi \right) \left( \gradp \pi \right)^\top + \E_{W} \left[\Psi(\vw) \left( \gradq f \right) \left( \gradp f \right)^\top + \Omega(\vw) \gradp (\gradq f) \right]
\end{equation}

\subsection{Summary}

The terms of the Hessian are as below
\begin{equation*}
    \nabla_{\vz}^2 I = 
\begin{pmatrix} 
    \hessp I & \gradq (\gradp I) \\
    \gradp (\gradq I) &  \hessq I 
\end{pmatrix}.
\end{equation*}
Let $A = \|\vw-\vq\|^2$, $B = \|\vw-\vp\|^2$, and $S = \sqrt{A^2 + B^2}$. We have
\begin{equation*}
    \hessp I(\vz^*) = - \frac{1}{\pi (1 - \pi)} \left( \gradp \pi \right) \left( \gradp \pi \right)^\top + \E_{W} \left[\Psi(\vw) \left( \gradp f \right) \left( \gradp f \right)^\top + \Omega(\vw) \hessp f \right]
\end{equation*}
where $f = f(\vw)$ as defined in \eqref{eq:func_f_def}, 
\begin{equation*}
    \pi = \E_{W} \left[ \P\left( Y = 1|W, (\vp, \vq) \right) \right], \quad
    \gradp \pi = \E_{W} \left[ \Phi'(f) \gradp f(\vw) \right],
\end{equation*}
\begin{equation*}
    \Psi(\vw) = \frac{\left(\Phi'(f)\right)^2} {\Phi(f) \Phi(-f)} + \log\left(\frac{ (1 - \pi) \Phi(f)}{\pi \Phi(-f)}\right) \cdot \Phi''(f),
\end{equation*}
\begin{equation*}
    \Omega(\vw) = \log\left(\frac{ (1 - \pi) \Phi(f)}{\pi \Phi(-f)}\right) \Phi'(f), \quad  \gradp f = \frac{2(\vw-\vp)}{\sigma_0 S^3} \left[ A(A + B) \right],
\end{equation*}
and
\begin{equation*}
     \hessp f = \frac{2A}{\sigma_0 S^3} \left[ \frac{2T_{\vp}}{S^2} (\vw-\vp)(\vw-\vp)^\top - (A + B) \bI \right]
\end{equation*}
where $T_{\vp} = (3B(A + B) - S^2)$.
Next, we have
\begin{equation*}
    \hessq I(\vz^*) = - \frac{1}{\pi (1 - \pi)} \left( \gradq \pi \right) \left( \gradq \pi \right)^\top + \E_{W} \left[\Psi(\vw) \left( \gradq f \right) \left( \gradq f \right)^\top + \Omega(\vw) \hessq f \right]
\end{equation*}
with
\begin{equation*}
    \gradq f = \frac{-2(\vw-\vq)}{\sigma_0 S^3} \left[ B(A + B) \right],
\end{equation*}
and
\begin{equation*}
     \hessq f = \frac{2B}{\sigma_0 S^3} \left[ (A + B) \bI - \frac{2T_{\vq}}{S^2} (\vw-\vq)(\vw-\vq)^\top \right]
\end{equation*}
where $T_{\vq} = (3A(A + B) - S^2)$.
Then
\begin{equation*}
    \gradq (\gradp I) (\vz^*)= - \frac{1}{\pi (1 - \pi)} \left( \gradp \pi \right) \left( \gradq \pi \right)^\top + \E_{W} \left[\Psi(\vw) \left( \gradp f \right) \left( \gradq f \right)^\top + \Omega(\vw) \gradq (\gradp f) \right]
\end{equation*}
with
\begin{equation*}
    \gradq (\gradp f) = \frac{4 T_{\vp \vq} }{\sigma_0 S^5} (\vw-\vp)(\vw-\vq)^\top
\end{equation*} 
where $T_{\vp \vq} = 3A^2(A + B) - S^2(2A + B)$,
and by symmetry
\begin{equation*}
    \gradp (\gradq I) = (\gradq (\gradp I))^\top.
\end{equation*}

\section{\textit{Pair Opt-Dist} heuristic} \label{sec:supp-pair_opt_dist_heuristic}
As derived in Sec.~\ref{sec:query_synthesis}, an optimal synthesized query has the following distinct properties.
\begin{itemize}
    \item \textbf{Midpoint Property:} To ensure equiprobable outcomes, the query midpoint $\vb$ must coincide with the posterior mean $\vmu$. 
    \item \textbf{Alignment Property:} To minimize conditional entropy, the normal vector of the query hyperplane $\va$ must align with $\vv_1$, the principal eigenvector of the covariance matrix $\mSigma$.  
    \item \textbf{Magnitude Property}: To maintain high response confidence (avoiding pairs that are too similar or too disparate), the magnitude of the difference must be close to $\tilde{r}$.
\end{itemize}
We can combine these properties into a new heuristic for any candidate pair $(\vp, \vq, \lambda)$ as below
\begin{equation*}
    \eta(\vp,\vq, \lambda) = \sum_{i=1}^{d} \frac{(\vb_i - \vmu_i)^2}{\mSigma_{ii}} + \lambda ||(\vp - \vq) - 2\tilde{r} \vv_1||_2^2.
\end{equation*}
The first term is a Diagonal Mahalanobis Midpoint Penalty. By dividing by the diagonal elements of the covariance matrix ($\mSigma_{ii}$), the algorithm penalizes midpoint deviations in dimensions where we are already highly certain, but forgive deviations in dimensions where the user's preference is still unknown. The second term enforces both the Alignment and Magnitude Properties simultaneously by comparing the candidate difference vector $(\vp - \vq)$ directly to the scaled principal eigenvector $2\tilde{r} \vv_1$. $\lambda$ is a scalar parameter to balance the two properties.

\paragraph{Setting the scalar parameter $\lambda$.} The heuristic has a fundamental unit matching problem and setting $\lambda = 1$ will break the filter. 

The first term (the Midpoint Penalty) divides a squared distance by a variance ($\mSigma_{ii}$), making it a dimensionless quantity. If a candidate midpoint $\vb$ deviates from the mean $\vmu$ by exactly one standard deviation in every dimension, that sum will perfectly equal the dimension $d$. 

The second term (the Directional Penalty) is a raw, unscaled squared Euclidean distance. Because it is not divided by a variance, it will output massive, arbitrary values that completely drown out the Midpoint Penalty.

We need to calibrate $\lambda$ to force both terms to be on the same scale. Because the posterior distribution in our experiments is approximated using MCMC sampling, we can extract the natural scale of the uncertainty essentially for free. The sum of all the marginal variances is the trace of the covariance matrix,
\begin{equation*}
    \Tr(\mSigma) = \sum_{i=1}^{d} \mSigma_{ii}.
\end{equation*}
The trace represents the total expected squared Euclidean distance of a random draw from the posterior. Therefore, to convert the raw Euclidean distance of the second term into a dimensionless scale that matches the first term, we divide by $\Tr(\Sigma)$ and multiply by $d$,
\begin{equation*}
    \lambda = \frac{\zeta \cdot d}{\Tr(\Sigma)}.
\end{equation*}
This automatic calibration does three critical things for the selection strategy.
\begin{itemize}
    \item \textbf{Dimensional Parity:} By multiplying by $d$ and dividing by $\Tr(\mSigma)$, it guarantees that a $1\sigma$ error in the difference vector generates the exact same numerical penalty ($d$) as a $1\sigma$ error in the midpoint vector. The two halves of the heuristic are now perfectly balanced.
    \item \textbf{Dynamic Tightening:} As the MCMC sampler utilizes more pairwise responses and converges, the posterior shrinks and $\Tr(\mSigma)$ will drop rapidly. Because $\Tr(\mSigma)$ is in the denominator, $\lambda$ will automatically grow larger. This dynamically tightens the algorithm's strictness regarding the direction of the query as it becomes more confident in its estimates.
    \item \textbf{Noise Model Accommodation:} Since we are using a heteroskedastic noise model where the variance scales heavily with query distances, enforcing the exact difference magnitude ($2\tilde{r} \vv_1$) might become too restrictive, starving the algorithm of valid candidate pairs. Thus, we use a lower value of $\zeta = 0.1$ so that the filter prioritizes finding the correct midpoint, and is more forgiving about the exact distance between the items.
\end{itemize}

\section{Experiment Details}
In this section, we provide the experimental details and highlight additional experimental results.

\subsection{Synthetic Dataset Experiments}  \label{sec:supp-synth_data_exp}
\subsubsection{Experimental setup}

The synthetic dataset is constructed by creating an embedding of $N$ items in $\R^D$ by drawing samples from $U[-4, 4]^D$ and a user point was drawn from $U[-1, 1]^D$. Responses were obtained for the paired comparison queries selected according to the corresponding AL method and the estimate of the user point was updated. The experiments were conducted for $5$ trials and for every trial, a new user point was sampled. Results for the user point estimation error were averaged over trials for $100$ queries.

\subsubsection{$k$-NN Approx method}

Considering $k$ nearest neighbors of the points gives us access to all the surrounding points that may approximate the optimal pair of points' geometry more closely than the immediate neighbors. In this method, we perform a $k$-NN search for the individual points and then evaluate each of the combination of points for optimality. \\
\\
Let $\mathbb{P} ~=~k$-NN$(\tilde{\vp})$ and $\mathbb{Q} ~=~k$-NN$(\tilde{\vq})$. The points are selected as below:
\begin{equation*}
    (\vp, \vq) = \underset{\vp^* \in \mathbb{P}, ~\vq^* \in \mathbb{Q}}{\argmax} ~I(W;Y | (\vp^*, \vq^*)).
\end{equation*}
This method again uses a filter and refine strategy. The nearest points are computed using Euclidean distances and the top $\beta$ fraction of the pairs undergo MI evaluation to select the best approximation.

\subsubsection{Additional Results}

\paragraph{Synthesis comparison with discrete methods}

\begin{figure}[htbp!]
    \hfill
    \begin{minipage}{0.24\linewidth}
        \centering
        \includegraphics[width=\textwidth]{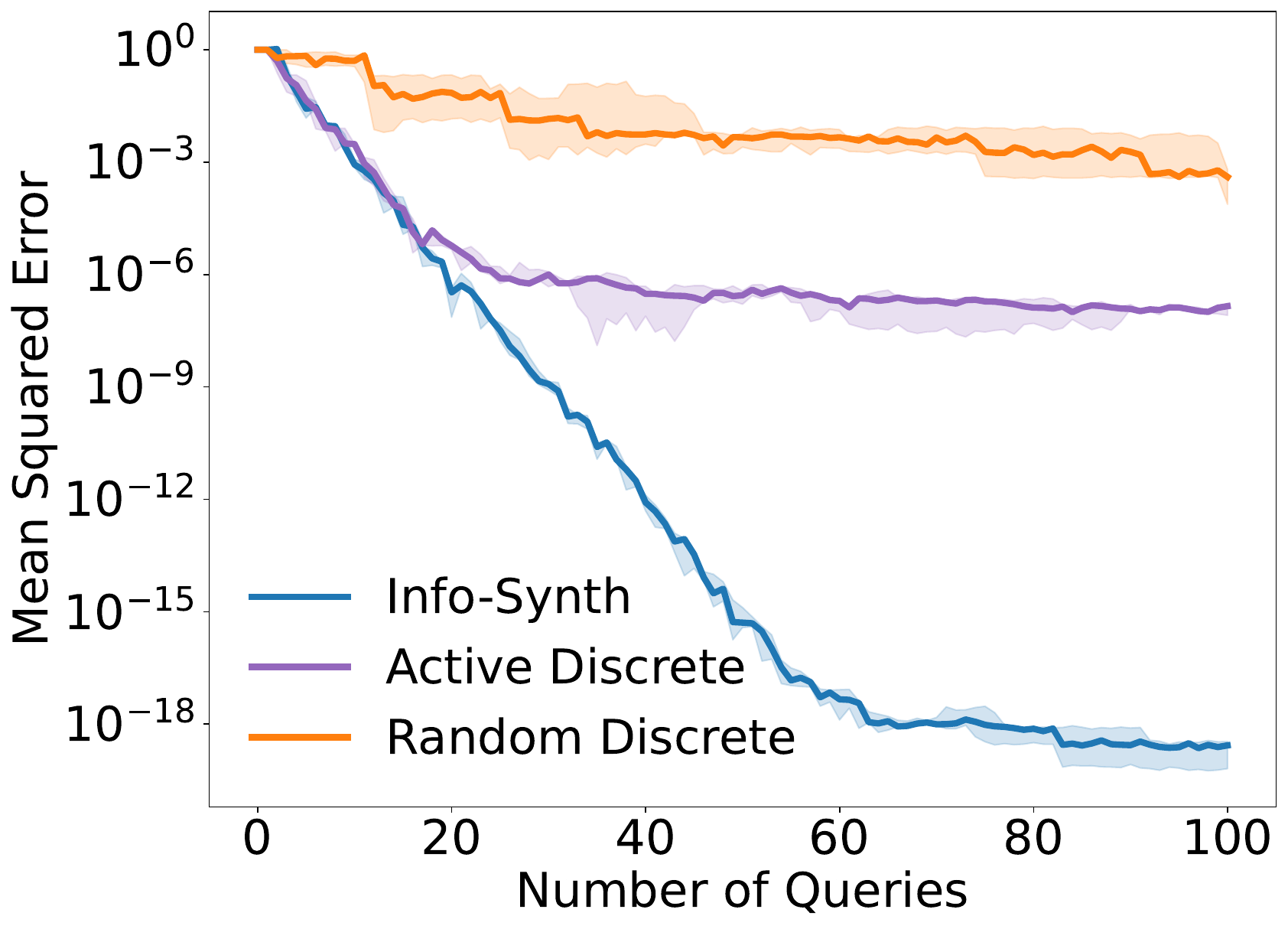}
    \end{minipage}
    \hfill
    \begin{minipage}{0.24\linewidth}
        \centering
        \includegraphics[width=\textwidth]{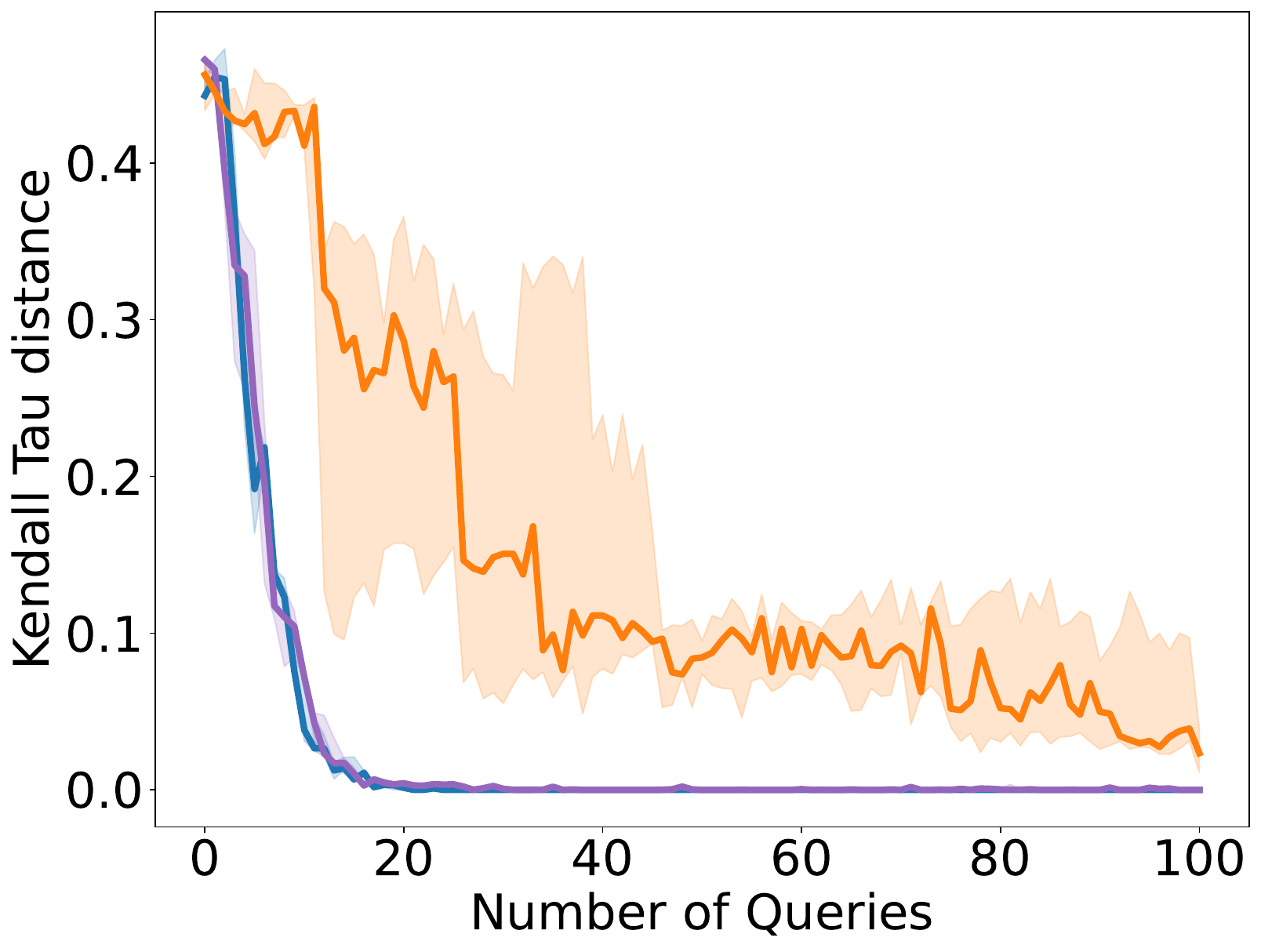}
    \end{minipage}
    \hfill
    \begin{minipage}{0.24\linewidth}
        \centering
        \includegraphics[width=\textwidth]{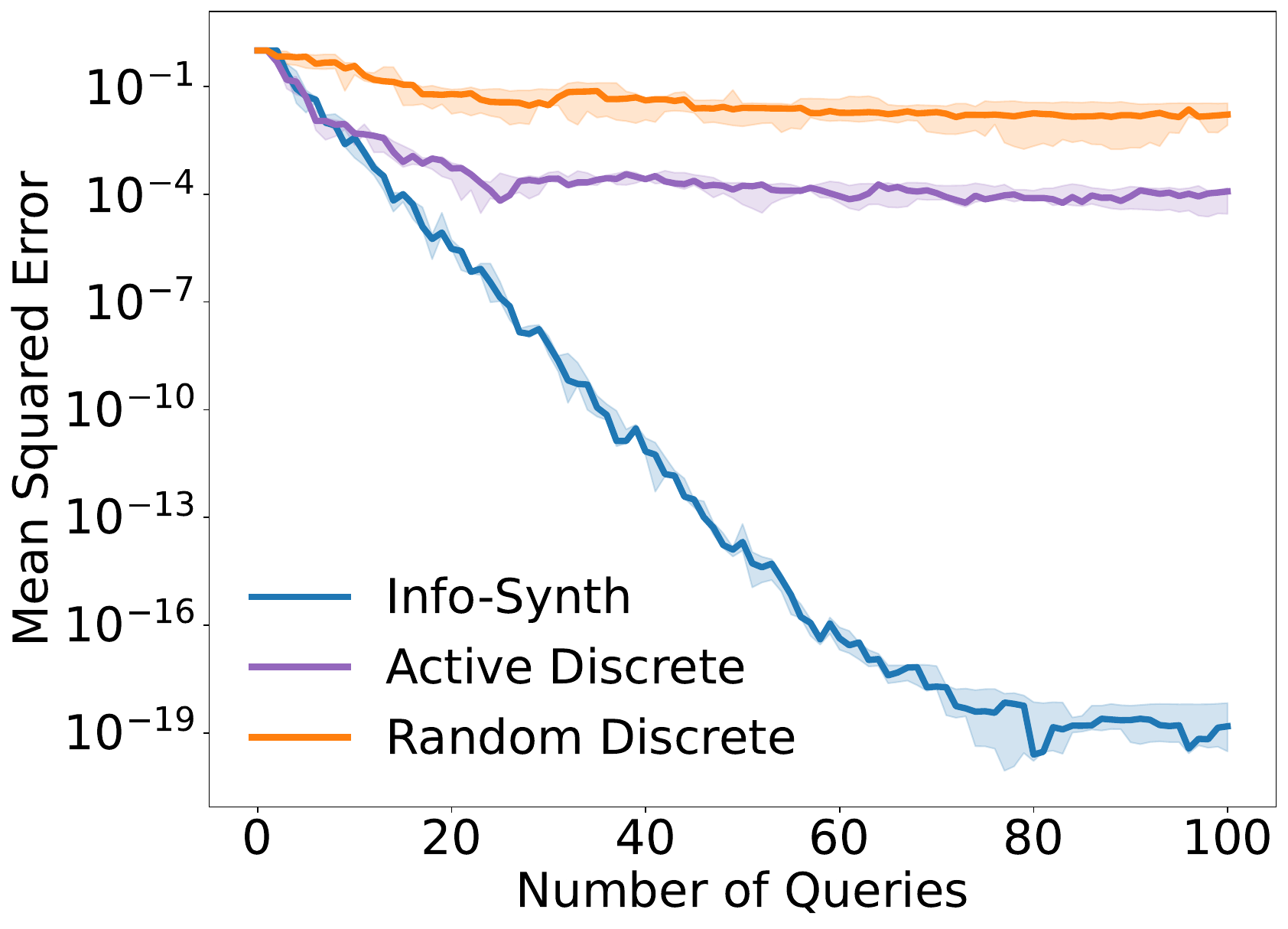}
    \end{minipage}
    \hfill
    \begin{minipage}{0.24\linewidth}
        \centering
        \includegraphics[width=\textwidth]{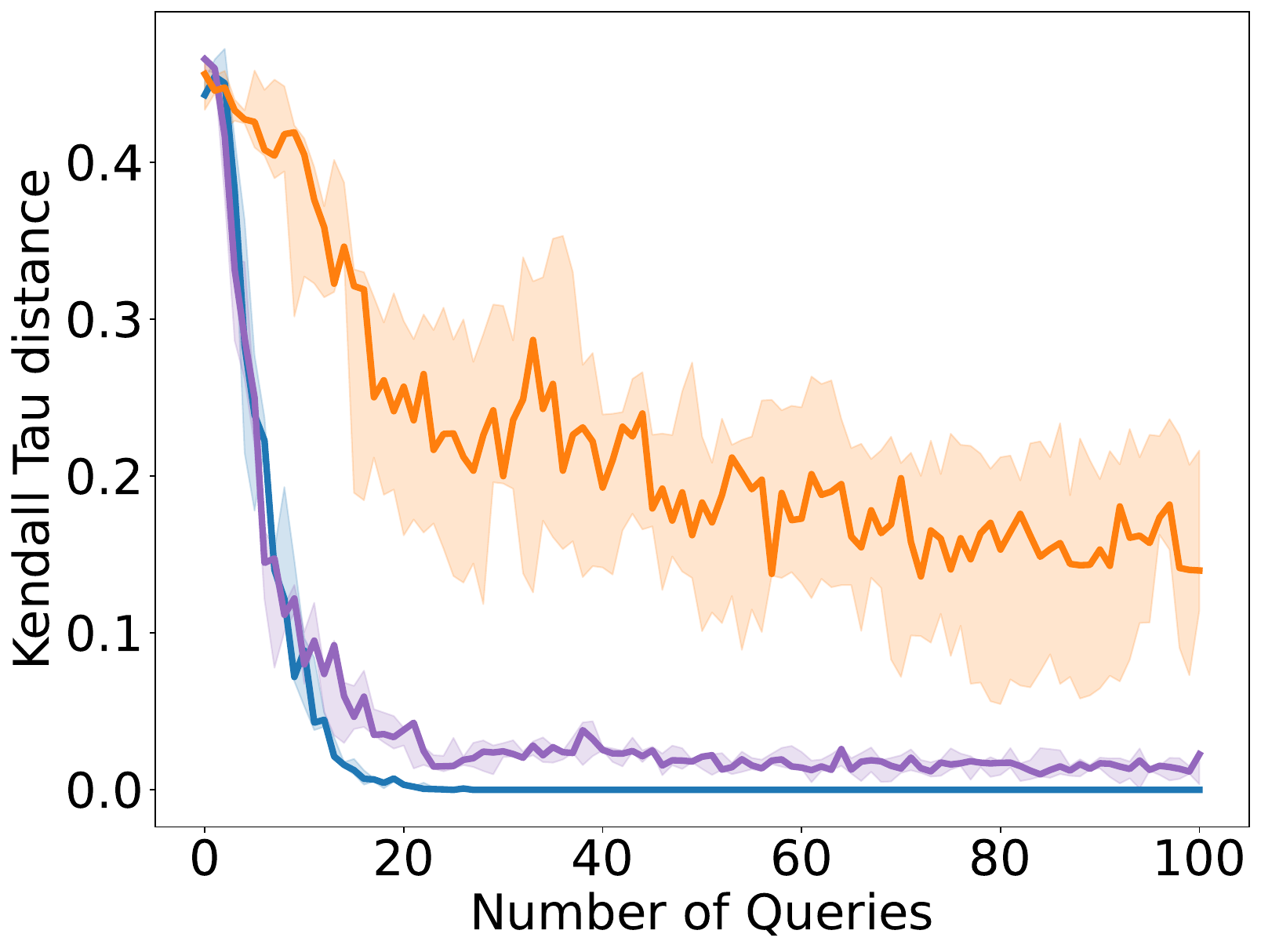}
    \end{minipage}
    \hfill
    \vspace{-2mm}
      \caption{\footnotesize Results for $D=2$ and $N=500$ for $\sigma_0 = 0.001$ (left) and $\sigma_0 = 0.1$ (right).}
      \vspace{-1\baselineskip}
\end{figure}

\begin{figure}[htbp!]
    \hfill
    \begin{minipage}{0.24\linewidth}
        \centering
        \includegraphics[width=\textwidth]{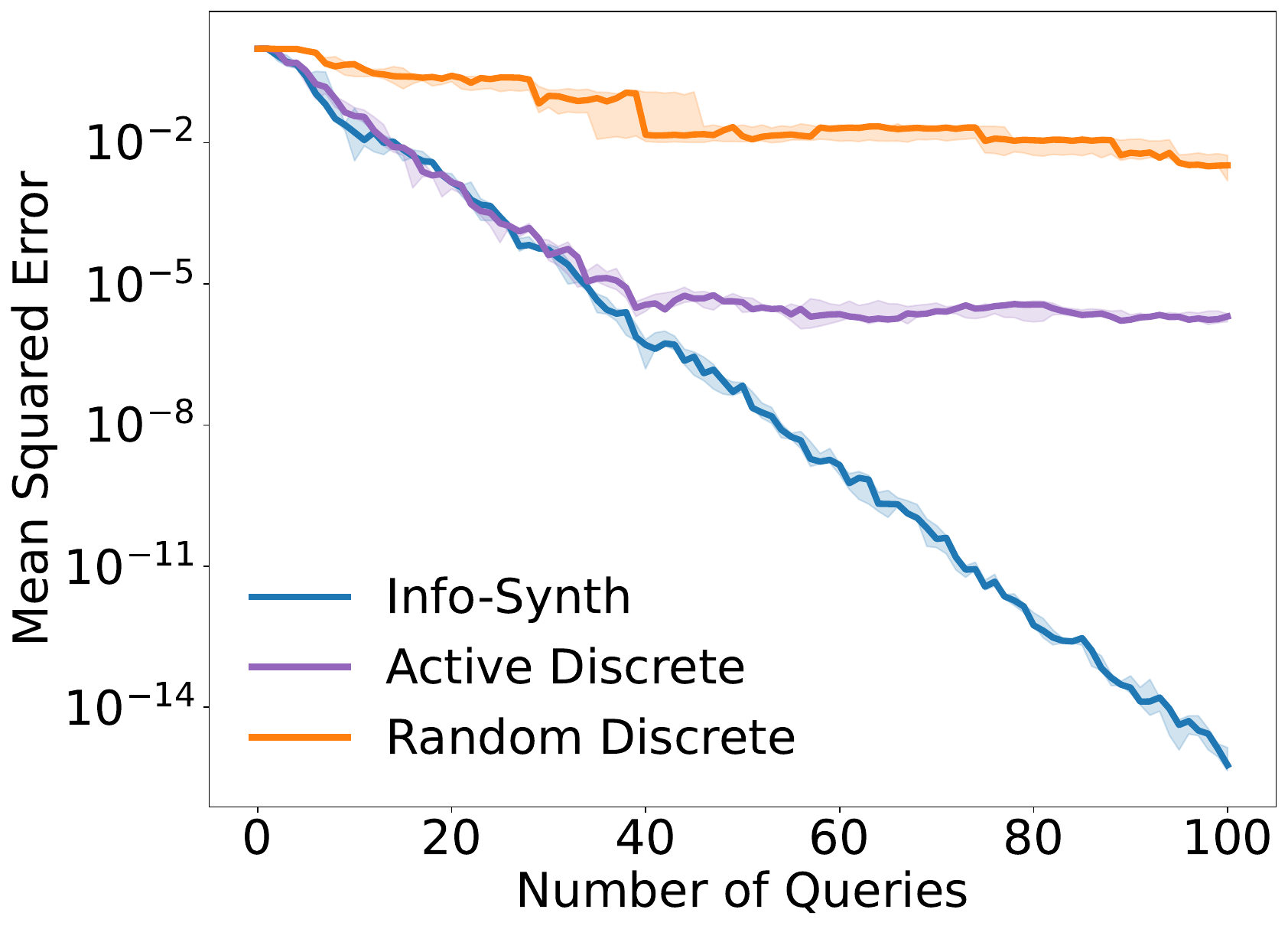}
    \end{minipage}
    \hfill
    \begin{minipage}{0.24\linewidth}
        \centering
        \includegraphics[width=\textwidth]{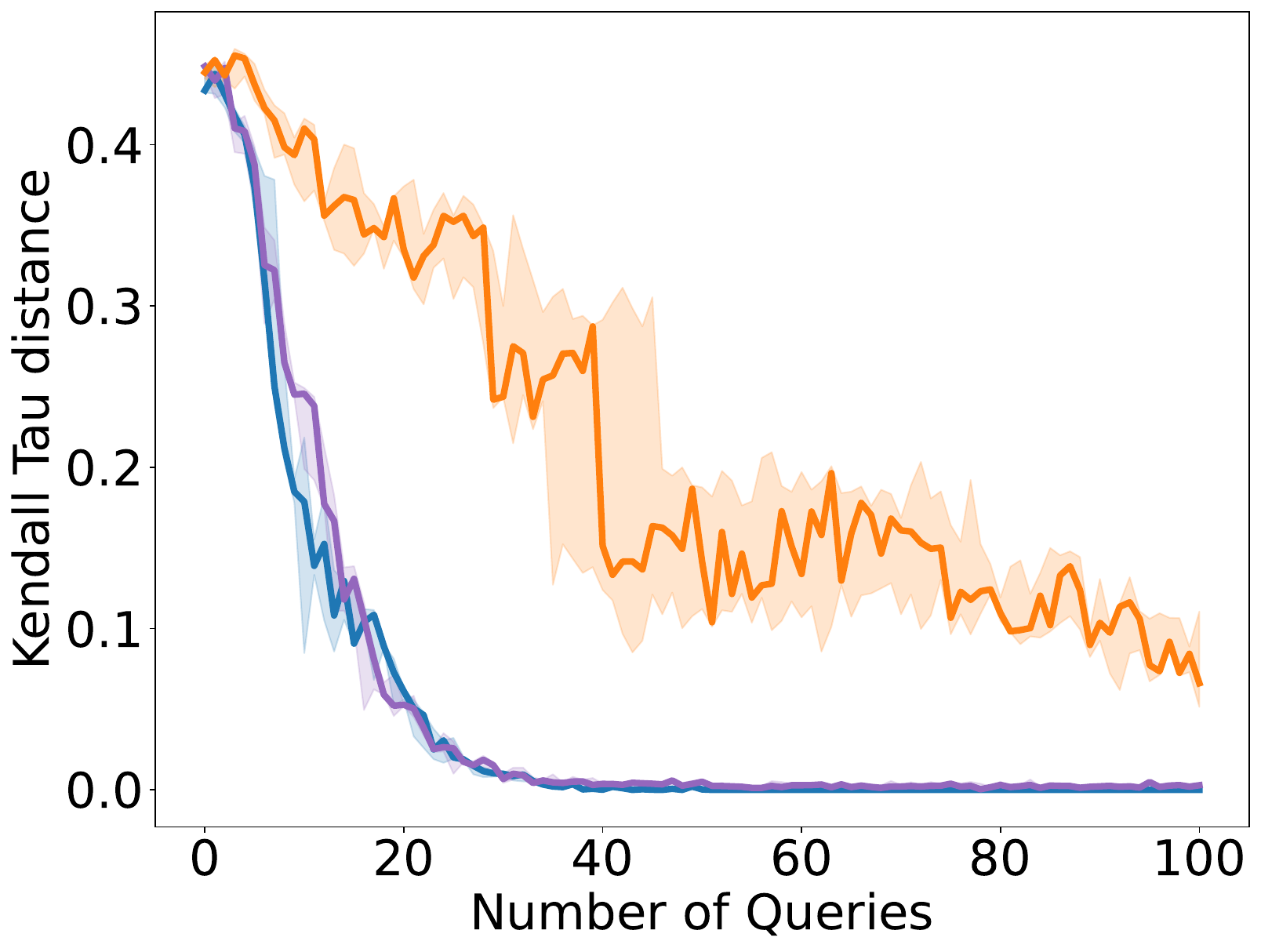}
    \end{minipage}
    \hfill
    \begin{minipage}{0.24\linewidth}
        \centering
        \includegraphics[width=\textwidth]{figures/synth_data_exp/synthesis_vs_discrete/mse_d4_n500_s0.1_r1.pdf}
    \end{minipage}
    \hfill
    \begin{minipage}{0.24\linewidth}
        \centering
        \includegraphics[width=\textwidth]{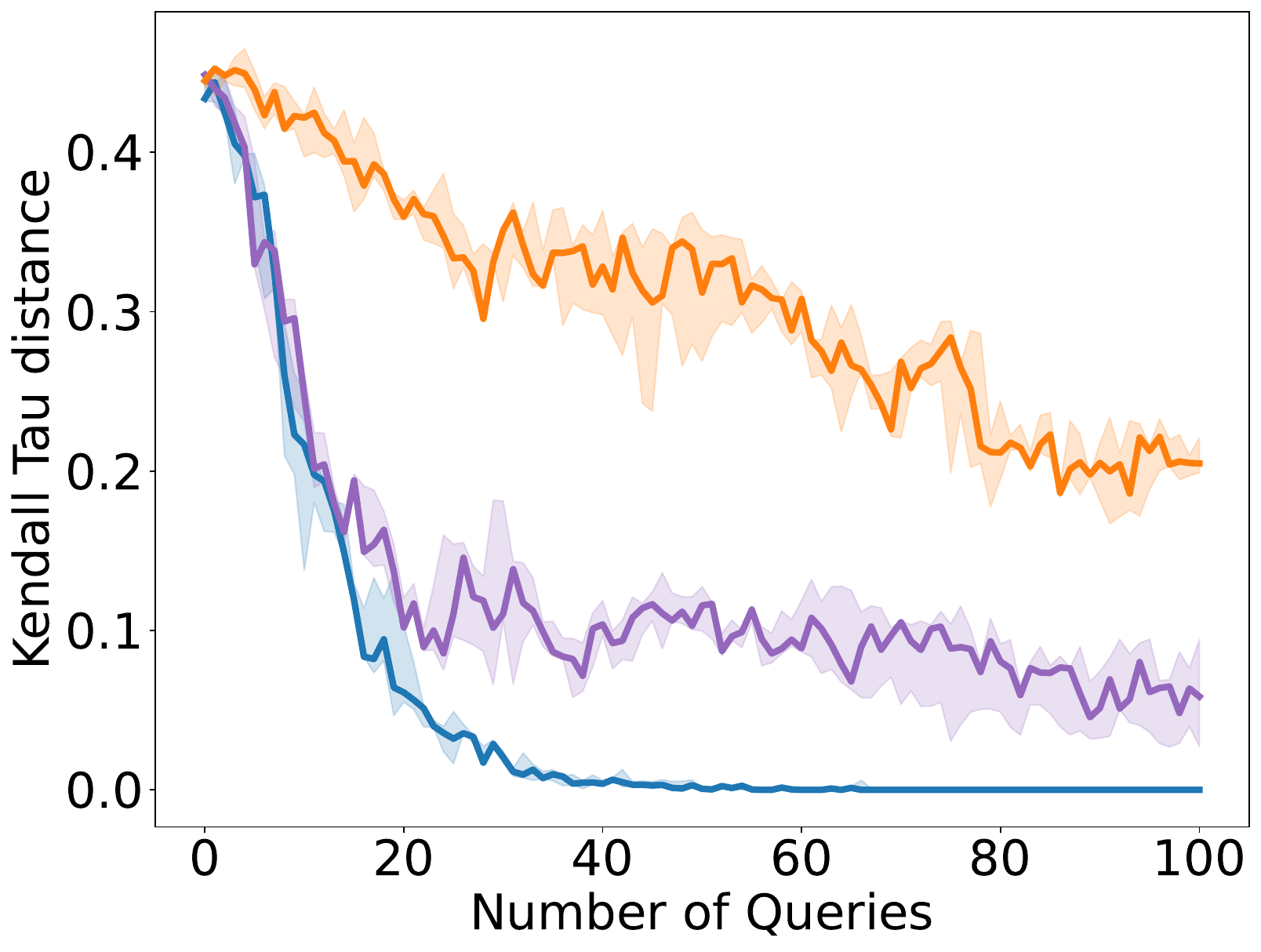}
    \end{minipage}
    \hfill
    \vspace{-2mm}
      \caption{\footnotesize Results for $D=4$ and $N=500$ for $\sigma_0 = 0.001$ (left) and $\sigma_0 = 0.1$ (right).}
\end{figure}

\paragraph{Discrete Comparison}

\begin{figure*}[htbp!]
    \hfill
    \begin{minipage}{0.31\linewidth}
        \centering
        \includegraphics[width=\textwidth]{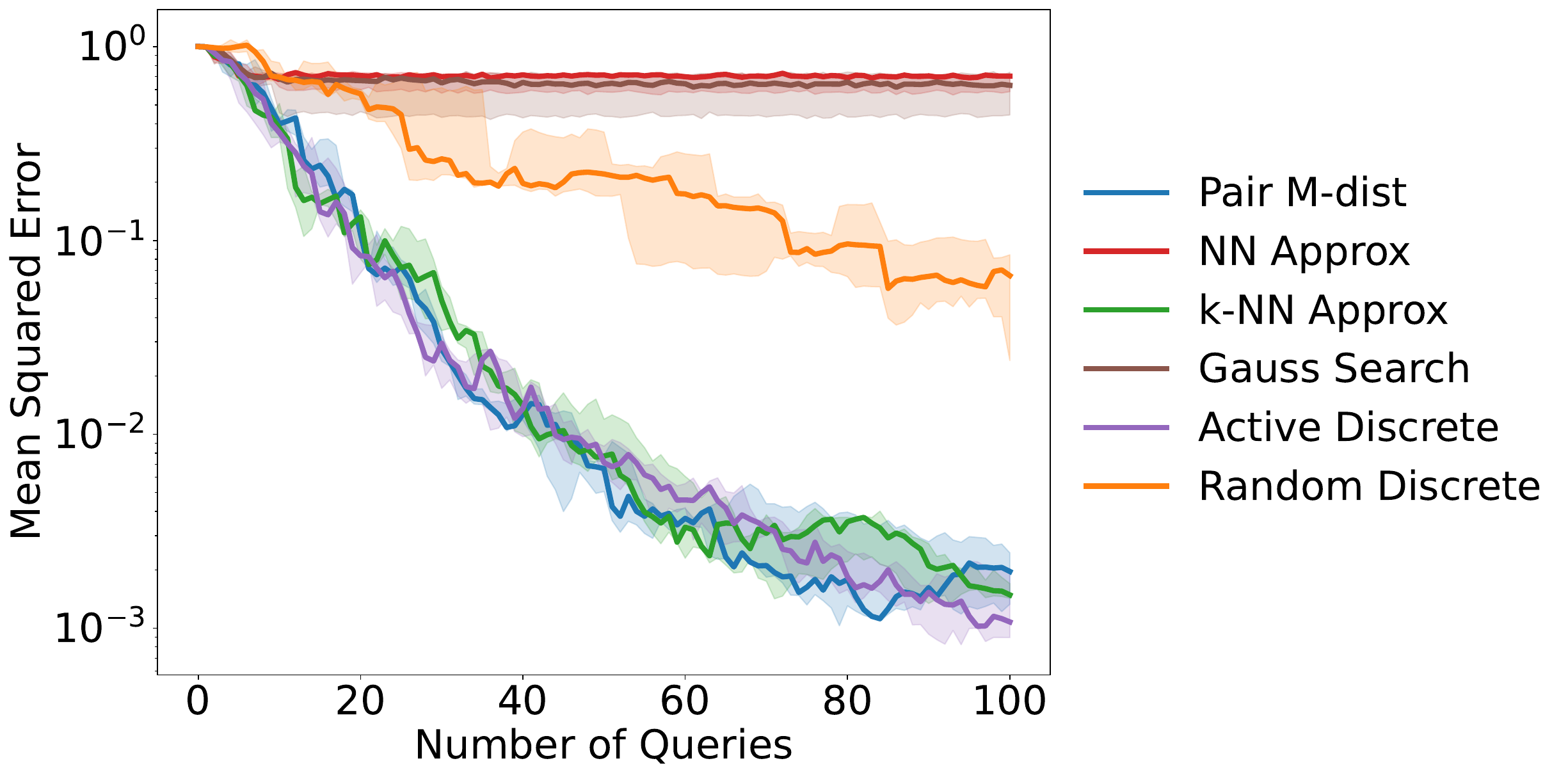}
    \end{minipage}
    \hfill
    \begin{minipage}{0.31\linewidth}
        \centering
        \includegraphics[width=0.8\textwidth]{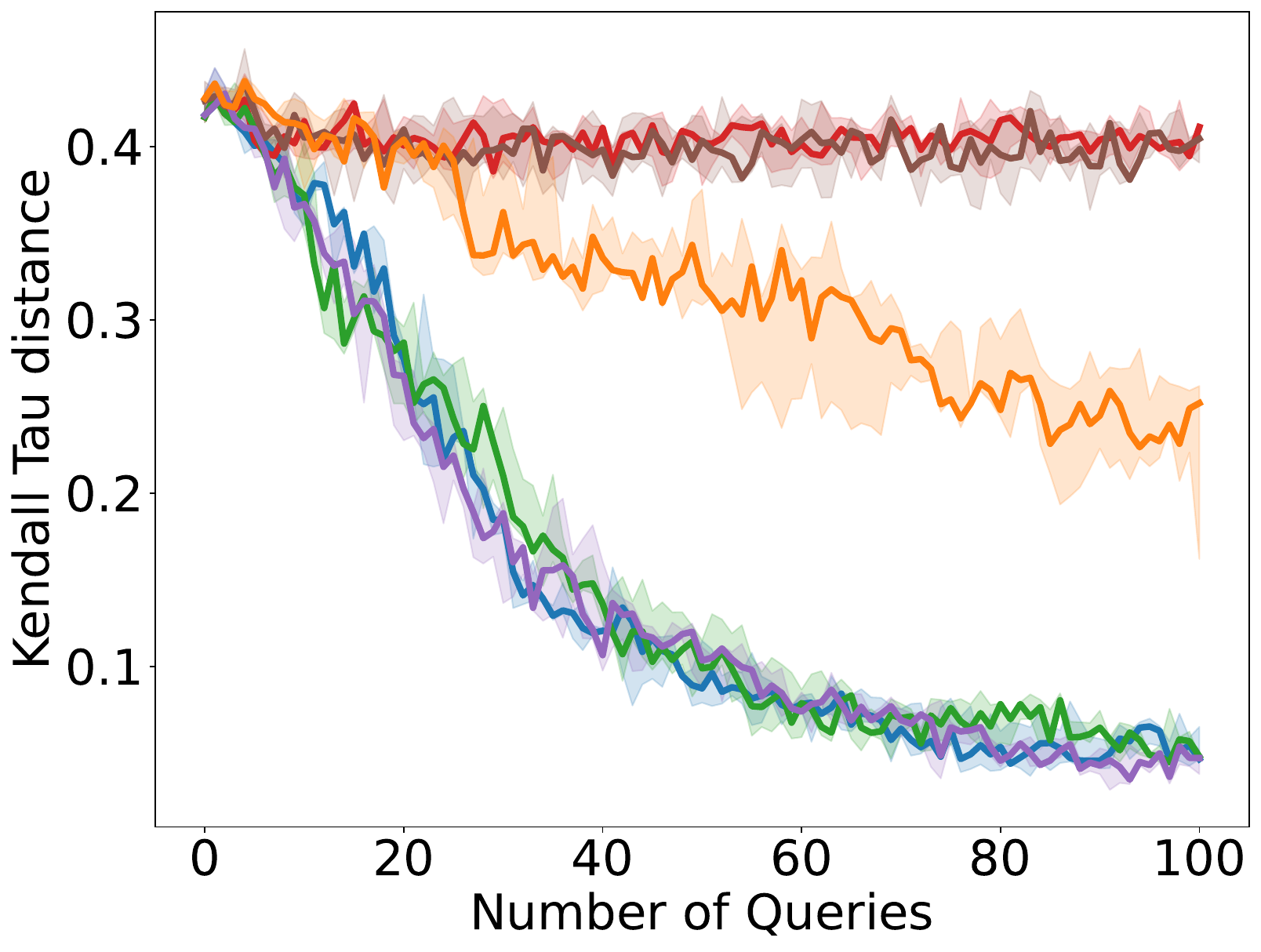}
    \end{minipage}
    \hfill
    \begin{minipage}{0.31\linewidth}
        \centering
        \includegraphics[width=0.8\textwidth]{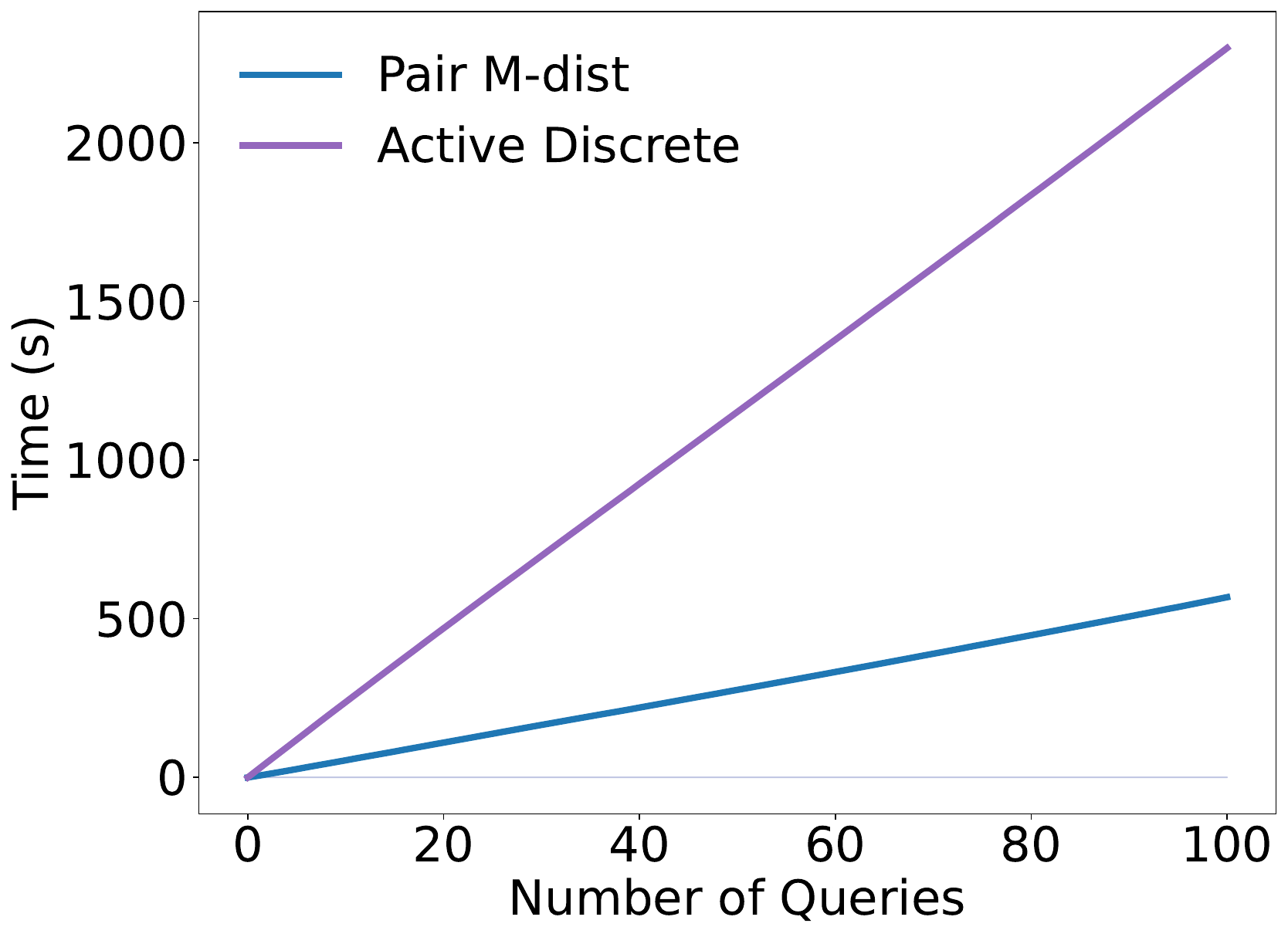}
    \end{minipage}
    \hfill
\caption{\footnotesize Results for $D=10$, $N=500$ and $\sigma_0 = 0.01$. }
\end{figure*}

\begin{figure*}[htbp!]
    \hfill
    \begin{minipage}{0.31\linewidth}
        \centering
        \includegraphics[width=\textwidth]{figures/synth_data_exp/discrete_comparison/mse_d10_n100_s0.01_r1.pdf}
    \end{minipage}
    \hfill
    \begin{minipage}{0.31\linewidth}
        \centering
        \includegraphics[width=0.8\textwidth]{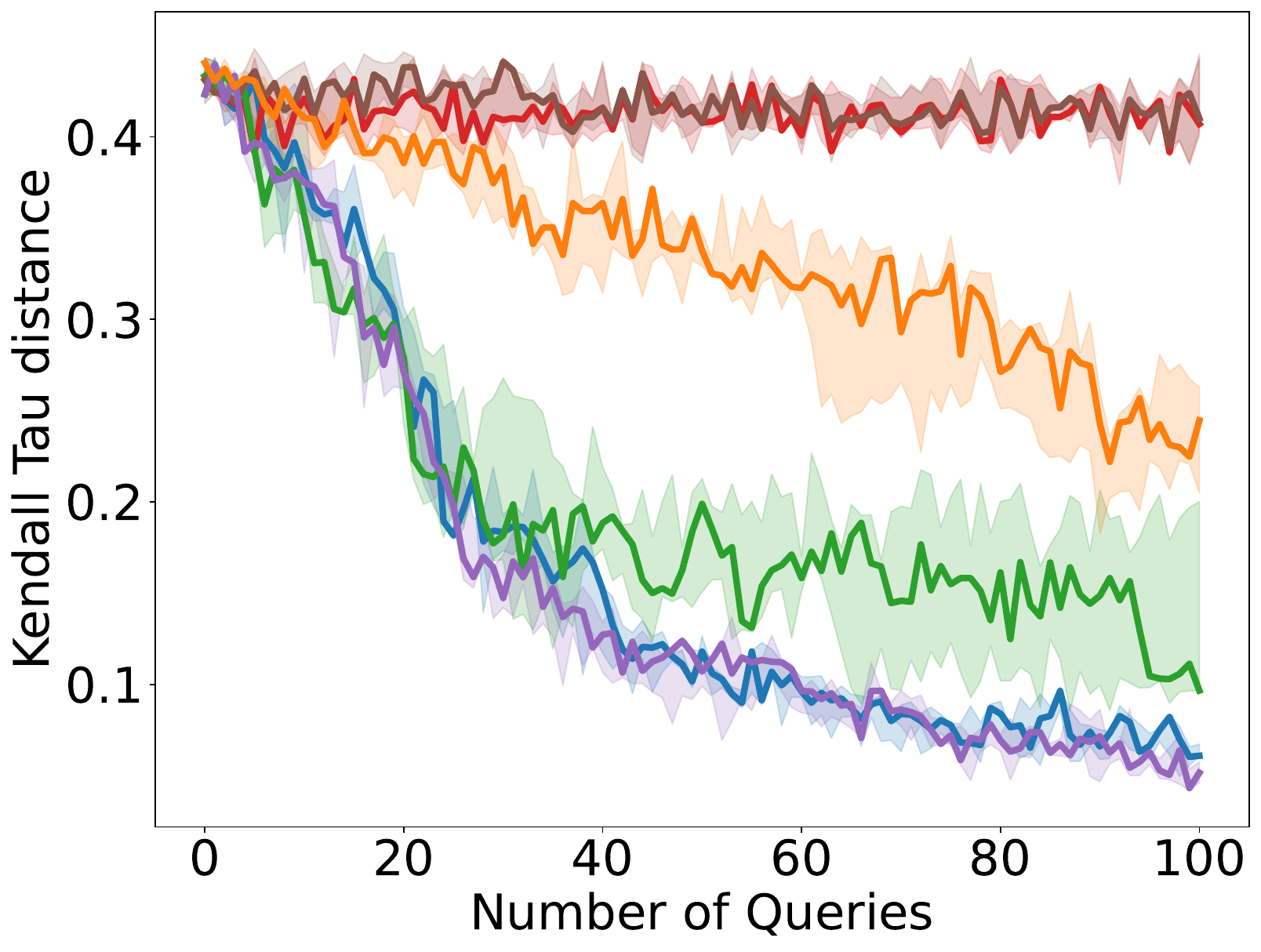}
    \end{minipage}
    \hfill
    \begin{minipage}{0.31\linewidth}
        \centering
        \includegraphics[width=0.8\textwidth]{figures/synth_data_exp/discrete_comparison/time_d10_n100_s0.01_r1.pdf}
    \end{minipage}
    \hfill
\caption{\footnotesize Results for $D=10$, $N=100$ and $\sigma_0 = 0.01$. }
\end{figure*}

\subsection{Reddit Summary Dataset Experiments} \label{sec:supp-reddit_summary_exp}
\subsubsection{Experimental setup}

The chosen user for results in  Figure \ref{fig:reddit_summary_expt} had $1874$ queries in total. For the users available in this dataset, this user had a relatively larger number of paired query responses. The original embedding size for these representations was $512$. To speed up these active learning experiments, particularly for the \emph{Active Discrete} baselines, we used PCA to project embedding dimensions to $128$.  

\subsubsection{Additional Results}

Here, we discuss the results for two additional users in Figure \ref{fig:reddit_summary_expt_user_2_and_3}. The results again show how our method can match the active learning performance of the \emph{Active Discrete} method with lesser computation cost. Decreasing $\gamma$ allows our method to aggressively filter queries for faster per-query computation, seamlessly matching the performance of \textit{Active Discrete} over a slightly longer sequence of queries. The total number of queries available for these two users are $1010$ and $837$ respectively. 

\begin{figure}[!htbp]
    \hfill
    \begin{minipage}{0.24\linewidth}
        \centering
        \includegraphics[width=\textwidth]{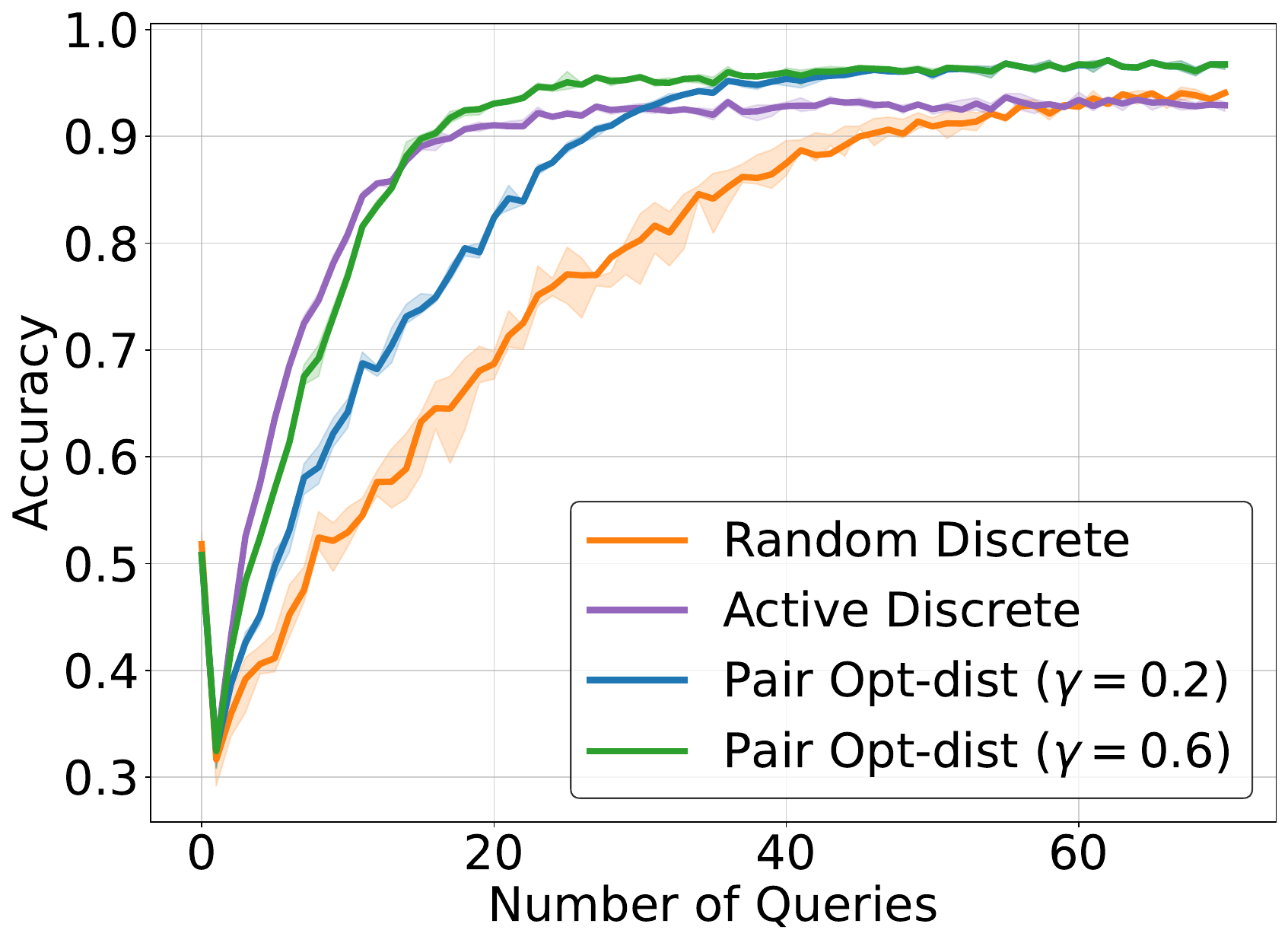}
          \subcaption{\footnotesize User 2}
          \label{fig:reddit-performance_accuracy_comparison_sigma_0p1_user2}
    \end{minipage}
    \hfill
    \begin{minipage}{0.24\linewidth}
        \centering
        \includegraphics[width=\textwidth]{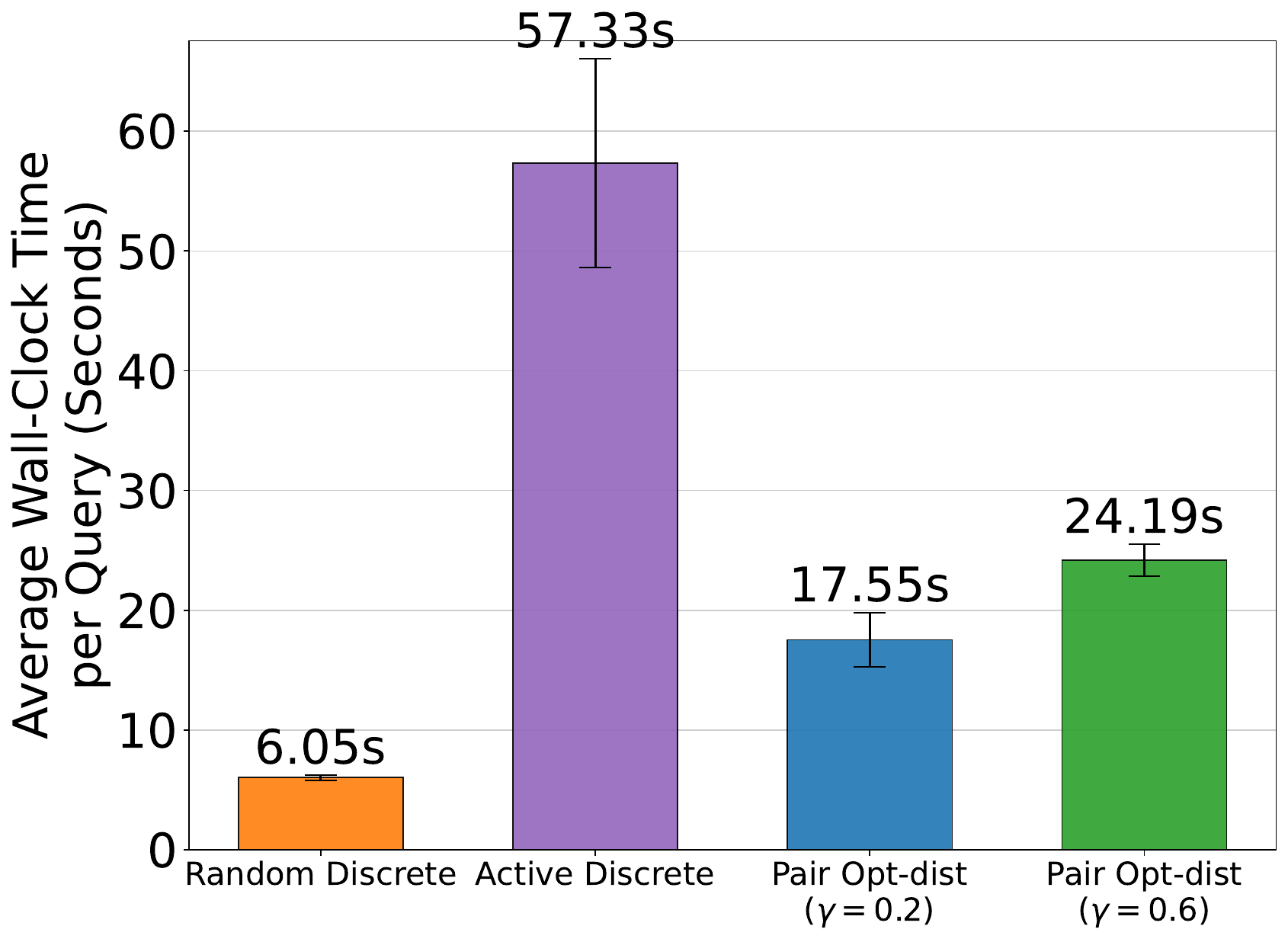}
      \subcaption{\footnotesize User 2}
      \label{fig:reddit-performance_time_comparison_sigma0p1_user2}
    \end{minipage}
    \begin{minipage}{0.24\linewidth}
        \centering
        \includegraphics[width=\textwidth]{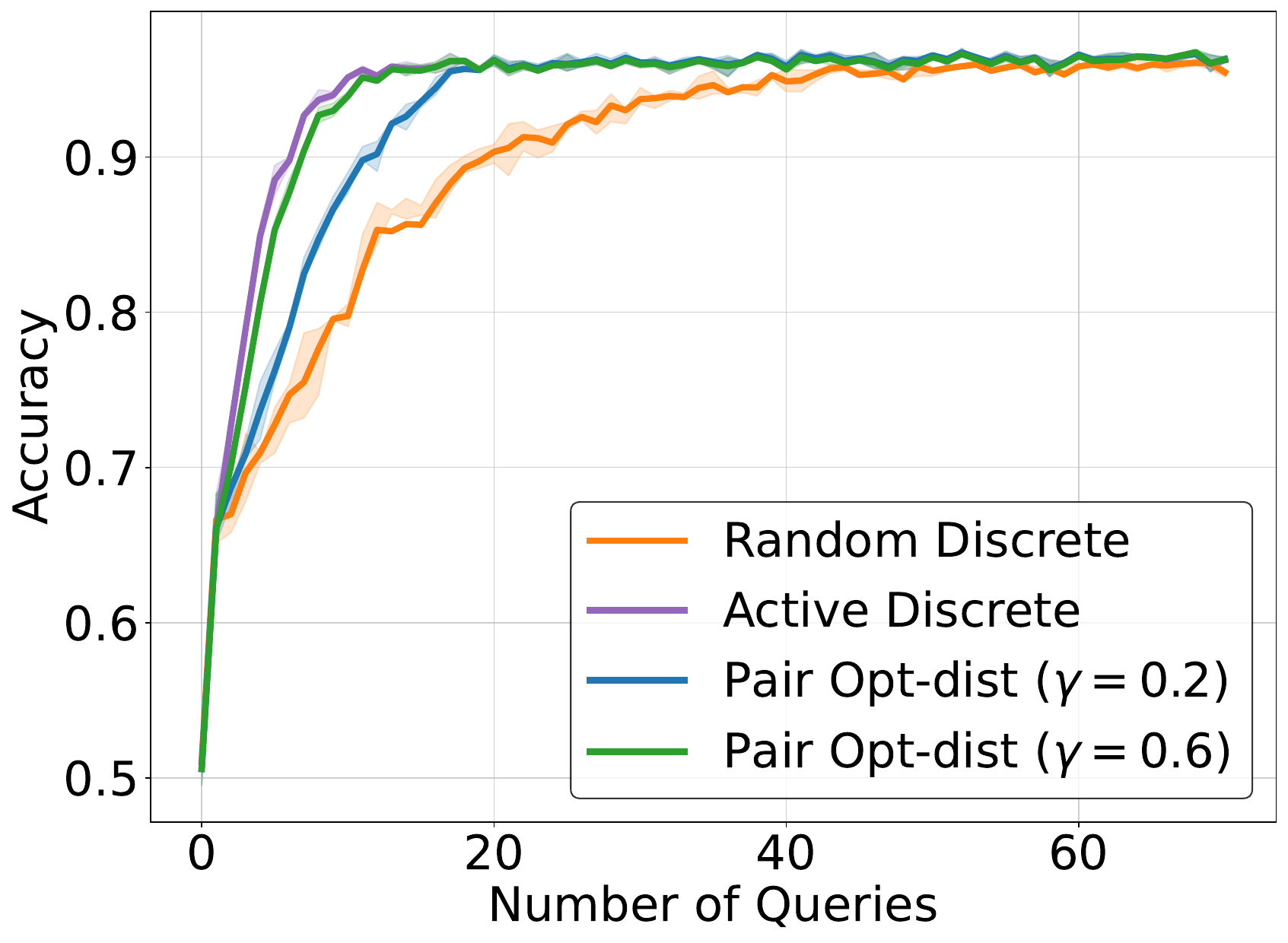}
      \subcaption{\footnotesize User 3}
      \label{fig:reddit-performance_accuracy_comparison_sigma0p3_user3}
    \end{minipage}
    \begin{minipage}{0.24\linewidth}
        \centering
        \includegraphics[width=\textwidth]{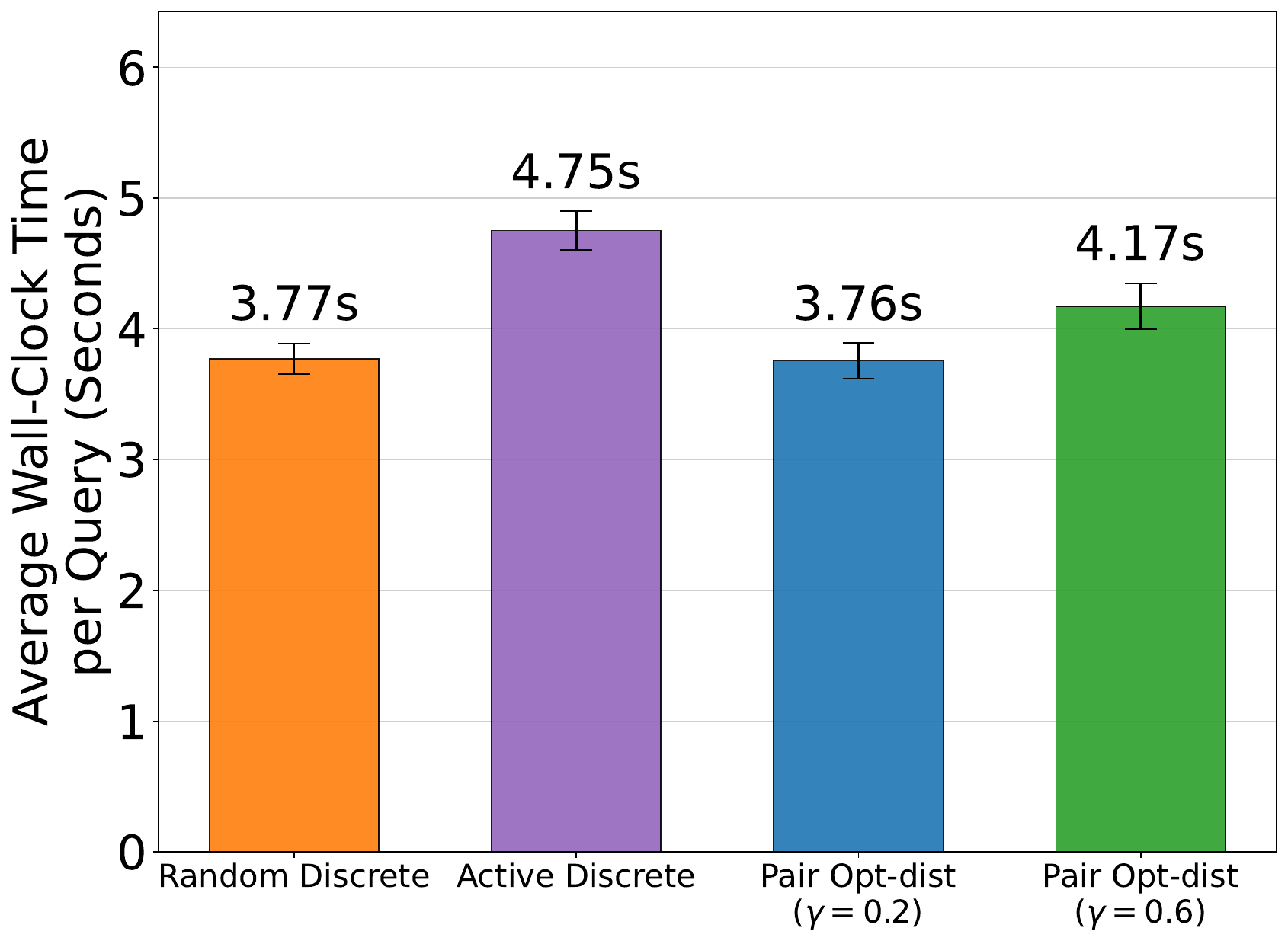}
      \subcaption{\footnotesize User 3}
      \label{fig:reddit-performance_time_comparison_sigma0p3_user3}
    \end{minipage}
    \hfill
    \vspace{-2mm}
      \caption{\footnotesize  Performance analysis on the Reddit Summary TL;DR dataset for two additional users. Our proposed approximation method, \textit{Pair Opt-dist} is shown for two different filtering levels of $\gamma=0.6$ (green)  and $\gamma=0.2$ (blue). Here $\gamma$ represents total fraction of queries used for selection. (a) and (b) show the accuracy  and average query selection time at  $\sigma=0.1$ for user $2$ while (c) and (d) show these results for user $3$.}
      \label{fig:reddit_summary_expt_user_2_and_3}
      \vspace{-1\baselineskip}
\end{figure}

\subsection{Gain Tuning} \label{sec:supp-gain_tuning_exp}
\subsubsection{Controller Design and Equations}

The experiments involve a trajectory-tracking controller applied to a unicycle model with three parameters $k_x, k_y, k_\theta$. The unicycle system is defined by its state and the hyperparameters governing its response. Let $\mathbf{q}(t) = [x, y, \theta]^\top \in \mathbb{R}^2 \times \mathbb{S}^1$ denote the state of the unicycle at time $t$ in the global frame. The dynamics are driven by a control input vector $\mathbf{u} = [v, \omega]^\top$, where $v$ (linear velocity) and $\omega$ (angular velocity) are scalars defined in the robot's local frame.

We define the global position error as $\Delta x = x_{des} - x$ and $\Delta y = y_{des} - y$. These are transformed into the robot's local longitudinal and lateral errors as follows

\begin{itemize}
    \item \textbf{Longitudinal Error ($e_x$):} Represents displacement along the robot's current heading.
    \begin{equation*}
        e_x = \cos(\theta) \Delta x + \sin(\theta) \Delta y
    \end{equation*}
    \item \textbf{Lateral Error ($e_y$):} Represents perpendicular displacement from the robot's current heading.
    \begin{equation*}
        e_y = -\sin(\theta) \Delta x + \cos(\theta) \Delta y
    \end{equation*}
    \item \textbf{Heading Error ($e_{\theta}$):} Represents the angular misalignment with the trajectory tangent.
    \begin{equation*}
        e_{\theta} = (\theta_{des} - \theta) \pmod{2\pi}
    \end{equation*}
\end{itemize}
The control law translates these local errors into velocity and angular velocity commands to ensure the robot stays on the intended path. The velocity command, $v_{cmd}$, is calculated by
\begin{equation*}
    v_{cmd} = v_{des} \cos(e_{\theta}) + k_x e_x
\end{equation*}
where $v_{des}$ represents the desired trajectory velocity and $k_x$ is the longitudinal gain. Similarly, the angular velocity command, $\omega_{cmd}$, is defined as
\begin{equation*}
    \omega_{cmd} = \omega_{des} + k_y v_{des} e_y + k_{\theta} \sin(e_{\theta})
\end{equation*}
where $\omega_{des}$ is the desired curvature and $k_y, k_{\theta}$ are the lateral and heading gains respectively \cite{kanayama1990stable}.

\subsubsection{B\'{e}zier Curve Trajectories}

The reference paths are generated using B\'{e}zier curves, providing smooth, continuous trajectories defined by a set of control points. A B\'{e}zier curve is defined by $n+1$ control points, and the robot's target position at any phase $t \in [0, 1]$ is computed using the De Casteljau algorithm \cite{farouki2012bernstein}. Beyond simple position, the evaluation logic calculates the first derivative for velocity and the second derivative for acceleration. These derivatives are essential for determining the desired velocity ($v_{des}$) and curvature ($\omega_{des}$) required for the feedforward component of the controller.

\subsubsection{Experimental setup}

The objective of the experiments is for the unicycle to smoothly track the trajectory. As with all active preference learning, we seek to autonomously identify the optimal control parameters that accomplish this objective in as few experiments as possible, relying only on comparative feedback (``A is better than B") rather than numerical gradient values.

We simulate a human-in-the-loop tuning process where the learner maintains a probabilistic belief about where the optimal parameters lie in the log-parameter space. The active learning framework operates within a logarithmic parameter space to improve the efficiency and stability of the estimation process. By representing the parameters as $W = \log(H)$, where $H$ is the parameter vector in linear space, the estimator can more effectively handle gain values spanning several orders of magnitude. This ensures that multiplicative changes in the physical gains are treated as additive changes in the search space, which significantly benefits the MCMC sampling process conducted via the Stan probabilistic programming language \cite{carpenter2017stan}. Furthermore, log-space parameterization often results in a more Gaussian-like posterior distribution, allowing the optimizer to identify informative query pairs more reliably.

We use different acquisition functions to select the next query pair - two different sets of gains ($K_A, K_B$) that will yield the most information. For each query pair, the system simulates a unicycle robot tracking a specific path starting from various initial states. Instead of a real human, we use an automated oracle that evaluates the simulated trajectories based on the trajectory tracking error (computed using Hausdorff distance). To simulate real human inconsistency, the oracle uses a sigmoid function to generate a preference probability: if Trajectory A is much better than B, the probability of picking A is high (near 1.0); if they are similar, it approaches a coin flip (0.5). To ensure the learned gains are robust, the oracle averages the error across multiple trajectories or multiple initial states before deciding which set of gains is superior.

\subsection{Trajectory Scenarios and Initial Conditions}
The performance of the tracking controller and the convergence of the active learning estimator are evaluated across several trajectory scenarios. These paths are generated using B\'{e}zier curves, defined by a set of control points $\mathbf{P} = \{P_0, P_1, \dots, P_n\}$.

\begin{figure}[htbp]
     \centering
     \begin{subfigure}[b]{0.24\textwidth}
         \centering
         \includegraphics[width=\textwidth]{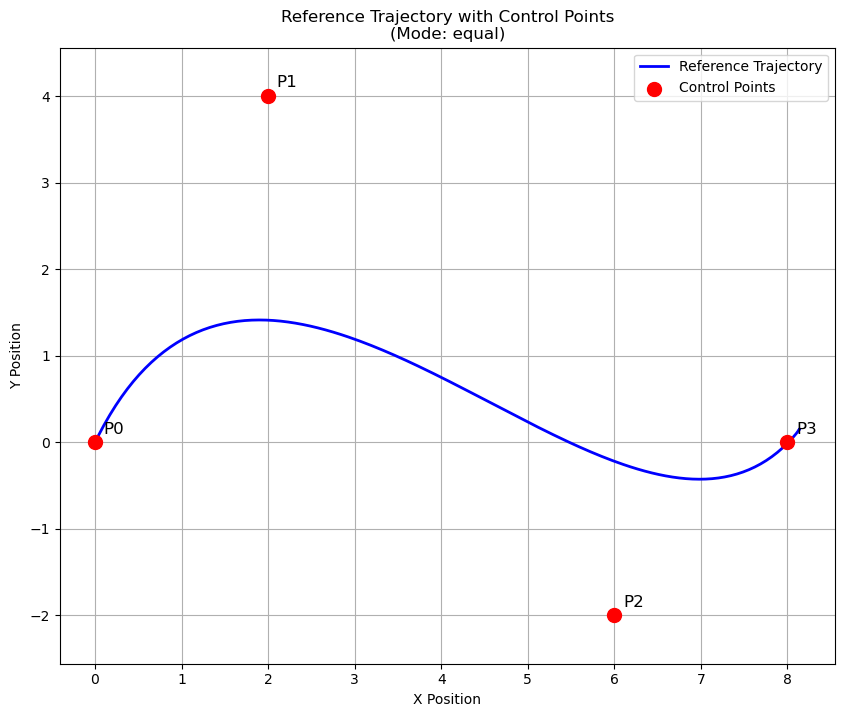}
         \caption{Standard Path}
     \end{subfigure}
     \hfill
     \begin{subfigure}[b]{0.24\textwidth}
         \centering
         \includegraphics[width=\textwidth]{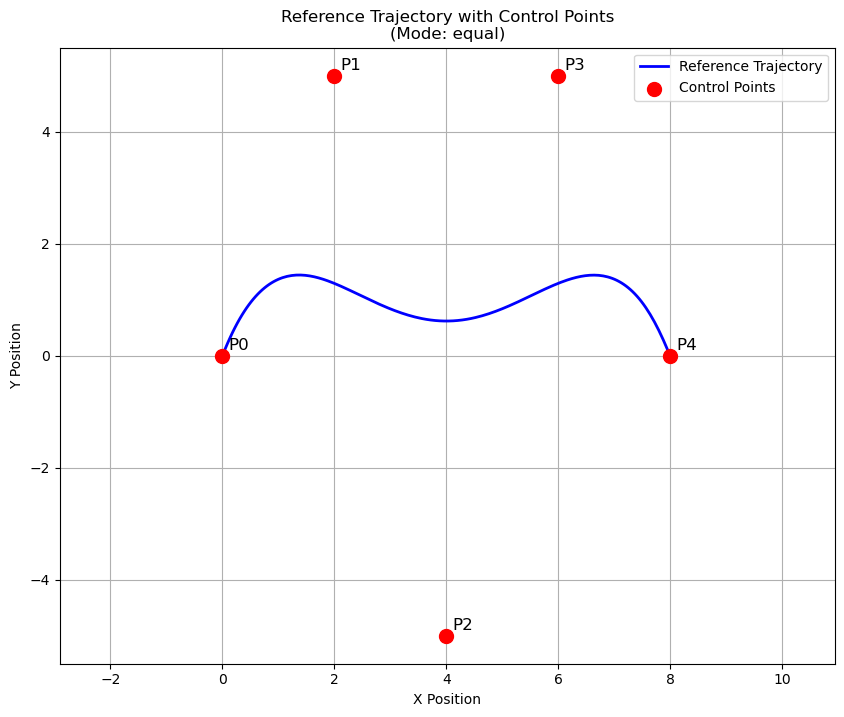}
         \caption{Hairpin Path}
     \end{subfigure}
     \hfill
     \begin{subfigure}[b]{0.24\textwidth}
         \centering
         \includegraphics[width=\textwidth]{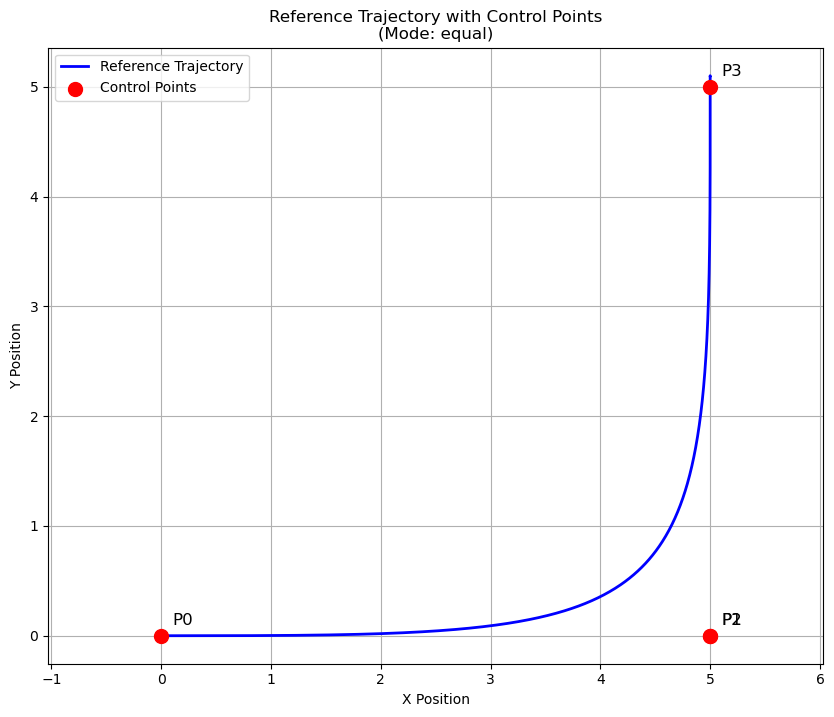}
         \caption{High Curvature}
     \end{subfigure}
     \hfill
     \begin{subfigure}[b]{0.24\textwidth}
         \centering
         \includegraphics[width=\textwidth]{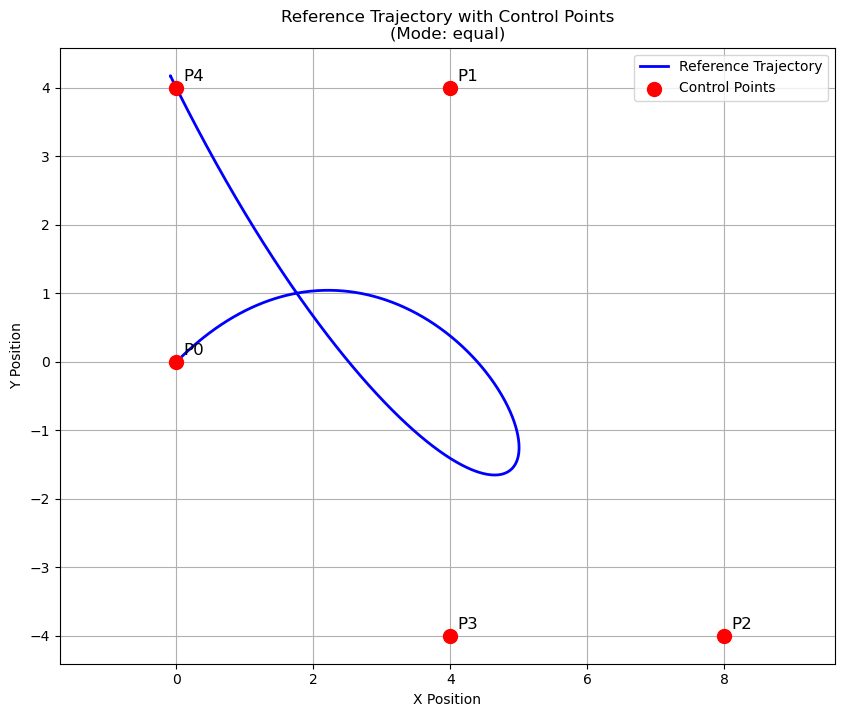}
         \caption{Figure-8 Path}
     \end{subfigure}
     \caption{\small Different trajectories considered in the experiments}
     \label{fig:diff_trajectories}
\end{figure}

\paragraph{Path Geometry.}
The following trajectories (illustrated in Fig.~\ref{fig:diff_trajectories}) were implemented to test various aspects of the unicycle dynamics:
\begin{itemize}
    \item \textbf{Trajectory 1 (Standard):} A basic Bézier curve defined by control points $\mathbf{P} = [[0,0], [2,4], [6,-2], [8,0]]$. This serves as a baseline for smooth, multi-directional tracking.
    \item \textbf{Trajectory 2 (Hairpin/Aggressive):} A high-amplitude path designed to force sharp, fast turns. With control points $\mathbf{P} = [[0, 0], [2, 5], [4, -5], [6, 5], [8, 0]]$, this scenario makes robot dynamics more observable as it challenges the lateral correction limits of the controller.
    \item \textbf{Trajectory 3 (High Curvature):} A sharp turn trajectory defined by $\mathbf{P} = [[0, 0], [5, 0], [5, 0], [5, 5]]$. This path tests the controller's ability to handle sudden changes in heading while maintaining velocity.
    \item \textbf{Trajectory 4 (Figure-8):} A complex shape defined by $\mathbf{P} = [[0, 0], [4, 4], [8, -4], [4, -4], [0, 4]]$. This trajectory provides a continuous loop with alternating lateral stresses, testing the symmetry and consistency of the learned gains.
\end{itemize}

\paragraph{Starting Scenarios and Initial Errors.}
The experiments also explore how the robot handles different initial states relative to these paths. These scenarios test the controller's recovery performance.
\begin{itemize}
    \item \textbf{Perfect Start:} The robot begins exactly at the first control point ($P_0$) with a heading perfectly aligned to the path tangent.
    \item \textbf{Lateral Error:} The robot starts 1 unit to the left of the path, forcing the lateral gain ($k_{lat}$) to immediately correct the side-slip.
    \item \textbf{Heading Error:} The robot begins at the correct position but with a 45-degree misalignment, testing the heading gain's ability to steer the robot back onto the trajectory.
\end{itemize}

\paragraph{Simulation Parameters.} 
Each trajectory is executed within a specific simulation window.
\begin{itemize}
    \item \textbf{Simulation Period ($T_{period}$):} Defines the time the robot has to complete the curve phase ($t \in [0, 1]$).
    \item \textbf{Total Final Time ($T_{final}$):} Determines the total duration the robot is allowed to run.
    \item \textbf{Time Step ($dt$):} Set consistently at 0.02 seconds for numerical integration stability.
\end{itemize}

\end{document}